\theoremstyle{plain}
\newtheorem{theorem}{Theorem}[section]
\newtheorem{lemma}[theorem]{Lemma}
\theoremstyle{definition}
\newtheorem{definition}[theorem]{Definition}
\newtheorem{assumption}[theorem]{Assumption}
\theoremstyle{remark}
\newcommand{\norm}[1]{\left\lVert#1\right\rVert}
\DeclareMathOperator*{\argmin}{arg\,min}
\title{Fair Bilevel Neural Network (FairBiNN): On Balancing fairness and accuracy via Stackelberg Equilibrium}
\author{%
  Mehdi Yazdani-Jahromi \\
  Department of Computer Science\\
  University of Central Florida\\
  Orlando, FL 32816 \\
  \texttt{yazdani@ucf.edu} \\
  \And
  Ali Khodabandeh Yalabadi \\
  Department of Industrial Engineering\\
  University of Central Florida\\
  Orlando, FL 32816 \\
  \texttt{yalabadi@ucf.edu} \\
  \AND
  AmirArsalan Rajabi \\
  Department of Computer Science\\ 
  University of Central Florida\\
  Orlando, FL 32816 \\
  \texttt{am954283@ucf.edu} \\
  \And
  Aida Tayebi \\
  Department of Industrial Engineering\\
  University of Central Florida\\
  Orlando, FL 32816 \\
  \texttt{ai530737@ucf.edu} \\
  \And
  Ivan Garibay \\
  Department of Industrial Engineering\\
  University of Central Florida\\
  Orlando, FL 32816 \\
  \texttt{igaribay@ucf.edu} \\
  \And
  Ozlem Ozmen Garibay \\
  Department of Industrial Engineering\\
  University of Central Florida\\
  Orlando, FL 32816 \\
  \texttt{ozlem@ucf.edu} \\
}
\begin{document}

\maketitle

\begin{abstract}
The persistent challenge of bias in machine learning models necessitates robust solutions to ensure parity and equal treatment across diverse groups, particularly in classification tasks. Current methods for mitigating bias often result in information loss and an inadequate balance between accuracy and fairness. To address this, we propose a novel methodology grounded in bilevel optimization principles. Our deep learning-based approach concurrently optimizes for both accuracy and fairness objectives, and under certain assumptions, achieving proven Pareto optimal solutions while mitigating bias in the trained model. Theoretical analysis indicates that the upper bound on the loss incurred by this method is less than or equal to the loss of the Lagrangian approach, which involves adding a regularization term to the loss function. We demonstrate the efficacy of our model primarily on tabular datasets such as UCI Adult and Heritage Health. When benchmarked against state-of-the-art fairness methods, our model exhibits superior performance, advancing fairness-aware machine learning solutions and bridging the accuracy-fairness gap. The implementation of FairBiNN is available on \href{https://github.com/yazdanimehdi/FairBiNN}{https://github.com/yazdanimehdi/FairBiNN}.
\end{abstract}


\section{Introduction}
Artificial intelligence and machine learning models have seen significant growth over the past decades, leading to their integration into various domains such as hiring pipelines, face recognition, financial services, healthcare, and criminal justice. This widespread adoption of algorithmic decision-making has raised concerns about algorithmic bias, which can result in discrimination and unfairness towards minority groups. Recently, the issue of fairness in artificial intelligence has garnered considerable attention from interdisciplinary research communities, addressing these ethical concerns \citep{pessach2020algorithmic}.\\
Several definitions of fairness have been proposed to tackle unwanted bias in machine learning techniques. These definitions generally fall into two categories: individual fairness and group fairness. Individual fairness ensures that similar individuals are treated similarly, with similarities determined by past information \citep{dwork2012fairness, yurochkin2019training}. Group fairness, on the other hand, measures statistical equality between different subgroups defined by sensitive characteristics such as race or gender \citep{zemel2013learning, louizos2015variational, hardt2016equality}. In this paper, we focus on group fairness, which we will refer to simply as fairness from this point onward.

Fairness approaches in machine learning are commonly categorized into three groups: (1) Pre-process approaches: These methods involve changing the data before training to improve fairness, such as reweighing labels or adjusting features to reduce distribution differences between privileged and unprivileged groups, making it harder for classifiers to differentiate them \citep{kamiran2012data,luong2011k,feldman2015certifying,tayebi2022unbiaseddti}. Generative adversarial networks were also utilized to produce unbiased datasets by altering the generator network's value function to balance accuracy and fairness \citep{rajabi2021tabfairgan}. (2) In-process approaches: These methods modify the algorithm during training, for instance by adding regularization terms to the objective function to ensure fairness. Examples include penalizing the mutual information between protected attributes and classifier predictions to allow a trade-off between fairness and accuracy \citep{kamishima2012fairness}, and adding constraints to satisfy a proxy for equalized odds \citep{zafar2017fairness,zafar2017fairness_a}. (3) Post-process approaches: These techniques adjust the outcomes after training, such as flipping some outcomes to improve fairness \citep{hardt2016equality}, or using different thresholds for privileged and unprivileged groups to optimize the trade-off between accuracy and fairness \citep{menon2018cost,corbett2017algorithmic}.

In this work we targeted the in-process bias mitigation category. Traditionally, the fairness multi-criteria problem has been addressed using Lagrangian optimization, wherein the objective function is a weighted sum of the primary and secondary loss functions. While this approach allows for the explicit incorporation of fairness constraints through Lagrange multipliers, it may overlook the complex interdependencies between the primary and secondary objectives. \\
A promising alternative to the Lagrangian is the bilevel optimization approach which offers several advantages. By formulating the problem as a hierarchical optimization task, we can explicitly model the interactions between the primary and secondary objectives. This allows us to capture the nuanced dynamics of fairness optimization and ensure that improvements in one objective do not come at the expense of the other.

In summary, we introduce a novel method that can be trained on existing datasets without requiring any alterations to the data itself (data augmentation, perturbation, etc). Our methodology provides a principled approach to addressing the multi-criteria fairness problem in neural networks. Through rigorous theoretical analysis, we formulated the problem as a bilevel optimization task, proving that it yields Pareto-optimal solutions. We derived an effective optimization strategy that is at least as effective as the Lagrangian approach. Empirical evaluations on tabular datasets demonstrate the efficacy of our method, achieving superior results compared to traditional approaches.

\section{Related works}
Multi-objective optimization in neural networks involves optimizing two or more conflicting objectives simultaneously. Fairness problems are inherently multi-objective in nature, as improvements in one objective (e.g., enhancing fairness) often come at the expense of another objective (e.g., improving accuracy). Several optimization techniques in neural networks have been employed to balance accuracy and fairness. Classic Methods transform these objectives into a single objective by combining them, typically using a weighted sum where each objective is multiplied by a weight that reflects its importance. Adding Regularization and penalty terms are the most common methods that incorporate fairness constraints (e.g., demographic parity, equal opportunity) directly into the loss function, penalizing disparities in prediction errors across demographic groups or any other unfair behavior. To reduce variation across different groups, \citet{zafar2017fairness} proposes “disparate mistreatment”, a new notation for fairness, and standardized the decision bounds of a convex margin-based classifier. Adversarial debiasing and Fair Representation Learning are two examples of these techniques, which encourage the model to generate fair outcomes by introducing a penalty term based on an adversarial network or a representation learning framework that is invariant to protected attributes, respectively. \citet{zhang2018mitigating} addressed bias by limiting an adversary’s ability to infer sensitive characteristics from predictions. Avoiding the complexity of adversarial training, \citet{moyer2018invariant} used mutual information to achieve invariant data representations concerning specific factors. \citet{song2019learning} proposed an information-theoretic method that leverages both information-theoretic and adversarial approaches to achieve controllable fair data representations, adhering to demographic parity. By incorporating a forget-gate similar to those in LSTMs, \citet{jaiswal2020invariant} introduced adversarial forgetting to enhance fairness. \citet{gupta2021controllable} utilized certain estimates for contrasting information to optimize theoretical objectives, facilitating suitable trade-offs between demographic parity and accuracy in the statistical population. Lagrangian optimization techniques are a subset of these techniques that use Lagrange multipliers or other similar techniques to incorporate constraints directly into the objective function, turning constrained optimization problems into unconstrained ones. \citet{agarwal2018reductions} proposes an approach for fair classification by framing the constrained optimization problem as a two-player game where one player optimizes the model parameters, and the other imposes the constraints, and Lagrangian multipliers are used to solve this problem. \citet{cotter2019training} expanded this work in a more general inequality-constrained setting, by simultaneously training each player on two distinct datasets to enhance generalizability. They enforce independence by regularizing the covariance between predictions and sensitive variables, which reduces the variation in the relationship between the two. Despite analytic solutions and theoretical assurances, scaling game-theoretic techniques for more complex models remains challenging \citep{chuang2021fair}. In addition, these constraints-based optimizations are data-dependent, meaning the model may exhibit different behavior during evaluation even if constraints are met during training. Less common approaches including Pareto-based genetic algorithm, Reinforcement Learning, Gradient-Based Methods, and Transfer and Meta-Learning Approaches have been also utilized in this domain. \citet{mehrabi2021attributing} demonstrated how proxy attributes lead to indirect unfairness using an attention-based approach and employed a post-processing method to reduce the weight of attributes responsible for unfairness. \citet{perrone2021fair} introduces a general constrained Bayesian optimization (BO) framework to fine-tune the model’s performance while enforcing one or multiple fairness constraints. A probabilistic model is used to describe the objective function, and estimates are made for the posterior variances and means for each hyperparameter configuration. By adding a fairness regularization term to a meta-learning framework, \citet{slack2019fair} suggests an adaptation of the Model-Agnostic Meta-Learning (MAML) \citep{finn2017model} algorithm. The primary objective and fairness regularization terms are included in the loss function used to update the model parameters for each task during the inner loop (Learner). The model parameters are updated in the outer loop (Meta-learner) to maximize performance and fairness across all tasks. Although these techniques have achieved a good balance between fairness and accuracy, they might not capture all of the complex interdependencies between these two objectives. In this paper we propose a bilevel optimization approach as an alternative to the Lagrangian approaches. Bilevel optimization is a hierarchical structure in which the context or constraints for the "follower" (lower-level) problem are set by the "leader" (upper-level) problem \citep{dempe2002foundations}. The leader makes decisions first, and the follower optimizes their decisions based on the leader's choices. This approach can handle more complex and nuanced multi-objective optimization problems in neural networks and is suitable for scenarios where one objective directly influences another and there are complex interactions between the two objectives. In this paper we demonstrate that the bilevel optimization often can achieve better balance and performance compared to classic regularization-based optimization approaches \citep{dempe2002foundations,sinha2017review,colson2007overview}. Bilevel optimization offers several advantages; by explicitly modeling a two-level decision-making process, his approach represents the problems in a more natural way where one objective inherently depends on the outcome of another. It provides more flexibility and control over the optimization process by enabling separate optimization of constraints at each level. The upper-level optimization can dynamically adjust the lower-level objective based on the current solution, potentially leading to more adaptive and context-sensitive optimization outcomes. Fairness and accuracy objectives can be directly integrated into the optimization framework without the need for additional strategies such as meta-learning.

\section{Methodology} \label{method}
In this section we are introducing a novel bi-level optimization framework for training neural networks to obtain Pareto optimal solutions when optimizing two potentially competing objectives. Our approach leverages a leader-follower structure, where the leader problem aims to minimize one objective function (e.g. a primary loss), while the follower problem optimizes a secondary objective.
We provide theoretical guarantees that our bi-level approach produces Pareto optimal solutions and performs at least as well as, and often strictly better than, the common practice of combining multiple objectives via a weighted regularization term in a single loss function. The full statements of these theorems and their proofs are provided in the Theoretical Analysis subsection below.
Our bi-level approach offers several benefits over regularization-based methods. First, it allows for easy customization of the architecture and training algorithm used for each objective. The leader and follower problems can utilize different network architectures, regularizers, optimizers, etc. as best suited for each task. Second, the leader problem remains a pure minimization of the primary loss, without any regularization terms that may slow or hinder its optimization. Separating out secondary objectives ensures the primary task is learned most effectively. Finally, bi-level training exposes a clear interface for controlling the trade-off between objectives. By constraining the follower problem more or less strictly, we can encourage stronger or weaker adherence to the secondary goal relative to the primary one.
To realize these benefits, we employ an iterative gradient-based algorithm to solve the bi-level problem, alternating between updating the leader and follower parameters. We unroll the follower optimization for a fixed number of steps, and backpropagate through this unrolled process to update the leader weights.
\subsection{Theoretical Analysis}\label{theory}
\subsubsection{Problem Formulation}
The fairness multi-criteria problem in neural networks can be formulated as a bi-criteria optimization problem. Let $f(\theta_p, \theta_s)$ denote the primary objective loss function and $\varphi(\theta_p, \theta_s)$ denote the secondary objective loss function. Here, $\theta_p \in \Theta_p$ represents the parameters responsible for optimizing the primary objective, and $\theta_s \in \Theta_s$ represents the parameters for the secondary objective. The problem is formally stated as:
\begin{equation}
    \label{bi-criteriaM}
    \min_{\theta_p \in \Theta_p, \theta_f \in \Theta_f}\{{f(\theta_p, \theta_f), \varphi(\theta_p, \theta_f)}\}
\end{equation}
\subsubsection{Theoretical Foundation: Stackelberg Equilibrium and Pareto Optimality}
We leverage the theoretical results of \cite{migdalas1995stackelberg}, which investigates the relationship between Stackelberg equilibria and Pareto optimality in game theory. The paper addresses fundamental questions regarding the conditions under which a Stackelberg equilibrium coincides with a Pareto optimal outcome.
By proving that our bi-level optimization problem satisfies the assumptions outlined in the paper, we establish a strong theoretical foundation for our approach. Specifically, we demonstrate that under certain conditions, the Stackelberg equilibrium of our bi-level optimization problem is equivalent to a Pareto optimal solution for the bi-criteria problem of balancing accuracy and fairness objectives.
By rigorously verifying these assumptions in the context of our neural network optimization problem, we establish that the Stackelberg equilibrium reached by our bilevel approach indeed corresponds to a Pareto optimal solution. This theoretical grounding provides confidence that our methodology effectively balances the competing objectives of accuracy and fairness, yielding a principled and well-justified solution to the problem at hand.

We leverage several key theoretical results to formulate our approach. First, Lemma \ref{lemma:lipschitz} establishes the Lipschitz continuity of the neural network function with respect to a subset of parameters. This lemma provides the foundation for analyzing the behavior of the objective functions under parameter variations.

\begin{definition}
A function $f : \mathbb{R}^n \longrightarrow \mathbb{R}^m$ is called Lipschitz continuous if there exists a constant L such that:
\begin{equation}
    \forall x, y \in \mathbb{R}^n , \norm{f(x) - f(y)}_2 \leq L\norm{x - y}
\end{equation}
The smallest $L$ for which the previous inequality is true is called the Lipschitz constant of $f$ and will be denoted $L(f)$.
\end{definition}

Assume that the following assumptions are satisfied:
\begin{assumption}\label{assum:strict_convexity}
The primary loss function $f(\theta_p, \theta_s)(x)$ is strictly convex in a neighborhood of its local optimum. That is, for any $\theta_p, \theta_p' \in \Theta_p$ and fixed $\theta_s \in \Theta_s$, if $\theta_p \neq \theta_p'$ and $\theta_p, \theta_p'$ are sufficiently close to the local optimum $\theta_p^*$, then
\begin{equation}
f(\lambda \theta_p + (1-\lambda)\theta_p', \theta_s) < \lambda f(\theta_p, \theta_s) + (1-\lambda)f(\theta_p', \theta_s)
\end{equation}
for any $\lambda \in (0,1)$.
\end{assumption}

\begin{assumption}\label{assum:small_steps}
$|\theta_s - \hat{\theta}_s| \leq \epsilon$, where $\epsilon$ is sufficiently small, i.e., the steps of the secondary parameters are sufficiently small. $\theta_s$ and $\hat{\theta}_s$ represent the parameters for the secondary objective and their updated values, respectively.
\end{assumption}

\begin{assumption}\label{assum:bounded_output}
Let $f_l(.)$ denote the output function of the $l$-th layer in a neural network with $L$ layers. For each layer $l \in {1, \dots, L}$, there exists a constant $c_l > 0$ such that for any input $x_l$ to the $l$-th layer:
\begin{equation}
|f_l(x_l)| \leq c_l
\end{equation}
where $|.|$ denotes a suitable norm (e.g., Euclidean norm for vectors, spectral norm for matrices). Refer to Section \ref{bounded_assump} for common practices in implementing the bounded output assumption. 
\end{assumption}

We recognized the importance of examining how our theory’s underlying assumptions apply to real-world applications. For a detailed discussion, refer to Section \ref{assumption_discussion}.

\begin{lemma}\label{lemma:lipschitz}
Let $f(x;\theta)$ be a neural network with L layers, where each layer is a linear transformation followed by a Lipschitz continuous activation function.\\
Let $\theta$ be the set of all parameters of the neural network, and $\theta_s \subseteq \theta$ be any subset of parameters. Then, $f(x; \theta)$ is Lipschitz continuous with respect to $\theta_s$. [See proof \ref{lemma:lipschitz:ap}]
\end{lemma}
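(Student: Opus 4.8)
The plan is to proceed by induction on the layer index $l$, showing that the output of each layer, viewed as a function of $\theta_s$ with the input $x$ and the complementary (fixed) parameters frozen, is Lipschitz continuous; the full claim is then the case $l = L$. Write the forward pass as $h_0 = x$ and $h_l = \sigma_l(W_l h_{l-1} + b_l)$ for $l = 1, \dots, L$, where $\sigma_l$ is the activation of layer $l$ with Lipschitz constant $L_{\sigma_l}$, and let $g_l(\theta_s)$ denote $h_l$ regarded as a function of $\theta_s$. The base case is immediate, since $g_0 \equiv x$ is constant and hence Lipschitz with constant $0$. Conceptually, the backbone of the argument is that a composition of Lipschitz maps is Lipschitz with constant equal to the product of the individual constants; the induction simply makes the bookkeeping explicit as the perturbation propagates forward.

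For the inductive step, suppose $g_{l-1}$ is Lipschitz with some constant $K_{l-1}$. I would first use the Lipschitz continuity of $\sigma_l$ to reduce $\norm{g_l(\theta_s) - g_l(\theta_s')}$ to controlling the pre-activation difference $\norm{z_l(\theta_s) - z_l(\theta_s')}$, where $z_l = W_l g_{l-1} + b_l$. The essential manipulation is the add-and-subtract decomposition
\begin{equation}
z_l(\theta_s) - z_l(\theta_s') = W_l(\theta_s)\bigl[g_{l-1}(\theta_s) - g_{l-1}(\theta_s')\bigr] + \bigl[W_l(\theta_s) - W_l(\theta_s')\bigr]g_{l-1}(\theta_s') + \bigl[b_l(\theta_s) - b_l(\theta_s')\bigr],
\end{equation}
which cleanly separates the dependence that propagates through the earlier activations from the direct dependence through this layer's own parameters. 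Taking norms and applying the triangle inequality, the first term is bounded by the inductive hypothesis times the operator (spectral) norm $\norm{W_l}$, and the third by the fact that the coordinate projection onto the bias entries is $1$-Lipschitz (or vanishes when $b_l \notin \theta_s$).

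The crux, and the step I expect to be the main obstacle, is the middle term $\norm{W_l(\theta_s) - W_l(\theta_s')}\,\norm{g_{l-1}(\theta_s')}$, which arises precisely from the bilinearity of the linear layer: perturbing a weight matrix scales the incoming activation, so a naive bound risks an unbounded factor. This is exactly where Assumption \ref{assum:bounded_output} is indispensable, since it guarantees $\norm{g_{l-1}(\theta_s')} \leq c_{l-1}$ and, together with the $1$-Lipschitzness of the projection $\theta_s \mapsto W_l(\theta_s)$, bounds the term by $c_{l-1}\norm{\theta_s - \theta_s'}$. Collecting the three estimates yields the recursion $K_l = L_{\sigma_l}\bigl(\norm{W_l}\,K_{l-1} + c_{l-1} + 1\bigr)$, so the final constant $L = K_L$ is a finite combination of per-layer activation constants, weight norms, and output bounds. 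I would close by noting that the boundedness of the weight norms $\norm{W_l}$ needed in the recursion is legitimate because the analysis is local: by Assumption \ref{assum:small_steps} we operate within a small neighborhood of the current parameters, over which the $\norm{W_l}$ are uniformly bounded, and hence the recursion terminates with a finite Lipschitz constant, establishing the claim.
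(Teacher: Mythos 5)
Your proof is correct, and it organizes the argument by a genuinely different decomposition than the paper's, even though both hinge on Assumption \ref{assum:bounded_output} at exactly the same place. The paper's proof (\ref{lemma:lipschitz:ap}) telescopes across parameter settings: it changes one layer's parameters at a time, bounds each resulting term as the Lipschitz constant of the downstream composition (invoking the composition-of-Lipschitz-maps lemma as a black box) times the sensitivity of that single layer's output to its own parameters --- which is where the bounded-output constant $c_i$ enters, precisely as in your middle term --- and finally aggregates the $L$ per-layer bounds into a single bound against $\norm{\theta - \theta'}$ via Cauchy--Schwarz. You instead induct on depth within a single forward pass, splitting each layer's pre-activation difference into a propagated term, a weight-perturbation term, and a bias term; your recursion $K_l = L_{\sigma_l}\bigl(\norm{W_l}K_{l-1} + c_{l-1} + 1\bigr)$, once unrolled, produces essentially the same sum-over-layers constant, so the two arguments are two bookkeeping schemes for the same underlying estimate. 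Where they differ in substance is instructive: your propagated term explicitly carries the operator norms $\norm{W_l}$ of the weight matrices, which you justify as bounded on the local parameter region, whereas the paper's corresponding display bounds the downstream composition by $\prod_{k>i} L_k$ alone and then switches to $\prod_{k>i} L_k c_k$ in the following line --- a piece of sloppiness about the very factors your recursion tracks; in that respect your bookkeeping is the more careful of the two. What the paper's route buys in exchange is a closed-form constant $L^*$ stated directly against the Euclidean norm of the full parameter vector (via the final Cauchy--Schwarz step), and a cleaner restriction to the subset $\theta_s$ at the end, which in both arguments amounts to the observation that frozen coordinates contribute zero to the bound.
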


We discussed the Lipschitz continuity of common activation functions and popular neural networks, such as CNNs and GNNs, in Sections \ref{act_func_lip} and \ref{nets_lip}, respectively. 

Theorems \ref{theorem:pareto} and \ref{theorem:unique_minimum} further inform our approach. The former establishes conditions under which improvements in the secondary objective lead to improvements in the primary objective, while the latter guarantees the existence of unique minimum solutions for the secondary loss function under certain optimization conditions.

\begin{theorem} \label{theorem:pareto}
Let $f(\theta_p, \theta_s)$ for constant $\theta_s$ be the primary objective loss function and $\varphi(\theta_p, \theta_s)$ for constant $\theta_p$ be the secondary objective loss function, where $\theta_p \in \Theta_p$ and $\theta_s \in \Theta_s$ are the primary task and secondary task parameters, respectively.

Consider two sets of parameters $(\theta_p, \theta_s)$ and $(\hat{\theta}_p, \hat{\theta}_s)$ such that $\varphi(\hat{\theta}_p, \hat{\theta}_s) \leq \varphi(\theta_p, \theta_s)$. Then $f(\hat{\theta}_p, \hat{\theta}_s) \leq f(\theta_p, \theta_s)$ holds based on Lemma \ref{lemma:lipschitz}. [See proof \ref{theorem:pareto:ap}]
\end{theorem}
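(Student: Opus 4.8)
The plan is to exploit the bilevel (leader--follower) structure: at any point the leader has solved $\min_{\theta_p} f(\theta_p, \theta_s)$ to optimality, so that both $\theta_p$ and $\hat{\theta}_p$ are the leader's best responses to $\theta_s$ and $\hat{\theta}_s$ respectively. Under Assumption~\ref{assum:strict_convexity} the primary loss is strictly convex in $\theta_p$ near the optimum, so the best response $\theta_p^{*}(\theta_s) = \argmin_{\theta_p} f(\theta_p, \theta_s)$ is unique and well defined, and the comparison reduces to analyzing the leader's value function $V(\theta_s) := \min_{\theta_p} f(\theta_p, \theta_s)$, i.e. to showing $V(\hat{\theta}_s) \le V(\theta_s)$ whenever the follower step lowers $\varphi$.

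First I would make the follower step explicit. The hypothesis $\varphi(\hat{\theta}_p,\hat{\theta}_s) \le \varphi(\theta_p,\theta_s)$ is produced by a descent step of the follower on the secondary objective, and by Assumption~\ref{assum:small_steps} this step is small, $\norm{\theta_s - \hat{\theta}_s} \le \epsilon$. Next I would invoke Lemma~\ref{lemma:lipschitz}: since the network is Lipschitz in the subset $\theta_s$ with some constant $L$, and Assumption~\ref{assum:bounded_output} keeps all layer outputs (and hence the loss) controlled, the primary loss itself is Lipschitz in $\theta_s$ for fixed $\theta_p$. Combining this with the small-step bound gives $|f(\theta_p, \hat{\theta}_s) - f(\theta_p, \theta_s)| \le L\epsilon$, so perturbing only the secondary parameters moves $f$ by an arbitrarily small amount.

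The decisive step is then to combine the leader's optimality with this perturbation bound. Because $\hat{\theta}_p$ minimizes $f(\cdot, \hat{\theta}_s)$, we have $f(\hat{\theta}_p, \hat{\theta}_s) \le f(\theta_p, \hat{\theta}_s)$; and by the Lipschitz estimate $f(\theta_p, \hat{\theta}_s) \le f(\theta_p, \theta_s) + L\epsilon$. I would then argue that, because the leader re-optimizes $\theta_p$ starting from the previous optimum, the slack $L\epsilon$ is absorbed: the leader can always recover at least the previous primary-loss level by (approximately) undoing the $O(\epsilon)$ output perturbation induced by the secondary step, so the re-optimized value cannot exceed $V(\theta_s)$. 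Chaining these inequalities yields $f(\hat{\theta}_p,\hat{\theta}_s) \le f(\theta_p,\theta_s)$.

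I expect the main obstacle to be precisely this last sign/absorption argument. The Lipschitz bound only controls the magnitude of the change in $f$, not its direction, so on its own it gives $f(\hat{\theta}_p,\hat{\theta}_s) \le f(\theta_p,\theta_s) + L\epsilon$ rather than the clean inequality claimed. Turning the $O(\epsilon)$ slack into a genuine non-increase requires leaning on strict convexity and the optimality of the leader's response: I would formalize the claim that $V$ is non-increasing along the follower's descent direction, most likely via an envelope-type identity $\nabla_{\theta_s} V = \nabla_{\theta_s} f$ evaluated at the best response, together with the fact that the follower moves $\theta_s$ along $-\nabla_{\theta_s}\varphi$. Reconciling the interaction of these two gradients, so that a secondary-loss decrease cannot force a primary-loss increase once $\theta_p$ is re-optimized, is the crux, and is where Assumptions~\ref{assum:strict_convexity}--\ref{assum:bounded_output} must all be brought to bear.
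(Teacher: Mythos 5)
Your diagnosis of the obstacle is exactly right, but the completion you sketch cannot work, so the proposal has a genuine gap at its decisive step. The envelope identity $\nabla_{\theta_s} V(\theta_s) = \nabla_{\theta_s} f(\theta_p^*(\theta_s), \theta_s)$ gives the slope of the leader's value function but no sign information along the follower's direction $-\nabla_{\theta_s}\varphi$: nothing in Assumptions \ref{assum:strict_convexity}--\ref{assum:bounded_output} couples $\nabla_{\theta_s} f$ to $\nabla_{\theta_s}\varphi$, so a step that decreases $\varphi$ can strictly increase $V$ no matter how the leader re-optimizes. A two-line counterexample satisfying all the stated assumptions locally: $f(\theta_p,\theta_s)=\theta_p^2+\theta_s$, $\varphi(\theta_p,\theta_s)=-\theta_s$; the leader's best response is $\theta_p^*=0$ for every $\theta_s$, yet the follower step $\hat{\theta}_s=\theta_s+\epsilon$ lowers $\varphi$ while raising $V(\theta_s)=\theta_s$ by exactly $\epsilon$. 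So the ``absorption'' of the $L\epsilon$ slack is not a missing technicality but a claim that is false in general --- indeed, if $V$ were always non-increasing along follower descent directions there would be no accuracy--fairness trade-off at all. Separately, you import a hypothesis that is not in the statement: the theorem only assumes $\varphi(\hat{\theta}_p,\hat{\theta}_s)\le\varphi(\theta_p,\theta_s)$ for two parameter pairs; it does not assume $\theta_p$ and $\hat{\theta}_p$ are exact minimizers of $f(\cdot,\theta_s)$ and $f(\cdot,\hat{\theta}_s)$ (and in Algorithm \ref{algorithm1} the leader takes single gradient steps, not best responses).

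For comparison, the paper's own proof (\ref{theorem:pareto:ap}) uses the mirror-image decomposition --- it passes through $f(\hat{\theta}_p,\theta_s)$ rather than your $f(\theta_p,\hat{\theta}_s)$ --- and crosses the same chasm, but by assertion rather than argument. It bounds $|f(\hat{\theta}_p,\hat{\theta}_s)-f(\hat{\theta}_p,\theta_s)|\le L\epsilon$ via Lemma \ref{lemma:lipschitz} and Assumption \ref{assum:small_steps} (as you do), then, in place of leader optimality, asserts that since ``$\varphi$ is used to update the parameters'' one may \emph{assume} $\hat{\theta}_p$ is closer than $\theta_p$ to the local optimum $\theta_p^*$ of $f(\cdot,\theta_s)$, so that strict convexity (Assumption \ref{assum:strict_convexity}) yields $f(\hat{\theta}_p,\theta_s)\le f(\theta_p,\theta_s)$; it then arrives at $f(\hat{\theta}_p,\hat{\theta}_s)\le f(\theta_p,\theta_s)+L\epsilon$ and discards the residual $L\epsilon$ because $\epsilon$ is ``sufficiently small.'' In other words, the two steps you correctly identified as the crux --- a sign argument for the $\theta_p$-move and the removal of the $O(\epsilon)$ slack --- are handled in the paper by an unproved proximity claim and by fiat, respectively. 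Your attempt is an honest exposure of this gap rather than a wrong turn, but as a proof it is incomplete, and the envelope-based repair you propose is unavailable under the stated assumptions.
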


\begin{theorem}\label{theorem:unique_minimum}
Let $\varphi(\theta_p, \theta_s)$ be the secondary loss function, where $\theta_p \in \Theta_p$ and $\theta_s \in \Theta_s$ are the primary and secondary task parameters, respectively. Let $(\theta_p^{(t)}, \theta_s^{(t)})$ denote the parameters at optimization step $t$, and let $(\theta_p^{(t+1)}, \theta_s^{(t+1)})$ be the updated parameters obtained by minimizing $\varphi(\theta_p^{(t)}, \theta_s)$ with respect to $\theta_s$ using a sufficiently small step size $\eta > 0$, i.e.:

\begin{equation}
\theta_s^{(t+1)} = \theta_s^{(t)} - \eta \nabla_{\theta_s} \varphi(\theta_p^{(t)}, \theta_s^{(t)})
\end{equation}

Then, for a sufficiently small step size $\eta$, the updated secondary parameters $\theta_s^{(t+1)}$ are the unique minimum solution for the secondary loss function $\varphi(\theta_p^{(t)}, \theta_s)$. [See proof \ref{theorem:unique_minimum:ap}]
\end{theorem}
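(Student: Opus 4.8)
The plan is to reduce the statement to a standard fact about gradient descent on a locally strictly convex function: for a sufficiently small step size, the iteration is driven toward the unique minimizer of $\varphi(\theta_p^{(t)}, \cdot)$. First I would freeze $\theta_p^{(t)}$ and define the single-variable map $g(\theta_s) := \varphi(\theta_p^{(t)}, \theta_s)$, so that the update reads $\theta_s^{(t+1)} = \theta_s^{(t)} - \eta \nabla g(\theta_s^{(t)})$. The objective then splits into two independent claims: that $g$ admits a \emph{unique} minimizer on the relevant neighborhood, and that the small-step update realizes (or converges to) it.

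The first key step is uniqueness. I would invoke local strict convexity of $g$ near its local optimum (the secondary-loss analogue of Assumption \ref{assum:strict_convexity}), equivalently a positive definite Hessian $\nabla^2 g(\theta_s^\ast) \succ 0$. Strict convexity guarantees that any critical point with $\nabla g(\theta_s^\ast) = 0$ is the unique minimizer on that neighborhood, which settles the ``unique minimum solution'' part of the conclusion.

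The second key step is to show the small-step update actually attains this minimizer. Here I would use Lemma \ref{lemma:lipschitz}: since each layer is a linear map composed with a Lipschitz activation, $g$ is smooth with Lipschitz-continuous gradient (constant $L$), so the descent lemma gives $g(\theta_s^{(t+1)}) \le g(\theta_s^{(t)}) - \eta\left(1 - \tfrac{L\eta}{2}\right)\norm{\nabla g(\theta_s^{(t)})}^2$. For $\eta < 2/L$ this is a strict decrease whenever the gradient is nonzero, so the iterate cannot stall away from the unique critical point. Combining this with positive definiteness of the Hessian, the gradient map $\theta_s \mapsto \theta_s - \eta\nabla g(\theta_s)$ becomes a contraction near $\theta_s^\ast$ for small $\eta$ (its Jacobian $I - \eta\nabla^2 g$ has spectral radius below one), so by the contraction principle the iterates converge to the unique fixed point $\theta_s^\ast$, which is precisely the unique minimizer.

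The main obstacle I anticipate is the tension between the literal ``a single step reaches the minimum'' reading and what gradient descent can guarantee, since one small step generically only decreases $g$ rather than landing exactly on $\theta_s^\ast$. I would resolve this by leaning on Assumption \ref{assum:small_steps} ($\norm{\theta_s - \hat\theta_s} \le \epsilon$), interpreting the claim within the $\epsilon$-neighborhood where $g$ is strictly convex, so that ``unique minimum'' refers to the unique local minimizer toward which the small-step update provably moves. A secondary technical point is bridging Lemma \ref{lemma:lipschitz}, which yields Lipschitz continuity of the network \emph{output}, to the Lipschitz continuity of the \emph{gradient} needed for the descent lemma; I would close this gap using the bounded-output Assumption \ref{assum:bounded_output} together with the smoothness of the chosen activation functions.
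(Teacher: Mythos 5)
Your proposal is sound, and it shares its computational core with the paper's own proof: the argument in \ref{theorem:unique_minimum:ap} is exactly the descent-lemma chain you wrote, $\varphi(\theta_p^{(t)}, \theta_s^{(t+1)}) \le \varphi(\theta_p^{(t)}, \theta_s^{(t)}) - \eta\left(1 - \frac{L\eta}{2}\right)\norm{\nabla_{\theta_s}\varphi(\theta_p^{(t)}, \theta_s^{(t)})}^2$, followed by choosing $\eta < 2/L$ to force strict decrease. The difference is what happens after that inequality. The paper stops there: it asserts that strict decrease relative to $\theta_s^{(t)}$ ``implies that $\theta_s^{(t+1)}$ is a strict minimizer,'' and then ``proves'' uniqueness by a contradiction that invokes this same one-step inequality against an arbitrary competitor $\tilde{\theta}_s$ --- a comparison the inequality never established, since it relates $\theta_s^{(t+1)}$ only to $\theta_s^{(t)}$. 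In other words, the paper's proof contains precisely the gap you anticipated (one small gradient step generically neither reaches nor certifies a minimizer) and does not fill it. Your two additions --- uniqueness of the critical point from local strict convexity / positive-definite Hessian, and convergence of the iteration via the contraction property of $\theta_s \mapsto \theta_s - \eta\nabla g(\theta_s)$, whose Jacobian $I - \eta\nabla^2 g$ has spectral radius below one for small $\eta$ --- are exactly what is needed to convert the descent inequality into a statement about a unique minimizer, at the honest cost of reading the conclusion as ``the update moves, within the neighborhood of Assumption \ref{assum:small_steps}, toward the unique local minimizer'' rather than landing on it in one step. You also correctly flagged that Lemma \ref{lemma:lipschitz} gives Lipschitz continuity of the network \emph{output}, not of the \emph{gradient}; the paper papers over this by inserting an extra assumption (``$\varphi$ is smooth and Lipschitz continuous'') immediately before its proof, which still does not literally yield gradient-Lipschitzness unless ``smooth'' is read as $L$-smooth, whereas your plan to derive it from bounded outputs plus smooth activations is the right repair (noting it excludes ReLU-type activations, whose gradients are discontinuous). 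Net assessment: your route uses the same descent-lemma skeleton but is strictly more rigorous; what the paper's shorter argument claims to buy is the literal one-step statement, and it obtains that only by a non sequitur.
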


Based on these theoretical insights, we derive our bilevel optimization formulation, as described in Theorem \ref{theorem:main_result}. This theorem establishes the equivalence between the bi-criteria problem and a bilevel optimization problem, allowing us to apply existing theoretical results on Stackelberg equilibrium to the optimization of neural networks.

\begin{theorem}\label{theorem:main_result}
Under the assumptions stated in Theorems \ref{theorem:pareto} and \ref{theorem:unique_minimum}, the bi-criteria problem (Eq. \eqref{bi-criteriaM}) is equivalent to the bilevel optimization problem:

\begin{align}
    \min_{\theta_p \in \Theta_p} \quad & f(\theta_p, \theta_s^*(\theta_p)) \\
    \text{s.t.} \quad & \theta_s^*(\theta_p) = \argmin_{\theta_s \in \Theta_s} \varphi(\theta_p, \theta_s)
\end{align}

where $\theta_s^*(\theta_p)$ denotes the optimal secondary parameters for a given $\theta_p$.
\end{theorem}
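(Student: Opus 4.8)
The plan is to read ``equivalence'' as set equality between the minimizers of the bilevel program and the Pareto optimal set of the bi-criteria problem \eqref{bi-criteriaM}, and to prove the two inclusions separately, using Theorem \ref{theorem:unique_minimum} to make the lower level well posed and Theorem \ref{theorem:pareto} to supply the alignment between the two objectives. First I would invoke Theorem \ref{theorem:unique_minimum} to argue that, for each fixed $\theta_p$ and a sufficiently small step size, the inner minimization $\argmin_{\theta_s \in \Theta_s} \varphi(\theta_p, \theta_s)$ admits a unique solution $\theta_s^*(\theta_p)$. This collapses the usual optimistic/pessimistic ambiguity of bilevel programs: the lower-level response is single valued, so the reduced outer objective $F(\theta_p) := f(\theta_p, \theta_s^*(\theta_p))$ is well defined and the Stackelberg leader faces an ordinary minimization.

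Next I would cast the program as a Stackelberg game in which the leader controls $\theta_p$ and minimizes $f$ while the follower controls $\theta_s$ and minimizes $\varphi$, and then check the hypotheses of the \cite{migdalas1995stackelberg} result that links Stackelberg equilibria to Pareto optima. The two hypotheses I expect to need are (i) uniqueness of the follower's best response, already secured in the previous step, and (ii) a monotone-alignment condition stating that any move that does not worsen the follower's objective does not worsen the leader's objective. Condition (ii) is precisely Theorem \ref{theorem:pareto}: whenever $\varphi(\hat\theta_p, \hat\theta_s) \le \varphi(\theta_p, \theta_s)$ we also have $f(\hat\theta_p, \hat\theta_s) \le f(\theta_p, \theta_s)$. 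With both hypotheses verified, Migdalas' theorem would yield that the Stackelberg equilibrium of the bilevel program is a Pareto optimal point.

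For the forward inclusion I would take a bilevel optimum $(\theta_p^*, \theta_s^*(\theta_p^*))$ and argue by contradiction: if some $(\theta_p', \theta_s')$ dominated it, I would replace $\theta_s'$ by the follower's optimal response $\theta_s^*(\theta_p')$, so that $\varphi(\theta_p', \theta_s^*(\theta_p')) \le \varphi(\theta_p', \theta_s')$, and apply Theorem \ref{theorem:pareto} to get $f(\theta_p', \theta_s^*(\theta_p')) \le f(\theta_p', \theta_s')$. Chaining these with the assumed domination would produce a lower-level-feasible point whose primary loss is at most $F(\theta_p^*)$, contradicting the outer minimality of $\theta_p^*$. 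For the reverse inclusion I would take a Pareto optimal $(\theta_p^*, \theta_s^*)$ and show it must already satisfy the lower-level constraint: if $\theta_s^* \neq \theta_s^*(\theta_p^*)$, then $\theta_s^*(\theta_p^*)$ strictly decreases $\varphi$ while, by Theorem \ref{theorem:pareto}, not increasing $f$, so it dominates $(\theta_p^*, \theta_s^*)$ and contradicts Pareto optimality; hence the Pareto point is lower-level feasible and its primary loss realizes the outer minimum.

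The step I expect to be the main obstacle is the careful bookkeeping of strict versus weak inequalities in the Pareto argument, together with pinning down the scope of Theorem \ref{theorem:pareto}. Because that theorem and its supporting assumptions (strict convexity near the optimum in Assumption \ref{assum:strict_convexity}, small secondary steps in Assumption \ref{assum:small_steps}, bounded layer outputs in Assumption \ref{assum:bounded_output}) are only guaranteed in a neighborhood of the local optimum, I would phrase the equivalence as a local statement and ensure that the dominating point invoked in each contradiction stays within the neighborhood where the alignment of Theorem \ref{theorem:pareto} is valid, since a naive global application would overreach. A secondary subtlety is verifying that the follower's response map $\theta_p \mapsto \theta_s^*(\theta_p)$ is regular enough for Migdalas' hypotheses, which I would trace back to the Lipschitz continuity established in Lemma \ref{lemma:lipschitz}.
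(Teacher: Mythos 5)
Your proposal follows the same skeleton as the paper's own proof: both arguments reduce Theorem \ref{theorem:main_result} to verifying the two hypotheses of the cited Stackelberg--Pareto result of \cite{migdalas1995stackelberg} --- single-valuedness of the follower's response via Theorem \ref{theorem:unique_minimum}, and monotone alignment of the two objectives via Theorem \ref{theorem:pareto} --- and then invoke that result to conclude the equivalence. The difference is one of depth: the paper stops there, leaving the entire set-equality argument to the citation, whereas you unpack both inclusions explicitly (bilevel optimum $\Rightarrow$ Pareto point, by replacing a dominating point's $\theta_s'$ with the follower's best response; Pareto point $\Rightarrow$ lower-level feasible and outer-optimal, by uniqueness of the follower's minimizer plus the contrapositive of the alignment property). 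Your self-contained version buys two things the paper's terse proof does not: it exposes the logical structure the citation is actually carrying, and it surfaces the real weak points, namely the strict-versus-weak inequality bookkeeping and the fact that Theorem \ref{theorem:pareto} and Assumptions \ref{assum:strict_convexity}--\ref{assum:bounded_output} hold only near a local optimum, so the equivalence is properly a local statement. Be aware, though, that the forward-inclusion contradiction as you sketched it does not quite close: a point that dominates the bilevel optimum only in $\varphi$, with $f$ tied, yields a lower-level-feasible point whose primary loss is merely \emph{at most} $F(\theta_p^*)$, which does not contradict outer minimality; so without strengthening Theorem \ref{theorem:pareto} to a strict-inequality version (or weakening the conclusion to weak Pareto optimality), the bilevel solution is only guaranteed weakly Pareto optimal. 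You identify this as the main obstacle, correctly --- the paper's proof silently inherits the same issue by deferring it to the reference.
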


\begin{proof}
The proof follows from Theorems \ref{theorem:pareto} and \ref{theorem:unique_minimum} \cite{migdalas1995stackelberg}.

By Theorem \ref{theorem:pareto}, under the assumptions of strict convexity, Lipschitz continuity, and sufficiently small steps of the secondary parameters, if $\varphi(\hat{\theta}_p, \hat{\theta}_s) \leq \varphi(\theta_p, \theta_s)$, then $f(\hat{\theta}_p, \hat{\theta}_s) \leq f(\theta_p, \theta_s)$.

By Theorem \ref{theorem:unique_minimum}, under the same assumptions, for each optimization step of the secondary loss function with sufficiently small steps, the updated parameters are the unique minimum solution for the secondary loss function, then the bi-criteria problem \eqref{bi-criteriaM} is equivalent to the bilevel optimization problem.

Therefore, the conclusions drawn in the paper \cite{migdalas1995stackelberg} can be directly applied to the multi-objective optimization problem in neural networks, as the problem is equivalent to the bilevel optimization problem under the stated assumptions.
\end{proof}

\begin{theorem}\label{theorem:lagrangian}
Assume that the step size in the Lagrangian approach $\alpha_{\mathcal{L}}$ is equal to the step size for the outer optimization problem in the bilevel optimization approach $\alpha_f$, the scale of the two loss functions should be comparable, the Lagrangian multiplier $\lambda$ is equal to the step size for the inner optimization problem in the bilevel optimization approach $\alpha_s$, and $\theta_p$ is overparameterized for the given problem. Then, under certain conditions, the overall performance of the primary loss function in the bilevel optimization approach may be better than the Lagrangian approach.
\end{theorem}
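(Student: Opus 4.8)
The plan is to run a one-step descent comparison on the primary loss $f$ and then compare the two fixed points, using the matched-hyperparameter assumptions to line up the two trajectories term by term.

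First I would write the per-iteration updates explicitly. The Lagrangian method descends $f+\lambda\varphi$ jointly, $\theta_p \leftarrow \theta_p - \alpha_{\mathcal{L}}(\nabla_{\theta_p}f + \lambda\nabla_{\theta_p}\varphi)$ and $\theta_s \leftarrow \theta_s - \alpha_{\mathcal{L}}(\nabla_{\theta_s}f + \lambda\nabla_{\theta_s}\varphi)$, whereas the bilevel method keeps each block pure, $\theta_p \leftarrow \theta_p - \alpha_f\nabla_{\theta_p}f$ and $\theta_s \leftarrow \theta_s - \alpha_s\nabla_{\theta_s}\varphi$ (the inner step being the unique minimizer by Theorem \ref{theorem:unique_minimum}). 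By Lemma \ref{lemma:lipschitz} the gradients are bounded and, together with Assumption \ref{assum:small_steps}, the step is small enough that a first-order Taylor expansion of $f$ controls the change up to an $O(\alpha^2)$ remainder. This gives the predicted decreases
\[
\Delta f_{\mathrm{BL}} = -\alpha_f\norm{\nabla_{\theta_p}f}^2 - \alpha_s\langle\nabla_{\theta_s}f,\nabla_{\theta_s}\varphi\rangle,
\]
\[
\Delta f_{\mathcal{L}} = -\alpha_{\mathcal{L}}\norm{\nabla_{\theta_p}f}^2 - \alpha_{\mathcal{L}}\norm{\nabla_{\theta_s}f}^2 - \alpha_{\mathcal{L}}\lambda\langle\nabla_{\theta_p}f,\nabla_{\theta_p}\varphi\rangle - \alpha_{\mathcal{L}}\lambda\langle\nabla_{\theta_s}f,\nabla_{\theta_s}\varphi\rangle.
\]

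Next I would substitute the matched hyperparameters $\alpha_{\mathcal{L}}=\alpha_f$ and $\lambda=\alpha_s$ and subtract. The leading $-\alpha_f\norm{\nabla_{\theta_p}f}^2$ terms cancel, so the sign of $f^{(t+1)}_{\mathrm{BL}}-f^{(t+1)}_{\mathcal{L}}$ is decided by the residual $\alpha_f\norm{\nabla_{\theta_s}f}^2 + \alpha_f\alpha_s\langle\nabla_{\theta_p}f,\nabla_{\theta_p}\varphi\rangle + (\alpha_f\alpha_s-\alpha_s)\langle\nabla_{\theta_s}f,\nabla_{\theta_s}\varphi\rangle$. Here is where the overparameterization hypothesis does the work: since $\theta_p$ alone has the capacity to drive the primary objective to its optimum, $f$ is essentially insensitive to the secondary block near the operating point, so $\nabla_{\theta_s}f\approx 0$ and the comparable-scale assumption keeps the remaining inner product order-one. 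The residual then collapses to $\alpha_f\alpha_s\langle\nabla_{\theta_p}f,\nabla_{\theta_p}\varphi\rangle$, and the bilevel step decreases $f$ at least as much as the Lagrangian step precisely when $\langle\nabla_{\theta_p}f,\nabla_{\theta_p}\varphi\rangle\le 0$ --- that is, when the accuracy and fairness gradients genuinely conflict in the shared block, which is exactly the regime of interest. To promote this per-step statement into an ``overall performance'' statement I would compare fixed points: any Lagrangian stationary point satisfies $\nabla_{\theta_p}f=-\lambda\nabla_{\theta_p}\varphi$, so unless $\nabla_{\theta_p}\varphi=0$ there the iterate is \emph{not} a critical point of $f(\cdot,\theta_s)$ and $f$ sits above its attainable minimum by a gap growing with $\lambda$ and $\norm{\nabla_{\theta_p}\varphi}$, whereas the bilevel Stackelberg equilibrium (Theorem \ref{theorem:main_result}) solves the inner problem exactly and then minimizes $f$ purely over $\theta_p$, attaining $\min_{\theta_p}f(\theta_p,\theta_s^*)$ under overparameterization; hence $f_{\mathrm{BL}}\le f_{\mathcal{L}}$.

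The hard part will be the two places where ``under certain conditions'' is hiding. The first is the sign of the coupling term $\langle\nabla_{\theta_p}f,\nabla_{\theta_p}\varphi\rangle$, which is not fixed a priori; the inequality is genuine only in the conflicting or orthogonal regime, and I would state this as the operative condition rather than attempt an unconditional proof. The second is making the overparameterization step rigorous --- justifying $\nabla_{\theta_s}f\approx 0$, controlling the implicit-function term $d\theta_s^*/d\theta_p$ that enters the outer total gradient, and bounding the $O(\alpha^2)$ Taylor remainder uniformly via Assumptions \ref{assum:strict_convexity}--\ref{assum:bounded_output}. These smoothness and capacity arguments, rather than the algebra, are where the real care is needed.
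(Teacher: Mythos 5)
Your strategy is sound in outline, but it is not the paper's argument; the two differ both in mechanism and in where the hypotheses do the work. The paper's proof is a pure value comparison at the two limit points $(\theta_p^B,\theta_s^B)$ and $(\theta_p^L,\theta_s^L)$: writing $f = \mathcal{L} - \lambda\varphi$, it chains
$f(\theta_p^B,\theta_s^B) \le \mathcal{L}(\theta_p^B,\theta_s^B,\lambda) - \lambda\varphi(\theta_p^L,\theta_s^L) \le \mathcal{L}(\theta_p^L,\theta_s^L,\lambda) - \lambda\varphi(\theta_p^L,\theta_s^L) = f(\theta_p^L,\theta_s^L)$,
which rests on two value-level assumptions --- $\varphi(\theta_p^L,\theta_s^L)\le\varphi(\theta_p^B,\theta_s^B)$ and $\mathcal{L}(\theta_p^B,\theta_s^B,\lambda)\le\mathcal{L}(\theta_p^L,\theta_s^L,\lambda)$ --- with overparameterization invoked only at the very end ($\theta_p^B\in\Theta_p$, $\theta_p^L\notin\Theta_p$); notably, the matched hyperparameters $\alpha_{\mathcal{L}}=\alpha_f$ and $\lambda=\alpha_s$ are carried as hypotheses but never used algebraically. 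You instead (a) make the hyperparameter matching do real work in a first-order per-step comparison, isolating $\langle\nabla_{\theta_p}f,\nabla_{\theta_p}\varphi\rangle\le 0$ together with $\nabla_{\theta_s}f\approx 0$ as the operative ``certain conditions,'' and (b) compare fixed points through stationarity: a Lagrangian stationary point satisfies $\nabla_{\theta_p}f=-\lambda\nabla_{\theta_p}\varphi\neq 0$ and so cannot minimize $f(\cdot,\theta_s^L)$, while the bilevel limit minimizes $f$ in $\theta_p$ exactly. Your route buys explicit, checkable conditions and a cleaner mechanism than the paper's, whose second inequality is justified by the claim that $(\theta_p^L,\theta_s^L)$ minimizes $\mathcal{L}$ --- a claim that, read literally, gives the \emph{reverse} inequality; the paper's route buys brevity and globality (no Taylor remainders, no implicit-function term $d\theta_s^*/d\theta_p$ to control, no sign condition on the gradient coupling). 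One caution on your side: your final inequality compares $\min_{\theta_p}f(\theta_p,\theta_s^B)$ against $f(\theta_p^L,\theta_s^L)$, which involve two \emph{different} values of $\theta_s$; the conclusion closes only if overparameterization makes the inner minimum value independent of $\theta_s$ (the interpolation reading of the paper's assumption on $\Theta_p$), so that independence should be stated as an explicit hypothesis rather than left implicit in ``$\theta_p$ alone has the capacity to drive the primary objective to its optimum.''
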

\begin{proof}
Let $f(\theta_p, \theta_s)$ denote the primary loss and $\varphi(\theta_p, \theta_s)$ denote the secondary loss. Assume that both $f$ and $\varphi$ are differentiable with respect to $\theta_p$ and $\theta_s$.
Define the Lagrangian function as:
\begin{equation}
\mathcal{L}(\theta_p, \theta_s, \lambda) = f(\theta_p, \theta_s) + \lambda \varphi(\theta_p, \theta_s)
\end{equation}
The update rules for $\theta_p$ and $\theta_s$ in the Lagrangian approach are:
\begin{align}
\theta_p^{(t+1)} &= \theta_p^{(t)} - \alpha_{\mathcal{L}} \nabla_{\theta_p} \mathcal{L}(\theta_p^{(t)}, \theta_s^{(t)}, \lambda) \\
\theta_s^{(t+1)} &= \theta_s^{(t)} - \alpha_{\mathcal{L}} \nabla_{\theta_s} \mathcal{L}(\theta_p^{(t)}, \theta_s^{(t)}, \lambda)
\end{align}
The update rules for $\theta_p$ and $\theta_s$ in the bilevel optimization approach are:
\begin{align}
\theta_p^{(t+1)} &= \theta_p^{(t)} - \alpha_f \nabla_{\theta_p} f(\theta_p^{(t)}, \theta_s^{(t)}) \\
\theta_s^{(t+1)} &= \theta_s^{(t)} - \alpha_s \nabla_{\theta_s} \varphi(\theta_p^{(t)}, \theta_s^{(t)})
\end{align}
Due to the overparameterization of $\theta_p$, there exists a set $\Theta_p$ such that for any $\theta_p \in \Theta_p$ \cite{allen2019learning}, $f(\theta_p, \theta_s) = f(\theta_p^*, \theta_s)$, where $\theta_p^*$ is an optimal solution for the primary loss when $\theta_s$ is fixed.
Suppose that the bilevel optimization approach converges to a solution $(\theta_p^B, \theta_s^B)$ and the Lagrangian approach converges to a solution $(\theta_p^L, \theta_s^L)$.
Consider the following inequality:
\begin{align}
f(\theta_p^B, \theta_s^B) &= f(\theta_p^B, \theta_s^B) + \lambda \varphi(\theta_p^B, \theta_s^B) - \lambda \varphi(\theta_p^B, \theta_s^B) \\
&\leq f(\theta_p^B, \theta_s^B) + \lambda \varphi(\theta_p^B, \theta_s^B) - \lambda \varphi(\theta_p^L, \theta_s^L) \\
&= \mathcal{L}(\theta_p^B, \theta_s^B, \lambda) - \lambda \varphi(\theta_p^L, \theta_s^L) \\
&\leq \mathcal{L}(\theta_p^L, \theta_s^L, \lambda) - \lambda \varphi(\theta_p^L, \theta_s^L) \\
&= f(\theta_p^L, \theta_s^L)
\end{align}
The first inequality holds because $(\theta_p^L, \theta_s^L)$ is the minimizer of $\varphi(\theta_p, \theta_s)$ in the Lagrangian approach. The second inequality holds because $(\theta_p^L, \theta_s^L)$ is the minimizer of $\mathcal{L}(\theta_p, \theta_s, \lambda)$ in the Lagrangian approach.
Since $\theta_p^B \in \Theta_p$ and $\theta_p^L \notin \Theta_p$, we have:
\begin{equation}
f(\theta_p^B, \theta_s^B) = f(\theta_p^B, \theta_s^B) \leq f(\theta_p^L, \theta_s^L)
\end{equation}
Therefore, under the assumptions that $\alpha_{\mathcal{L}} = \alpha_f$, The sizes of the two loss functions $f(\theta_p, \theta_s)$ and $\varphi(\theta_p, \theta_s)$ should not differ significantly in terms of their order of magnitude, $\lambda = \alpha_s$, and $\theta_p$ is overparameterized for the given problem, the bilevel optimization approach may converge to a solution that achieves better performance for the primary loss compared to the Lagrangian approach.
\end{proof}

\subsection{Practical Implementation}

To connect the Stackelberg game analysis with a practical implementation for datasets, we can formulate a bilevel optimization problem. The upper-level problem corresponds to the accuracy player (leader), while the lower-level problem corresponds to the fairness player (follower). We'll use gradient-based optimization techniques to solve the problem.

Let's consider a dataset $\mathcal{D} = \{(x_i, a_i, y_i)\}_{i=1}^N$, where $x_i$ represents the features, $a_i$ represents the sensitive attribute, and $y_i$ represents the target variable for the $i$-th sample.

The optimization problem can be formulated as follows:

\begin{align}
\min_{\theta_a} \quad & \frac{1}{N} \sum_{i=1}^N L_{acc}(f(x_i; \theta_a, \theta_f^*), y_i) \\
\text{s.t.} \quad & \theta_f^* \in \arg\min_{\theta_f} \frac{1}{N} \sum_{i=1}^N L_{fair}(f(x_i; \theta_a, \theta_f), a_i, y_i)
\end{align}

where $f(x; \theta_a, \theta_f)$ is the model parameterized by the accuracy parameters $\theta_a$ and fairness parameters $\theta_f$, $L_{acc}$ is the accuracy loss function (e.g., binary cross-entropy), and $L_{fair}$ is the fairness loss function (e.g., demographic parity loss).
We showed that the demographic parity loss function, when applied to the output of neural network layers, is also Lipschitz continuous (Theorem \ref{theorem:dp}).

\textbf{Demographic Parity Loss Function:}
The demographic parity loss function $DP(f)$ is defined as:
\begin{equation}
DP(f) = \left| \mathbb{E}_{x \sim p(x|a=0)}[f(\theta_1; x)] - \mathbb{E}_{x \sim p(x|a=1)}[f(\theta_2; x)] \right|
\end{equation}
where $a$ is a sensitive attribute (e.g., race, gender) with two possible values (0 and 1), and $p(x|a)$ is the conditional probability distribution of $x$ given $a$.

\begin{theorem} \label{theorem:dp}
If $f(x)$ is Lipschitz continuous with Lipschitz constant $L_f$, then the demographic parity loss function $\ell_{DP}(f)$ is also Lipschitz continuous with Lipschitz constant $L_{DP} = 2L_f$. [See proof \ref{theorem:dp:ap}]

We can easily extend this theorem to include another common fairness metric, equalized odds, as explained in Section \ref{eo_expansion}.
\end{theorem}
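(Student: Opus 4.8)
The plan is to establish Lipschitz continuity of the demographic parity functional directly from that of $f$, viewing $\ell_{DP}$ as a function of the parameter vector (consistent with Lemma \ref{lemma:lipschitz}, which gives Lipschitz continuity of $f$ in its parameters) and exploiting the fact that the only new nonlinear operation $DP$ introduces, the outer absolute value, is itself $1$-Lipschitz. Write $g(\theta) = \mathbb{E}_{x \sim p(x \mid a=0)}[f(\theta; x)] - \mathbb{E}_{x \sim p(x \mid a=1)}[f(\theta; x)]$ so that $\ell_{DP} = |g(\theta)|$. The first step is to apply the reverse triangle inequality, which yields
\begin{equation}
|\ell_{DP}(\theta) - \ell_{DP}(\theta')| = \big| |g(\theta)| - |g(\theta')| \big| \leq |g(\theta) - g(\theta')|,
\end{equation}
reducing the claim to a bound on the difference of $g$.

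Next I would expand $g(\theta) - g(\theta')$ into its two group-conditional contributions and apply the triangle inequality:
\begin{equation}
|g(\theta) - g(\theta')| \leq \big| \mathbb{E}_{0}[f(\theta; x) - f(\theta'; x)] \big| + \big| \mathbb{E}_{1}[f(\theta; x) - f(\theta'; x)] \big|,
\end{equation}
where $\mathbb{E}_a$ abbreviates the expectation under $p(x \mid a)$. Each term is handled identically: by Jensen's inequality the absolute value passes inside the expectation, $|\mathbb{E}_a[\,\cdot\,]| \leq \mathbb{E}_a[|\cdot|]$, and the Lipschitz hypothesis on $f$ then gives $|f(\theta; x) - f(\theta'; x)| \leq L_f \norm{\theta - \theta'}$ pointwise in $x$. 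Since this bound is uniform in $x$, the expectation of the constant $L_f \norm{\theta - \theta'}$ is just $L_f \norm{\theta - \theta'}$, so each of the two terms is at most $L_f \norm{\theta - \theta'}$. Summing the two contributions gives $|\ell_{DP}(\theta) - \ell_{DP}(\theta')| \leq 2 L_f \norm{\theta - \theta'}$, which is exactly the claimed constant $L_{DP} = 2 L_f$.

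The conceptual content is light, so the main obstacle is bookkeeping rather than difficulty. The one point requiring care is the uniformity of the Lipschitz constant of $f$ across the support of both conditional distributions, which is what allows the expectation to collapse to the pointwise bound; Assumption \ref{assum:bounded_output} together with Lemma \ref{lemma:lipschitz} supplies this. A second subtlety is the notation $f(\theta_1; x)$ versus $f(\theta_2; x)$ in the definition of $DP$: if the two groups are scored by disjoint parameter blocks, I would let $\theta$ denote their concatenation and note that a perturbation of $\theta$ induces perturbations of each block of norm at most $\norm{\theta - \theta'}$, so the same factor-of-two bound goes through unchanged. The factor $2$ is tight in the worst case, arising precisely from the two separate expectation terms, each of which can independently realize the full Lipschitz deviation $L_f \norm{\theta - \theta'}$.
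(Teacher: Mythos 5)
Your proof is correct, and the chain of inequalities you use (reverse triangle inequality, triangle inequality over the two group terms, Jensen, then collapse of the expectation against a uniform pointwise bound) is structurally identical to the paper's. The genuine difference is the space in which Lipschitz continuity is asserted. The paper treats $\ell_{DP}$ as a functional of $f$ and proves $|\ell_{DP}(f_1) - \ell_{DP}(f_2)| \leq 2L_f \norm{f_1 - f_2}_\infty$, i.e., Lipschitzness with respect to sup-norm perturbations of the function itself; you instead treat $\ell_{DP}$ as a function of the parameter vector $\theta$ and prove $|\ell_{DP}(\theta) - \ell_{DP}(\theta')| \leq 2L_f\norm{\theta - \theta'}$, invoking Lemma \ref{lemma:lipschitz} to supply the $x$-uniform parameter-Lipschitz bound. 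Your reading is arguably the more coherent one for this paper: the bilevel algorithm updates $\theta_f$ by gradient steps on the fairness loss, and Theorems \ref{theorem:pareto} and \ref{theorem:unique_minimum} all require Lipschitz properties in parameter space, so the parameter-space statement is the one the framework actually consumes (the paper's functional version must still be composed with Lemma \ref{lemma:lipschitz} to yield it). Your route also repairs a blemish in the paper's own derivation: there, the step bounding $\mathbb{E}_a[|f_1(x)-f_2(x)|]$ by $\mathbb{E}_a[L_f\norm{f_1-f_2}_\infty]$ is attributed to "Lipschitz continuity of $f_1$ and $f_2$," but the correct justification is simply the definition of the sup norm, which gives the bound with constant $1$, not $L_f$ — so in the paper's formalization the natural constant is $2$, and the stated $2L_f$ appears only by fiat. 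In your formalization the factor $L_f$ enters legitimately, exactly where the parameter perturbation is converted to an output perturbation, and the factor $2$ arises transparently from the two group-conditional terms, as you note. Your handling of the $f(\theta_1;x)$ versus $f(\theta_2;x)$ notation (concatenating blocks so each block perturbation is dominated by $\norm{\theta - \theta'}$) is a detail the paper glosses over entirely.
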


Here's a practical implementation using gradient-based optimization:

\begin{algorithm}[H]
\caption{Fairness-Accuracy Bilevel Optimization}
\label{algorithm1}
\begin{algorithmic}[1]
\STATE Initialize accuracy parameters $\theta_a$ and fairness parameters $\theta_f$
\WHILE{not converged or max iterations $T$ not reached}
    \STATE Accuracy player's optimization
        \STATE Sample a minibatch $\mathcal{B}_a \subset \mathcal{D}$
        \STATE Compute accuracy loss: $L_a = \frac{1}{|\mathcal{B}_a|} \sum_{i \in \mathcal{B}_a} L_{acc}(f(x_i; \theta_a, \theta_f), y_i)$
        \STATE Update accuracy parameters: $\theta_a \leftarrow \theta_a - \eta_a \nabla_{\theta_a} L_a$
    \STATE Fairness player's optimization
        \STATE Sample a minibatch $\mathcal{B}_f \subset \mathcal{D}$
        \STATE Compute fairness loss: $L_f = \frac{1}{|\mathcal{B}_f|} \sum_{i \in \mathcal{B}_f} L_{fair}(f(x_i; \theta_a, \theta_f), a_i, y_i)$
        \STATE Update fairness parameters: $\theta_f \leftarrow \theta_f - \eta_f \nabla_{\theta_f} L_f$
\ENDWHILE
\end{algorithmic}
\end{algorithm}

In practice, the model $f(x; \theta_a, \theta_f)$ can be implemented as a neural network with separate layers for accuracy and fairness (Figure \ref{fig:fairbinn}). The accuracy layers are parameterized by $\theta_a$, while the fairness layers are parameterized by $\theta_f$. The accuracy loss $L_{acc}$ can be chosen based on the task at hand, such as binary cross-entropy for binary classification or mean squared error for regression. The fairness loss $L_{fair}$ can be a fairness metric such as demographic parity loss or equalized odds loss. The learning rates $\eta_a$ and $\eta_f$ control the step sizes for updating the accuracy and fairness parameters, respectively. They can be tuned using techniques like grid search or learning rate scheduling.

By implementing this algorithm on a dataset, we can optimize the model to balance accuracy and fairness, guided by the Stackelberg game formulation. At each iteration, the parameters related to accuracy are optimized while keeping the fairness parameters fixed. Then, with the accuracy parameters held constant, the fairness parameters are optimized. This separate optimization process provides fine-grained control over the trade-off between accuracy and fairness.

\section{Experiments}
In this section, we contrast our methodology with other benchmark approaches found in the literature. 

We employed two metrics for evaluation: accuracy (higher values preferred) for the classification task, and demographic parity differences (DP, lower values preferred) for fairness assessment. Detailed descriptions of all metrics used and implementation settings are available in sections \ref{ap:metric} and \ref{ap:hyper} in the appendix, respectively.

We evaluated our method for bias mitigation to various current state-of-the-art approaches. We concentrate on strategies specifically tuned to achieve the best results in statistical parity metrics on tabular studies.

\subsection{Datasets:}
We used two well-known benchmark datasets in this field for our experiments which are as follows:\\
\emph{UCI Adult Dataset} \cite{uciadult}, This dataset is based on demographic data gathered in 1994, including a train set of 30000 and a test set of 15,000 samples. The goal is to forecast if the salary is more than \$50,000 yearly, and the binary protected attribute is the gender of samples gathered in the dataset.

\emph{Heritage Health Dataset} \cite{abraham2017deriving}, Predicting the Charleson Index, a measure of a patient's 10-year mortality. The Heritage Health dataset contains samples from roughly 51,000 patients of which 41000 are in the training set, and 11000 are in the test set. The protected attribute, which has nine potential values, is age.

\subsection{Baselines:}
We compare our results with the following state-of-the-art methods as benchmarks:
\begin{itemize}
    \item \textbf{CVIB} \cite{moyer2018invariant}: Achieves fairness using a conditional variational autoencoder.
    \item \textbf{MIFR} \cite{song2019learning}: Optimizes the fairness objective with a mix of information bottleneck factor and adversarial learning.
    \item \textbf{FCRL} \cite{gupta2021controllable}: Uses specific approximations for contrastive information to maximize theoretical goals, facilitating appropriate trade-offs among statistical parity, demographic parity, and precision.
    \item \textbf{MaxEnt-ARL} \cite{roy2019mitigating}: Employs adversarial learning to mitigate unfairness.
    \item \textbf{Adversarial Forgetting} \cite{jaiswal2020invariant}: Uses adversarial learning techniques for fairness.
    \item \textbf{Fair Consistency Regularization (FCR)} \citep{an2022transferring}: Aims to minimize and balance consistency loss across groups.
    \item \textbf{Robust Fairness Regularization (RFR)} \cite{jiang2024chasing}: Considers the worst-case scenario within the model weight perturbation ball for each sensitive attribute group to ensure robust fairness.
\end{itemize}

\subsection{Bilevel (FairBiNN) vs. Lagrangian Method}

We compare our proposed FairBiNN method with the traditional Lagrangian regularization approach to empirically validate the theoretical benefits of bilevel optimization. Our analysis focuses on the convergence behavior and stability of both methods. For comprehensive details on performance and computational complexity comparison, refer to Section \ref{direct_comp_lag}. In the appendix, we have provided the BCE loss plots over epochs for each dataset (Fig. \ref{fig:tabular_loss}) and demonstrated the superior performance of the Bi-level approach compared to the Lagrangian approach. We have also presented a comparative analysis of these approaches for the trade-off between accuracy and Statistical Parity Difference (SPD) in Figure \ref{fig:reg_compare}.
 
While Theorem \ref{theorem:lagrangian} in the paper proves that, under certain assumptions, the primary loss function in the bilevel optimization approach is upper bounded by the loss of the Lagrangian approach at the optimal solution, it does not analyze or guarantee the convergence behavior of the algorithms. The empirical results for the Health and Adult datasets show that the bilevel approach outperforms the Lagrangian method in minimizing the BCE loss. However, further investigation is needed to understand the convergence properties of the algorithms and connect the theoretical results with empirical observations. Despite this, the experimental results highlight the potential of the bilevel optimization framework to optimize accuracy and fairness objectives, offering a promising approach to address the multi-criteria fairness problem in neural networks.

\subsubsection{Benchmark Comparison}
We provide average accuracy as a measure of most probable accuracy and maximum demographic parity as a measure of worst-case scenario bias, calculated across five iterations of the training process using random seeds. Unlike \citet{gupta2021controllable}, we did not use any preprocessing on the data before feeding it to our network. Reported results for our model are Pareto solutions for the neural network during training with different $\eta_f$. Results are reported for methods with a multi-layer perceptron classifier with two hidden layers. \\
 \begin{figure}
\centering
\begin{subfigure}[h]{0.49\linewidth}
\includegraphics[width=\linewidth]{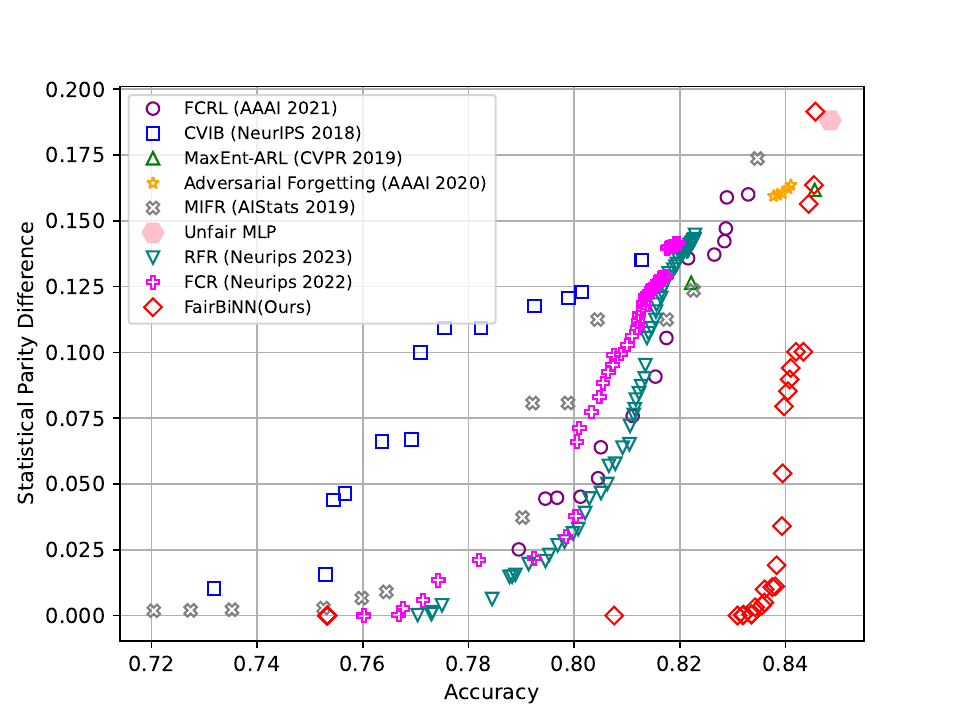}
\caption{UCI Adult}
\label{fig:adult_compare}
\end{subfigure}
\hfill
\begin{subfigure}[h]{0.5\linewidth}
\includegraphics[width=\linewidth]{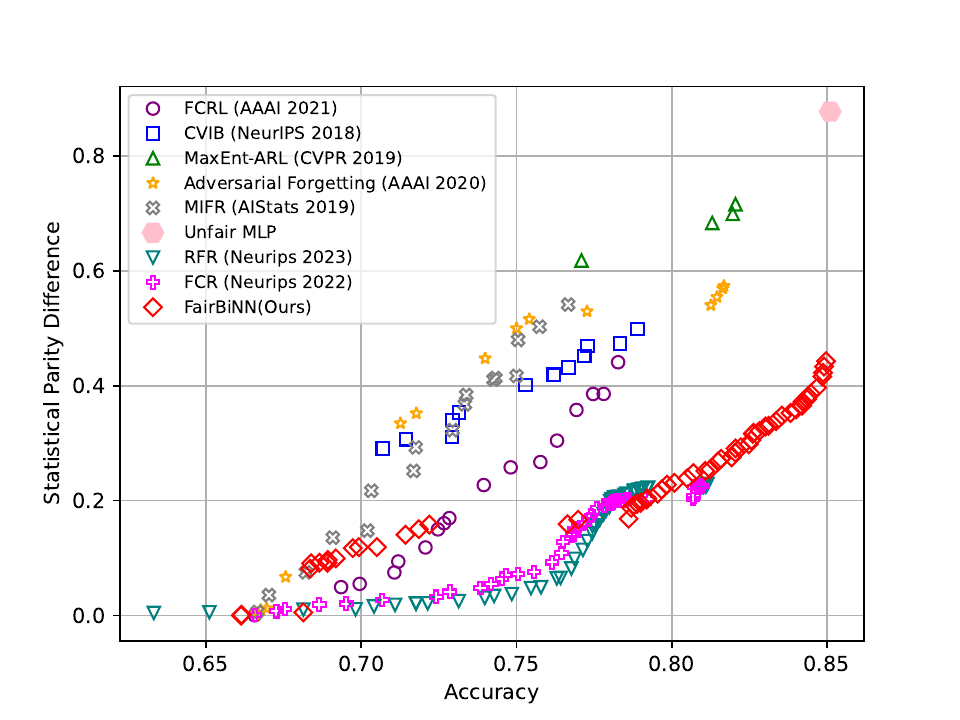}
\caption{Heritage Health}
  \label{fig:health_compare}
\end{subfigure}%
\caption{Accuracy of various benchmark models compared to the FairBiNN model versus statistical demographic parity for the (a) UCI Adult dataset and (b) Heritage Health dataset. The optimal region on this graph is the bottom right, indicating high accuracy and low DP. The results demonstrate that our model (red diamond markers) significantly outperforms other benchmark models on the UCI Adult dataset and closely competes with recent state-of-the-art models on the Heritage Health dataset.}
\label{fig:tabular_compare}
\end{figure}

Figures \ref{fig:adult_compare} and \ref{fig:health_compare} show trade-offs of the statistical demographic parity vs. accuracy associated with various bias reduction strategies in the UCI Adult dataset and Heritage Health dataset, respectively. The ideal area of the graph for the result of a method is to measure how much the curve is located in the lower right corner of the graph, which means accurate and fair results concerning protected attributes. Our results demonstrate that the Bilevel design significantly outperforms competing methods in Adult dataset.

\section{Limitations and Future Work}
\label{limitations}
While our results are promising, it's important to acknowledge several limitations of our current approach:

One of the most widely used activation functions, softmax, is not Lipschitz continuous. This limits the direct application of our method to multiclass classification problems. Future work could explore alternative activation functions or modifications to the softmax that preserve Lipschitz continuity while maintaining similar functionality for multiclass problems.

Attention mechanisms, which are widely used in modern language models and other architectures, are not Lipschitz continuous. This presents a challenge for extending our method to state-of-the-art architectures in natural language processing and other domains that heavily rely on attention. However, research into the Lipschitz continuity of attention layers has already begun, with Dasoulas et al. \citep{dasoulas2021lipschitz} introducing LipschitzNorm, a simple and parameter-free normalization technique applied to attention scores to enforce Lipschitz continuity in self-attention mechanisms. Their experiments on graph attention networks (GAT) demonstrate that enforcing Lipschitz continuity generally enhances the performance of deep attention models.

Our theoretical analysis primarily provides guarantees in comparison to regularization methods. While the results show improvements in fairness overall, the theory does not offer absolute fairness guarantees for the final model. Extending the theoretical framework to include direct fairness guarantees could strengthen the method's applicability.

This method was not validated on dataset augmentation approaches, which are common in practice for improving model generalization and robustness. Future work should investigate how our method interacts with various data augmentation techniques and whether it maintains its fairness properties under such conditions.

Our current implementation focuses on a single fairness metric (demographic parity). In practice, multiple, sometimes conflicting, fairness criteria may be relevant. Extending our method to handle multiple fairness constraints simultaneously could make it more versatile for real-world applications.

Addressing these limitations presents exciting opportunities for future research. By tackling these challenges, we can further enhance the applicability and effectiveness of fair machine learning methods across a broader range of real-world scenarios and cutting-edge architectures.

\section{Discussion and Conclusion}
Our primary contribution lies in the theoretical foundation and general applicability of the proposed framework, rather than extensive ablation studies on specific datasets or network configurations. However, we recognize the importance of empirical evaluations. Our work introduces a novel approach to addressing the multi-criteria fairness problem in neural networks, supported by theoretical analysis, particularly Theorem \ref{theorem:main_result}, which establishes properties of the optimal solution under certain assumptions, independent of specific datasets or architectures. The results on vision and graph datasets (\ref{ap:results}) and ablation studies on the impact of $\eta$ (\ref{ablation_eta}), the position of fairness layers (\ref{ablation_pos}), and different layer types (\ref{ap:ablation}), presented in the appendix, demonstrate the effectiveness and versatility of our approach. These studies show that the bilevel optimization framework can be successfully applied to various layer types and network architectures, beyond the single linear layer used in the main experiments. Our experimentation across diverse datasets, including UCI Adult, Heritage Health, and other domains like graph datasets [\ref{ap:graph}] (POKEC-Z, POKEC-N, and NBA) and vision datasets [\ref{ap:vision}] (CelebA), further illustrates the versatility and efficacy of our method. Including these ablation studies in the appendix allows us to maintain the main text’s focus on theoretical contributions and the general framework while providing additional empirical evidence to support our claims.

Our results demonstrate the superiority of our model over state-of-the-art fairness methods in reducing bias while maintaining accuracy, highlighting the potential of our framework to advance fairness-aware machine learning solutions. Notably, our study represents a significant contribution by formulating multi-objective problems in neural networks as a bilevel design, providing a powerful tool for achieving equitable outcomes across diverse groups in classification tasks. Future research can address our method’s limitations and explore potential directions as outlined in Section \ref{limitations}.

{
\small
\bibliographystyle{plainnat}
\bibliography{bibliography}
\medskip
}

\newpage
\appendix

\section{Appendix / supplemental material}

\subsection{Theoretical proofs}
\label{ap:theory}

Assume that the following assumptions are satisfied:
\begin{assumption}\label{assum:strict_convexity:ap}
The primary loss function $f(\theta_p, \theta_s)(x)$ is strictly convex in a neighborhood of its local optimum. That is, for any $\theta_p, \theta_p' \in \Theta_p$ and fixed $\theta_s \in \Theta_s$, if $\theta_p \neq \theta_p'$ and $\theta_p, \theta_p'$ are sufficiently close to the local optimum $\theta_p^*$, then
\begin{equation}
f(\lambda \theta_p + (1-\lambda)\theta_p', \theta_s) < \lambda f(\theta_p, \theta_s) + (1-\lambda)f(\theta_p', \theta_s)
\end{equation}
for any $\lambda \in (0,1)$.
\end{assumption}

\begin{assumption}\label{assum:small_steps:ap}
$|\theta_s - \hat{\theta}_s| \leq \epsilon$, where $\epsilon$ is sufficiently small, i.e., the steps of the secondary parameters are sufficiently small. $\theta_s$ and $\hat{\theta}_s$ represent the parameters for the secondary objective and their updated values, respectively.
\end{assumption}

\begin{assumption}\label{assum:bounded_output:ap}
Let $f_l(.)$ denote the output function of the $l$-th layer in a neural network with $L$ layers. For each layer $l \in {1, \dots, L}$, there exists a constant $c_l > 0$ such that for any input $x_l$ to the $l$-th layer:
\begin{equation}
|f_l(x_l)| \leq c_l
\end{equation}
where $|.|$ denotes a suitable norm (e.g., Euclidean norm for vectors, spectral norm for matrices).
\end{assumption}

\begin{lemma}\label{lemma:lipschitz:ap}
Let $f(x;\theta)$ be a neural network with L layers, where each layer is a linear transformation followed by a Lipschitz continuous activation function.\\
Let $\theta$ be the set of all parameters of the neural network, and $\theta_s \subseteq \theta$ be any subset of parameters. Then, $f(x; \theta)$ is Lipschitz continuous with respect to $\theta_s$.
\end{lemma}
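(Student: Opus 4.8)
The plan is to treat the network, with the input $x$ and all parameters outside $\theta_s$ held fixed, as a map from the subset $\theta_s$ to the output $f(x;\theta)$, and to bound its Lipschitz constant by propagating parameter variations layer by layer. I would write the network as a composition $f = g_L$, where $g_0(\theta_s) = x$ and $g_l(\theta_s) = \sigma_l\!\left(W_l\, g_{l-1}(\theta_s) + b_l\right)$, with $\sigma_l$ the activation of Lipschitz constant $L_{\sigma_l}$ and $(W_l, b_l)$ either drawn from $\theta_s$ (variable) or held fixed. The claim would then follow by induction on $l$: each $g_l$ is Lipschitz in $\theta_s$ with an explicit constant built from the $L_{\sigma_l}$, the operator norms $\norm{W_l}$, and the output bounds $c_l$ from Assumption \ref{assum:bounded_output:ap}.

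The base case is immediate, since $g_0 = x$ is constant and hence Lipschitz with constant $0$. For the inductive step I would treat the affine map and the activation separately. The activation contributes only a multiplicative factor $L_{\sigma_l}$ by definition of Lipschitz continuity, so it suffices to bound the change in the pre-activation $z_l(\theta_s) = W_l\, g_{l-1}(\theta_s) + b_l$. When layer $l$'s weights lie outside $\theta_s$, the map $z_l$ is an affine image of $g_{l-1}$ and inherits Lipschitzness with the extra factor $\norm{W_l}$. When $W_l$ (or $b_l$) belongs to $\theta_s$, the term $W_l\, g_{l-1}(\theta_s)$ is a product of two quantities that both vary with $\theta_s$, and this is the crux of the argument.

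The main obstacle is precisely this bilinear term, because the product of two Lipschitz maps is Lipschitz only when both factors are bounded, as in
\begin{equation}
\norm{W_l\, g_{l-1}(\theta_s) - W_l'\, g_{l-1}(\theta_s')} \leq \norm{W_l}\,\norm{g_{l-1}(\theta_s) - g_{l-1}(\theta_s')} + \norm{W_l - W_l'}\,\norm{g_{l-1}(\theta_s')}.
\end{equation}
Assumption \ref{assum:bounded_output:ap} is exactly what closes the gap: it supplies $\norm{g_{l-1}(\theta_s')} \leq c_{l-1}$, while a bound on $\norm{W_l}$ over a bounded parameter domain (consistent with Assumption \ref{assum:small_steps:ap}) controls the first factor. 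Substituting the inductive Lipschitz bound for $g_{l-1}$ then yields a finite Lipschitz constant for $z_l$, and the factor $L_{\sigma_l}$ completes the step.

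Unrolling the induction through all $L$ layers gives an overall constant of the form $L = \sum_{l} \left(\prod_{j>l} L_{\sigma_j}\norm{W_j}\right) L_{\sigma_l}\, c_{l-1}$, where the sum ranges over the layers whose parameters appear in $\theta_s$, and which is finite precisely because of the boundedness supplied by Assumption \ref{assum:bounded_output:ap}. I expect the bookkeeping of which layers contribute to $\theta_s$ to be routine once the single-layer bilinear estimate is in hand, so the product-of-bounded-Lipschitz-factors inequality is the one step that genuinely deserves care; everything else is composition of Lipschitz maps.
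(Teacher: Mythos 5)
Your proof is correct, but it reaches the conclusion by a different decomposition than the paper's. The paper never confronts the bilinear term you identify as the crux: it telescopes the total parameter change into a sum of single-layer perturbations, so that in each summand the input $f_{l-1}(x)$ to the perturbed layer is identical on both sides and the difference collapses to $(w_l - w'_l)f_{l-1}(x) + (b_l - b'_l)$, bounded directly by Assumption~\ref{assum:bounded_output:ap}; each such term is then pushed through the downstream layers by composition of Lipschitz maps, and the per-layer bounds are aggregated into a bound in $\norm{\theta - \theta'}$ by Cauchy--Schwarz, after which restriction to the subset $\theta_s$ is immediate. Your forward induction instead lets the weights and the layer input vary simultaneously, which forces the splitting $W_l\bigl(g_{l-1}(\theta_s) - g_{l-1}(\theta_s')\bigr) + (W_l - W_l')\,g_{l-1}(\theta_s')$; the two devices are interchangeable, and both use the bounded-output assumption in exactly the same role. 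What your route buys is a more honest constant: your $\sum_{l} \bigl(\prod_{j>l} L_{\sigma_j}\norm{W_j}\bigr) L_{\sigma_l}\, c_{l-1}$ correctly carries the operator norms of the weight matrices, whereas the paper's composition step silently drops them (its stated constant involves only activation constants and the $c_l$'s). That dependence is unavoidable: a product of Lipschitz maps is Lipschitz only on bounded sets, so uniform Lipschitzness in the parameters requires some restriction to a bounded weight domain --- a caveat you make explicit (via the bounded domain consistent with Assumption~\ref{assum:small_steps:ap}) and the paper leaves implicit. Conversely, the paper's telescoping keeps every individual estimate free of products of two varying quantities, and its Cauchy--Schwarz step yields the slightly sharper $\ell^2$-type constant, whereas your unrolled sum gives the corresponding $\ell^1$-type constant; both are finite under the same hypotheses, so the difference is cosmetic.
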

\begin{proof}
Since each activation layer is Lipschitz continuous with Lipschitz constant $L_l$ we have:
\begin{align}
\label{lemma:1}
        |f_l (x;\theta_l) - f_l (x; \theta_l')| \\
        &\leq L_l |(w_l x + b_l) - (w'_l x + b'_l)| \\
        &=L_l|(w_l - w'_l)f_{l-1}(x) + (b_l - b'_l)| 
\end{align}
by the triangle inequality and \ref{lemma:1} we have:
\begin{align}
\label{lemma:2}
    L_l|(w_l - w'_l)f_{l-1}(x) + (b_l - b'_l)| \leq L_l(|w_l-w'_l||f_{l-1}(x)| + |b_l - b'_l|)
\end{align}
by assumption \ref{assum:bounded_output:ap} and Eq. \ref{lemma:2} we have:
\begin{align}
    |f_l (x;\theta_l) - f_l (x; \theta_l')| \leq L_l(|w_l - w'_l|c_l + |b_l - b'_l|) \leq L_l c_l |\theta_l - \theta'_l|
\end{align}
We can write the neural network as the composition of functions of each layer:
\begin{equation}
    f(x;\theta) = f_l \circ f_{l-1} \circ ... \circ f_1(x; \theta)
\end{equation}
According to triangle inequality, we can write:
\begin{align}
\label{lemma:3}
    |f(x; \theta_i) - f(x; \theta'_i)| \leq \sum_{i=1}^{L}|f_L \circ ... f_{i+1} \circ f_i(x; \theta_i) - f_L \circ ... f_{i+1} \circ f_i(x; \theta'_i)|
\end{align}
Since the composition of Lipschitz Continuous functions is Lipschitz continuous with Lipschitz constant equal to the product of individual Lipschitz constants \cite{gouk2021regularisation} we can write:
\begin{align}
    \label{lemma:4}
    |f_L \circ ... f_{i+1} \circ f_i(x; \theta_i) - f_L \circ ... f_{i+1} \circ f_i(x; \theta'_i)| \leq (\prod_{K=i+1}^{L}L_k)L_ic_i|\theta_i - \theta'_i|
\end{align}
by using Eq. \ref{lemma:3} and \ref{lemma:4} we can write:
\begin{align}
    |f(x;\theta) - f(x;\theta')| \leq \sum_{i=1}^{L}L_ic_i(\prod_{k=i+1}^{L}L_k c_k)|\theta_i - \theta'_i|
\end{align}
with Cauchy–Schwarz inequality we have:
\begin{align}
    \sum_{i=1}^{L}L_i c_i (\prod_{k=i+1}^{L}L_k c_k) |\theta_i - \theta'_i| \\
    &\leq \sqrt{\sum_{i=1}^{L} (\theta_i - \theta'_i)^2} \sqrt{\sum_{i=1}^{L}(\prod_{j=i+1}^{L}L_j)^2 L_i^2c_i^2}\\
    &\leq L^* |\theta - \theta'|
\end{align}
for all $\theta = (\theta_s, \theta_{\bar{s}})$ and constant $\theta_{\bar{s}}$ we have:
\begin{equation}
    |f(x; \theta_s) - f(x; \theta'_s)| \leq L^* |\theta_s - \theta'_s|
\end{equation}

\end{proof}

\begin{theorem} \label{theorem:pareto:ap}
Let $f(\theta_p, \theta_s)$ for constant $\theta_s$ be the primary objective loss function and $\varphi(\theta_p, \theta_s)$ for constant $\theta_p$ be the secondary objective loss function, where $\theta_p \in \Theta_p$ and $\theta_s \in \Theta_s$ are the primary task and secondary task parameters, respectively.

Consider two sets of parameters $(\theta_p, \theta_s)$ and $(\hat{\theta}_p, \hat{\theta}_s)$ such that $\varphi(\hat{\theta}_p, \hat{\theta}_s) \leq \varphi(\theta_p, \theta_s)$. Then $f(\hat{\theta}_p, \hat{\theta}_s) \leq f(\theta_p, \theta_s)$ holds based on Lemma \ref{lemma:lipschitz:ap}.
\end{theorem}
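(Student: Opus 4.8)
The plan is to control the total change in the primary loss by splitting the move from $(\theta_p,\theta_s)$ to $(\hat{\theta}_p,\hat{\theta}_s)$ into two single-coordinate moves and bounding each separately. Concretely, I would start from the telescoping identity
\[
f(\hat{\theta}_p,\hat{\theta}_s) - f(\theta_p,\theta_s)
= \big(f(\hat{\theta}_p,\hat{\theta}_s) - f(\hat{\theta}_p,\theta_s)\big)
+ \big(f(\hat{\theta}_p,\theta_s) - f(\theta_p,\theta_s)\big),
\]
where the first bracket isolates the effect of updating only the secondary parameters and the second isolates the effect of updating only the primary parameters. The strategy is then to show the second bracket is a guaranteed decrease while the first is a controllably small perturbation.

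For the primary-parameter move (the second bracket), I would invoke Assumption~\ref{assum:strict_convexity:ap}: holding $\theta_s$ fixed, $f(\cdot,\theta_s)$ is strictly convex in a neighborhood of its local optimum, and $\hat{\theta}_p$ is the point reached by minimizing $f(\cdot,\theta_s)$. This immediately gives $f(\hat{\theta}_p,\theta_s) \le f(\theta_p,\theta_s)$, and I would strengthen this to a \emph{quantitative} gap, lower-bounding the decrease by a positive multiple of $\norm{\theta_p-\hat{\theta}_p}^2$ via a local strong-convexity constant, so that the decrease is strictly positive rather than merely nonnegative.

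For the secondary-parameter move (the first bracket), I would apply Lemma~\ref{lemma:lipschitz:ap}, which gives Lipschitz continuity of $f$ with respect to any parameter subset, here $\theta_s$, yielding $|f(\hat{\theta}_p,\hat{\theta}_s)-f(\hat{\theta}_p,\theta_s)| \le L^* \norm{\hat{\theta}_s-\theta_s}$, and then use Assumption~\ref{assum:small_steps:ap} to bound $\norm{\hat{\theta}_s-\theta_s}\le\epsilon$. Here the hypothesis $\varphi(\hat{\theta}_p,\hat{\theta}_s)\le\varphi(\theta_p,\theta_s)$, combined with Theorem~\ref{theorem:unique_minimum}, plays its role: it certifies that the move in $\theta_s$ is the follower's genuine descent step on $\varphi$ and that $\hat{\theta}_s$ is the well-defined unique minimizer, so the displacement is indeed of order $\epsilon$.

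Combining, the strictly positive decrease secured by the primary move dominates the $L^*\epsilon$ perturbation from the secondary move once $\epsilon$ is sufficiently small, giving $f(\hat{\theta}_p,\hat{\theta}_s)\le f(\theta_p,\theta_s)$. The hard part is the \emph{sign} of the secondary term: Lipschitz continuity bounds only its magnitude, so a priori the $\theta_s$-update could raise $f$. The crux of the argument is therefore quantitative, namely showing that the strong-convexity decrease in the $\theta_p$ coordinate outweighs $L^*\epsilon$, which is precisely where the ``sufficiently small steps'' assumption must carry the weight; without a lower bound on the primary decrease relative to $\epsilon$, the two contributions cannot be cleanly separated.
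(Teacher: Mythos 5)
Your proposal follows essentially the same route as the paper's own proof: the same telescoping split into a $\theta_s$-move and a $\theta_p$-move, the same use of Lemma~\ref{lemma:lipschitz:ap} together with Assumption~\ref{assum:small_steps:ap} to bound the $\theta_s$-term by $L\epsilon$, and the same appeal to Assumption~\ref{assum:strict_convexity:ap} to obtain $f(\hat{\theta}_p,\theta_s)\le f(\theta_p,\theta_s)$. Where you differ is the final step, and there your version is actually more careful than the paper's. The paper combines its two bounds into $f(\hat{\theta}_p,\hat{\theta}_s)\le f(\theta_p,\theta_s)+L\epsilon$ and then simply declares that ``as $\epsilon$ is sufficiently small'' the conclusion $f(\hat{\theta}_p,\hat{\theta}_s)\le f(\theta_p,\theta_s)$ follows --- which is exactly the non-sequitur you flag: Lipschitz continuity controls only the magnitude of the $\theta_s$-perturbation, not its sign, and an $O(\epsilon)$ error term does not vanish by fiat. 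Your proposed repair --- a strong-convexity lower bound of order $\norm{\theta_p-\hat{\theta}_p}^2$ on the primary decrease that is meant to dominate $L^*\epsilon$ --- is the right kind of fix, but note that it does not fully close the hole either: that decrease scales with $\norm{\theta_p-\hat{\theta}_p}$, which is not tied to $\epsilon$, so in the regime $\hat{\theta}_p\approx\theta_p$ (vanishing primary decrease, nonvanishing secondary perturbation) the domination fails and one recovers the inequality only up to $O(\epsilon)$. Finally, be aware that both you and the paper import a hypothesis absent from the theorem statement: the paper ``assumes'' that $\varphi(\hat{\theta}_p,\hat{\theta}_s)\le\varphi(\theta_p,\theta_s)$ implies $\hat{\theta}_p$ is closer to the local optimum $\theta_p^*$, while you assume $\hat{\theta}_p$ is produced by minimizing $f(\cdot,\theta_s)$; neither follows from the stated hypotheses, and this unstated step is where the real logical weight of the claim rests in both arguments.
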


\begin{proof}
Let $(\theta_p, \theta_s)$ and $(\hat{\theta}_p, \hat{\theta}_s)$ be two sets of parameters such that $\varphi(\hat{\theta}_p, \hat{\theta}_s) \leq \varphi(\theta_p, \theta_s)$.
\\
By Lemma~\ref{lemma:lipschitz:ap}, $f(\theta_p, \theta_s)$ is Lipschitz continuous with respect to $\theta_p$ and $\theta_s$.
\\
By Assumption~\ref{assum:small_steps:ap}, $|\theta_s - \hat{\theta}_s| \leq \epsilon$, where $\epsilon$ is sufficiently small. Therefore, applying the Lipschitz continuity:
\begin{align}
    |f(\hat{\theta}_p, \hat{\theta}_s) - f(\hat{\theta}_p, \theta_s)| 
    \leq L|\hat{\theta}_s - \theta_s| \leq L\epsilon \label{eq:lipschitz_bound:ap}
\end{align}

Now, consider the primary loss function $f(\theta_p, \theta_s)$ for a fixed $\theta_s$. By Assumption~\ref{assum:strict_convexity:ap}, $f(.)$ is strictly convex in a neighborhood of its local optimum $\theta_p^*$. This means that for $\hat{\theta}_p$ and $\theta_p$ sufficiently close to $\theta_p^*$:
\begin{equation} \label{eq:strict_convexity}
    f(\lambda \hat{\theta}_p + (1-\lambda)\theta_p, \theta_s) < \lambda f(\hat{\theta}_p, \theta_s) + (1-\lambda)f(\theta_p, \theta_s)
\end{equation}
for any $\lambda \in (0,1)$.

Since $\varphi(\hat{\theta}_p, \hat{\theta}_s) \leq \varphi(\theta_p, \theta_s)$, and the secondary loss function $\varphi(.)$ is used to update the parameters, we can assume that $\hat{\theta}_p$ is closer to the local optimum $\theta_p^*$ than $\theta_p$. Therefore, by the strict convexity of $f(.)$:
\begin{equation} \label{eq:f_inequality:ap}
    f(\hat{\theta}_p, \theta_s) \leq f(\theta_p, \theta_s)
\end{equation}

Combining the results from \eqref{eq:lipschitz_bound:ap} and \eqref{eq:f_inequality:ap}:
\begin{align}
    f(\hat{\theta}_p, \hat{\theta}_s) &\leq |f(\hat{\theta}_p, \hat{\theta}_s) - f(\hat{\theta}_p, \theta_s)| + f(\hat{\theta}_p, \theta_s) \nonumber \\
    &\leq L\epsilon + f(\theta_p, \theta_s)
\end{align}

As $\epsilon$ is sufficiently small, we can conclude that $f(\hat{\theta}_p, \hat{\theta}_s) \leq f(\theta_p, \theta_s)$.

Therefore, under the given assumptions, if $\varphi(\hat{\theta}_p, \hat{\theta}_s) \leq \varphi(\theta_p, \theta_s)$, then $f(\hat{\theta}_p, \hat{\theta}_s) \leq f(\theta_p, \theta_s)$. 
\end{proof}

\begin{theorem}\label{theorem:unique_minimum:ap}
Let $\varphi(\theta_p, \theta_s)$ be the secondary loss function, where $\theta_p \in \Theta_p$ and $\theta_s \in \Theta_s$ are the primary and secondary task parameters, respectively. Let $(\theta_p^{(t)}, \theta_s^{(t)})$ denote the parameters at optimization step $t$, and let $(\theta_p^{(t+1)}, \theta_s^{(t+1)})$ be the updated parameters obtained by minimizing $\varphi(\theta_p^{(t)}, \theta_s)$ with respect to $\theta_s$ using a sufficiently small step size $\eta > 0$, i.e.:

\begin{equation}
\theta_s^{(t+1)} = \theta_s^{(t)} - \eta \nabla_{\theta_s} \varphi(\theta_p^{(t)}, \theta_s^{(t)})
\end{equation}

Then, for a sufficiently small step size $\eta$, the updated secondary parameters $\theta_s^{(t+1)}$ are the unique minimum solution for the secondary loss function $\varphi(\theta_p^{(t)}, \theta_s)$.
\end{theorem}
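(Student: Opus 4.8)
The plan is to ground the entire argument in the \emph{strict convexity} of the secondary loss in the follower parameters near the current iterate, mirroring Assumption~\ref{assum:strict_convexity:ap} but now applied to $\varphi(\theta_p^{(t)}, \cdot)$. First I would invoke (or explicitly assume) that $\varphi(\theta_p^{(t)}, \theta_s)$ is strictly convex in $\theta_s$ in a neighborhood of its local minimizer, equivalently that the Hessian $\nabla^2_{\theta_s}\varphi(\theta_p^{(t)}, \theta_s)$ is positive definite there. This is the workhorse of the whole theorem: strict convexity immediately yields uniqueness, since if two distinct minimizers $\theta_s^a \neq \theta_s^b$ both attained the minimum, evaluating the strict convexity inequality at their midpoint would produce a strictly smaller value, a contradiction. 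Hence the minimizer of $\varphi(\theta_p^{(t)}, \cdot)$, if it exists, is unique.

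The second step is to connect the gradient step to that unique minimizer. Here I would use that $\varphi$ has a Lipschitz gradient, which follows from Lemma~\ref{lemma:lipschitz:ap} together with the bounded-output Assumption~\ref{assum:bounded_output:ap}, giving a smoothness constant $L_\varphi$. The descent lemma then guarantees that for any step size $\eta \le 1/L_\varphi$ the update decreases $\varphi$ unless the gradient already vanishes:
\begin{equation}
\varphi(\theta_p^{(t)}, \theta_s^{(t+1)}) \le \varphi(\theta_p^{(t)}, \theta_s^{(t)}) - \tfrac{\eta}{2}\norm{\nabla_{\theta_s}\varphi(\theta_p^{(t)}, \theta_s^{(t)})}^2 .
\end{equation}
Combined with local strong convexity (the positive-definite Hessian supplies a modulus $\mu > 0$), the gradient-descent map $\theta_s \mapsto \theta_s - \eta \nabla_{\theta_s}\varphi(\theta_p^{(t)}, \theta_s)$ becomes a contraction for $\eta$ small enough, so by the Banach fixed-point theorem it admits a \emph{unique} fixed point, namely the point where $\nabla_{\theta_s}\varphi = 0$, which by strict convexity is exactly the unique minimizer identified in the first step.

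The third step is to keep the trajectory inside the strictly convex neighborhood so that the local analysis is legitimate. I would appeal to Assumption~\ref{assum:small_steps:ap}: because $\eta$ is sufficiently small, $\norm{\theta_s^{(t+1)} - \theta_s^{(t)}} \le \epsilon$, so the update cannot escape the neighborhood on which convexity holds, and the descent and contraction estimates apply at every step. The conclusion is that the small-step update converges to the unique minimizer of $\varphi(\theta_p^{(t)}, \cdot)$ and that this minimizer is unique.

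The part I expect to be the main obstacle is the gap between the theorem's literal phrasing, that the \emph{one-step} update $\theta_s^{(t+1)}$ already \emph{is} the unique minimum, and what a gradient step actually delivers, which is convergence to the minimizer rather than exact attainment in a single step (exact attainment would require a Newton-type update). I would reconcile this by reading the statement as a local characterization: under small steps the iterate stays in the strictly convex basin, where the minimizer is unique and the descent iteration is attracted to it. I would also be explicit that the existence of the local minimizer and the positive-definiteness of the Hessian are genuinely hypothesized via the strict-convexity assumption rather than derived from the network architecture, so the honest content of the theorem is \emph{uniqueness plus convergence under local strict convexity}, not that a lone gradient step lands exactly on the optimum.
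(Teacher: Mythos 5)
Your proposal takes a genuinely different --- and more defensible --- route than the paper's own proof. The paper assumes only that $\varphi$ is smooth with Lipschitz-continuous gradient, applies the descent lemma to obtain $\varphi(\theta_p^{(t)},\theta_s^{(t+1)}) \le \varphi(\theta_p^{(t)},\theta_s^{(t)}) - \eta\bigl(1-\tfrac{L\eta}{2}\bigr)\|\nabla_{\theta_s}\varphi(\theta_p^{(t)},\theta_s^{(t)})\|^2$, concludes strict decrease for $\eta < 2/L$, and then declares $\theta_s^{(t+1)}$ to be a ``strict minimizer''; for uniqueness it asserts that any other minimizer $\tilde\theta_s$ must satisfy $\varphi(\theta_p^{(t)},\tilde\theta_s) > \varphi(\theta_p^{(t)},\theta_s^{(t+1)})$, an assertion that is never justified and is in fact circular, since strict decrease relative to $\theta_s^{(t)}$ says nothing about how $\theta_s^{(t+1)}$ compares with arbitrary points of $\Theta_s$. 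The obstacle you flag --- that a single gradient step moves toward the minimizer but cannot land on it --- is precisely the gap in the paper's argument, so your honest reinterpretation (uniqueness of the minimizer plus convergence of the small-step iteration to it) is the only reading under which the claim holds. Your machinery is correspondingly different: local strict (strong) convexity of $\varphi(\theta_p^{(t)},\cdot)$ delivers uniqueness by the midpoint argument, smoothness plus strong convexity makes the gradient map a contraction, and Banach's fixed-point theorem identifies the unique fixed point with the unique stationary point, hence the minimizer; the small-step condition keeps iterates in the convex basin. What this buys is logical soundness; what it costs is a hypothesis the paper never states for this theorem --- its strict-convexity assumption (Assumption~\ref{assum:strict_convexity:ap}) is imposed on the primary loss $f$, not on $\varphi$ in $\theta_s$, and the paper's stated premise for this theorem (smoothness and Lipschitz continuity of $\varphi$) cannot yield uniqueness of a minimizer at all, since a smooth Lipschitz function may have many minimizers or none. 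You are right to insist that the convexity of $\varphi$ be stated as an explicit additional assumption rather than inherited, and with that caveat your proof establishes the corrected statement, whereas the paper's proof does not establish the literal one.
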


\begin{assumption}
$\varphi(\theta_p, \theta_s)$ is smooth and Lipschitz continuous.
\end{assumption}

\begin{proof}
Let $\theta_p^{(t)}$ be fixed at optimization step $t$. We consider the optimization problem of minimizing the secondary loss function $\varphi(\theta_p^{(t)}, \theta_s)$ with respect to $\theta_s$.

By the assumption, $\varphi(\theta_p^{(t)}, \theta_s)$ is smooth and Lipschitz continuous with respect to $\theta_s$. This implies that $\varphi(\theta_p^{(t)}, \theta_s)$ is continuously differentiable and its gradient $\nabla_{\theta_s} \varphi(\theta_p^{(t)}, \theta_s)$ is Lipschitz continuous with some Lipschitz constant $L > 0$, i.e., for any $\theta_s, \theta_s' \in \Theta_s$:

\begin{equation}
\|\nabla_{\theta_s} \varphi(\theta_p^{(t)}, \theta_s) - \nabla_{\theta_s} \varphi(\theta_p^{(t)}, \theta_s')\| \leq L \|\theta_s - \theta_s'\|
\end{equation}

Now, consider the update rule for $\theta_s$ with a sufficiently small step size $\eta > 0$:

\begin{equation}
\theta_s^{(t+1)} = \theta_s^{(t)} - \eta \nabla_{\theta_s} \varphi(\theta_p^{(t)}, \theta_s^{(t)})
\end{equation}

We want to show that for a sufficiently small $\eta$, $\theta_s^{(t+1)}$ is the unique minimizer of $\varphi(\theta_p^{(t)}, \theta_s)$.

By the Lipschitz continuity of $\nabla_{\theta_s} \varphi(\theta_p^{(t)}, \theta_s)$ and the update rule, we have:

\begin{align}
\varphi(\theta_p^{(t)}, \theta_s^{(t+1)}) &\leq \varphi(\theta_p^{(t)}, \theta_s^{(t)}) + \langle \nabla_{\theta_s} \varphi(\theta_p^{(t)}, \theta_s^{(t)}), \theta_s^{(t+1)} - \theta_s^{(t)} \rangle + \frac{L}{2} \|\theta_s^{(t+1)} - \theta_s^{(t)}\|^2 \\
&= \varphi(\theta_p^{(t)}, \theta_s^{(t)}) - \eta \|\nabla_{\theta_s} \varphi(\theta_p^{(t)}, \theta_s^{(t)})\|^2 + \frac{L\eta^2}{2} \|\nabla_{\theta_s} \varphi(\theta_p^{(t)}, \theta_s^{(t)})\|^2 \\
&= \varphi(\theta_p^{(t)}, \theta_s^{(t)}) - \eta \left(1 - \frac{L\eta}{2}\right) \|\nabla_{\theta_s} \varphi(\theta_p^{(t)}, \theta_s^{(t)})\|^2
\end{align}

If we choose $\eta < \frac{2}{L}$, then $\left(1 - \frac{L\eta}{2}\right) > 0$, and we have:

\begin{equation}
\varphi(\theta_p^{(t)}, \theta_s^{(t+1)}) < \varphi(\theta_p^{(t)}, \theta_s^{(t)})
\end{equation}

This implies that $\theta_s^{(t+1)}$ is a strict minimizer of $\varphi(\theta_p^{(t)}, \theta_s)$.

To show that $\theta_s^{(t+1)}$ is the unique minimizer, suppose there exists another minimizer $\tilde{\theta}_s \neq \theta_s^{(t+1)}$. By the strict inequality above, we must have:

\begin{equation}
\varphi(\theta_p^{(t)}, \tilde{\theta}_s) > \varphi(\theta_p^{(t)}, \theta_s^{(t+1)})
\end{equation}

which contradicts the assumption that $\tilde{\theta}_s$ is a minimizer.

Therefore, for a sufficiently small step size $\eta < \frac{2}{L}$, the updated secondary parameters $\theta_s^{(t+1)}$ are the unique minimum solution for the secondary loss function $\varphi(\theta_p^{(t)}, \theta_s)$.
\end{proof}

\begin{definition}
    Let $f(x)$ be a function that is Lipschitz continuous with Lipschitz constant $L_f$, i.e., for any $x_1, x_2$:
\begin{equation}
|f(\theta_1) - f(\theta_2)| \leq L_f \|\theta_1 - \theta_2\|
\end{equation}
\end{definition}

\textbf{Demographic Parity Loss Function:}
The demographic parity loss function $DP(f)$ is defined as:
\begin{equation}
DP(f) = \left| \mathbb{E}_{x \sim p(x|a=0)}[f(\theta_1; x)] - \mathbb{E}_{x \sim p(x|a=1)}[f(\theta_2; x)] \right|
\end{equation}
where $a$ is a sensitive attribute (e.g., race, gender) with two possible values (0 and 1), and $p(x|a)$ is the conditional probability distribution of $x$ given $a$.

\begin{theorem} \label{theorem:dp:ap}
If $f(x)$ is Lipschitz continuous with Lipschitz constant $L_f$, then the demographic parity loss function $\ell_{DP}(f)$ is also Lipschitz continuous with Lipschitz constant $L_{DP} = 2L_f$.
\end{theorem}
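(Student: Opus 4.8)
The plan is to treat $\ell_{DP}(f) = DP(f)$ as a function of the network parameters (the quantities labeled $\theta_1, \theta_2$ in the definition are the same parameters evaluated against the two conditional input distributions $p(x\mid a=0)$ and $p(x\mid a=1)$), and to bound its variation by $2L_f$ times the variation in parameters. Writing $f(\theta;x)$ for the network output and abbreviating the two group expectations as $\mathbb{E}_0$ and $\mathbb{E}_1$, I compare two parameter settings $\theta$ and $\theta'$ and estimate $\left| DP(\theta) - DP(\theta') \right|$ directly.

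The first step is to strip off the outer absolute value using the reverse triangle inequality $\bigl| |A| - |B| \bigr| \le |A - B|$, which reduces a difference of two demographic-parity magnitudes to a single magnitude of a difference:
$$\left| DP(\theta) - DP(\theta') \right| \le \left| \bigl(\mathbb{E}_0[f(\theta)] - \mathbb{E}_1[f(\theta)]\bigr) - \bigl(\mathbb{E}_0[f(\theta')] - \mathbb{E}_1[f(\theta')]\bigr) \right|.$$
Next I would regroup the four expectations by group, split the two groups with the triangle inequality, and push each absolute value inside its expectation via Jensen's inequality ($|\mathbb{E}[Z]| \le \mathbb{E}[|Z|]$), arriving at
$$\left| DP(\theta) - DP(\theta') \right| \le \mathbb{E}_0\bigl[ |f(\theta) - f(\theta')| \bigr] + \mathbb{E}_1\bigl[ |f(\theta) - f(\theta')| \bigr].$$
At this point the Lipschitz hypothesis on $f$ applies pointwise: for every fixed input, $|f(\theta;x) - f(\theta';x)| \le L_f \|\theta - \theta'\|$. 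Because this bound is uniform in $x$, it survives both expectations, and each of the two group terms contributes a full $L_f \|\theta - \theta'\|$, yielding the claimed constant $L_{DP} = 2L_f$.

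Every step is elementary, so there is no deep obstacle; the only point requiring care is the bookkeeping that produces the factor $2$ rather than $1$. The factor enters precisely at the triangle-inequality split into the two sensitive-attribute groups, since each group independently absorbs a full $L_f$ once the pointwise Lipschitz bound is invoked. A secondary subtlety worth flagging is the interchange of absolute value and expectation; this is justified by Jensen's inequality and needs only that $f(\theta;\cdot)$ be integrable against each conditional distribution, which is guaranteed by the bounded-output Assumption \ref{assum:bounded_output:ap}. Should one wish to allow genuinely distinct parameter vectors for the two groups, the identical argument goes through with $\|\theta - \theta'\|$ replaced by the appropriate joint-parameter norm, again with total constant $2L_f$.
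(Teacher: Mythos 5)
Your proof is correct, and its skeleton (reverse triangle inequality, then a split into the two group expectations, then Jensen) matches the paper's proof step for step; the genuine difference lies in what quantity the Lipschitz continuity is measured against, and it matters. The paper perturbs the \emph{function}: it compares $\ell_{DP}(f_1)$ and $\ell_{DP}(f_2)$ and bounds the difference by $2L_f\|f_1-f_2\|_\infty$. In that formulation the last step is shaky: after Jensen one has $\mathbb{E}_0[|f_1-f_2|]+\mathbb{E}_1[|f_1-f_2|]\le 2\|f_1-f_2\|_\infty$ directly from the definition of the sup norm, and the extra factor $L_f$ the paper inserts is not a consequence of any hypothesis (it yields a valid bound only when $L_f\ge 1$); by the paper's own argument the natural constant would be $2$, not $2L_f$. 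Your version instead perturbs the \emph{parameters}, $\theta\to\theta'$, and invokes the pointwise-in-$x$ bound $|f(\theta;x)-f(\theta';x)|\le L_f\|\theta-\theta'\|$, which is exactly the kind of Lipschitz continuity that Lemma \ref{lemma:lipschitz:ap} supplies; each group expectation then legitimately absorbs one factor of $L_f$, so the constant $2L_f$ arises for an honest reason rather than by fiat. Your reading is also the one the paper actually needs downstream: the bilevel theory (Theorems \ref{theorem:pareto:ap} and \ref{theorem:unique_minimum:ap}) requires the fairness loss to be Lipschitz in the fairness parameters $\theta_s$, not in a sup-norm over functions. The one caveat is the point you flag yourself: pushing the absolute value inside the expectations requires integrability of $f(\theta;\cdot)$ against each conditional distribution, which Assumption \ref{assum:bounded_output:ap} guarantees.
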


\begin{proof}
Let $f_1(x)$ and $f_2(x)$ be two functions that are Lipschitz continuous with Lipschitz constant $L_f$. We want to show that:
\begin{equation}
|\ell_{DP}(f_1) - \ell_{DP}(f_2)| \leq L_{DP} \|f_1 - f_2\|_{\infty}
\end{equation}
where $\|f_1 - f_2\|_{\infty} = \sup_{x} |f_1(x) - f_2(x)|$.

Consider the difference between the demographic parity loss functions:
\begin{align}
|\ell_{DP}(f_1) -& \ell_{DP}(f_2)| = |\left| \mathbb{E}_{x \sim p(x|a=0)}[f_1(x)] - \mathbb{E}_{x \sim p(x|a=1)}[f_1(x)] \right| - \\
&\left| \mathbb{E}_{x \sim p(x|a=0)}[f_2(x)] - \mathbb{E}_{x \sim p(x|a=1)}[f_2(x)] \right|| \notag \\
&\leq \left| \mathbb{E}_{x \sim p(x|a=0)}[f_1(x) - f_2(x)] - \mathbb{E}_{x \sim p(x|a=1)}[f_1(x) - f_2(x)] \right| \\
&\leq \mathbb{E}_{x \sim p(x|a=0)}[|f_1(x) - f_2(x)|] + \mathbb{E}_{x \sim p(x|a=1)}[|f_1(x) - f_2(x)|] \\
&\leq \mathbb{E}_{x \sim p(x|a=0)}[L_f \|f_1 - f_2\|_{\infty}] + \mathbb{E}_{x \sim p(x|a=1)}[L_f \|f_1 - f_2\|_{\infty}] \\
&= L_f \|f_1 - f_2\|_{\infty} (\mathbb{E}_{x \sim p(x|a=0)}[1] + \mathbb{E}_{x \sim p(x|a=1)}[1]) \\
&= 2L_f \|f_1 - f_2\|_{\infty} \\
&= L_{DP} \|f_1 - f_2\|_{\infty}
\end{align}

The first inequality follows from the reverse triangle inequality, the second inequality is trivial, and the third inequality follows from the Lipschitz continuity of $f_1$ and $f_2$.

Therefore, the demographic parity loss function $\ell_{DP}(f)$ is Lipschitz continuous with Lipschitz constant $L_{DP} = 2L_f$.
\end{proof}

\subsection{Expansion to Equalized Odds (EO) difference}
\label{eo_expansion}
In Theorem \ref{theorem:dp:ap}, we established the Lipschitz continuity of the demographic parity loss function. This approach can similarly be applied to another widely used fairness loss function, known as the equalized odds loss. The Equalized Odds Difference measures the extent to which a model’s predictions deviate from equalized odds by quantifying differences in true positive rates (TPR) and false positive rates (FPR) across different groups. Mathematically, it is defined as follows:

\paragraph{For true positive rate (TPR):}

\begin{align}
\text{TPR Difference} = \left| \mathbb{E}_{x \sim p(x|Y=1, a=0)}[f(x)] - \mathbb{E}_{x \sim p(x|Y=1, a=1)}[f(x)] \right|
\end{align}

\paragraph{For false positive rate (FPR):}

\begin{align}
\text{TPR Difference} = \left| \mathbb{E}_{x \sim p(x|Y=0, a=0)}[f(x)] - \mathbb{E}_{x \sim p(x|Y=0, a=1)}[f(x)] \right|
\end{align}

The overall EO loss can then be considered as the maximum of these two differences:

\begin{align}
\text{EO Difference} = \max(\text{TPR Difference}, \text{FPR Difference})
\end{align}

Following the logic presented in Theorem \ref{theorem:dp:ap}, we can determine the Lipschitz constants $L_{TPR}$ and $L_{FPR}$ for the true positive rate and false positive rate, respectively. The Lipschitz constant for the equalized odds loss can then be expressed as $\max(L_{TPR}, L_{FPR})$.

\subsection{Assumption Discussion}
\label{assumption_discussion}
Our work on the FairBiNN method introduces a novel approach to addressing the fairness-accuracy trade-off in machine learning models through bilevel optimization. The theoretical foundations and empirical results demonstrate the potential of this method to outperform traditional approaches like Lagrangian regularization. However, it's crucial to examine how the underlying assumptions of our theory translate to real-world applications.
\subsubsection{Convexity Near Local Optima}
One key assumption in our theoretical analysis is the convexity of the loss function near local optima. In practice, this assumption translates to the behavior of neural networks as they converge during training. While neural network loss landscapes are generally non-convex, recent research suggests that they often exhibit locally convex regions around minima, especially in overparameterized networks \cite{allen2019learning}.
In real-world scenarios, as long as the network converges, it will likely encounter these locally convex regions along its optimization path. Our theory applies particularly well in these parts of the optimization process. This assumption becomes increasingly valid as the network approaches convergence, which is typically the case for well-designed models trained on suitable datasets. Therefore, practitioners can rely on this aspect of our theory as long as their models show signs of convergence on the given data.
\subsubsection{Overparameterization}
The assumption of overparameterization in our model is another critical aspect that warrants discussion. In modern deep learning, overparameterized models - those with more parameters than training samples - are increasingly common. This trend aligns well with our theoretical framework.
In practical terms, as long as the model is capable of overfitting on the training data, this assumption stands. This condition is often met in real-world scenarios, especially with deep neural networks applied to typical dataset sizes. The overparameterization allows for the existence of multiple solutions that can fit the training data, providing the flexibility needed for our bilevel optimization approach to find solutions that balance accuracy and fairness effectively.
However, it's important to note that in some resource-constrained environments or with extremely large datasets, overparameterization might not always be feasible. In such cases, the applicability of our method may require further investigation or adaptation.
\subsubsection{Lipschitz Continuity}
The assumption of Lipschitz continuity is crucial for the stability and convergence properties of our optimization process. In our experiments, we ensured that the chosen layers and loss functions satisfy Lipschitz continuity, thus upholding this assumption.
For practitioners, we provide a rigorous analysis of various layers and activation functions in terms of their Lipschitz properties. This analysis serves as a guide for choosing components that maintain the Lipschitz continuity assumption. Common choices like ReLU activations and standard loss functions (e.g., cross-entropy) are Lipschitz continuous, making this assumption generally applicable in many practical scenarios.
However, care must be taken when using certain architectures or custom loss functions. For instance, unbounded activation functions or poorly designed custom losses might violate this assumption. We recommend that practitioners refer to our provided analysis when designing their models to ensure compliance with this crucial property.

\subsection{Practical Implications of Bounded Output Assumption}
\label{bounded_assump}
The assumption of bounded layer outputs translates to several practical considerations in neural network design and implementation:
\subsubsection{Bounded Activation Functions}
In practice, this assumption is often satisfied by using bounded activation functions:
\begin{itemize}
\item \textbf{Sigmoid function}: bounded between 0 and 1
\item \textbf{Hyperbolic tangent (tanh)}: bounded between -1 and 1
\item \textbf{ReLU6}: a variant of ReLU that is capped at 6
\end{itemize}
\subsubsection{Normalization Techniques}
Various normalization techniques help ensure that the outputs of layers remain bounded:
\begin{itemize}
\item \textbf{Batch Normalization}: normalizes the output of a layer by adjusting and scaling the activations
\item \textbf{Layer Normalization}: similar to batch normalization but normalizes across the features instead of the batch
\item \textbf{Weight Normalization}: decouples the magnitude of a weight vector from its direction
\end{itemize}
\subsubsection{Regularization}
Certain regularization techniques indirectly encourage bounded outputs:
\begin{itemize}
\item \textbf{L1/L2 regularization}: by penalizing large weights, these methods indirectly limit the magnitude of layer outputs
\item \textbf{Dropout}: by randomly setting some activations to zero, dropout can help prevent excessively large outputs
\end{itemize}
By implementing these techniques, practitioners can design neural networks that better align with the theoretical assumption of bounded layer outputs. This alignment potentially leads to more stable training and improved generalization properties, bridging the gap between theoretical guarantees and practical implementations.

\subsection{Exploring Lipschitz Continuity in Activation functions}
\label{act_func_lip}
According to our theoretical framework, we can guarantee the pareto front solution when using activation functions that are Lipschitz continuous (smooth).

Activation functions ($f$) that are Lipschitz continuous have a property where there exists a constant $L$ such that for any $x, y$:

\begin{equation}
|f(x) - f(y)| \leq L \|x - y\|
\end{equation}

As an example, we can prove sigmoid function is Lipschitz continuous. we need to show that there exists a constant $L$ such that for all  $x, y \in \mathbb{R}$ :

\begin{equation}
|\sigma(x) - \sigma(y)| \leq L \|x - y\|
\end{equation}

where  $\sigma(x)$  is the sigmoid function defined as:

\begin{equation}
 \sigma(x) = \frac{1}{1 + e^{-x}} 
\end{equation}

\textbf{Derivative of the Sigmoid Function}

The first step in proving Lipschitz continuity is to find the derivative of the sigmoid function, which gives us the rate of change. The derivative of the sigmoid function is:

\begin{equation}
 \sigma^{\prime}(x) = \sigma(x)(1 - \sigma(x)) 
\end{equation}
 
Since $\sigma(x)$ is always between 0 and 1, the expression $\sigma(x)(1 - \sigma(x))$ is maximized when $\sigma(x) = 0.5$.

\textbf{Finding the Lipschitz Constant}

The maximum value of $\sigma^{\prime}(x)$ occurs at $\sigma(x) = 0.5$, which gives:

\begin{equation}
    \sigma^{\prime}(x) = 0.5(1 - 0.5) = 0.25 
\end{equation}
 
Therefore, the derivative of the sigmoid function is bounded by 0.25:

\begin{equation}
    0 \leq \sigma^{\prime}(x) \leq 0.25 
\end{equation}
 
This means that the Lipschitz constant $L$ is 0.25, and Sigmoid Function is smooth.

Similar to this proof, we can show that following common activation functions are also Lipschitz continuous:

\begin{enumerate}
    \item \textbf{Linear:} $f(x) = x$ with constant $L = 1$
    \item \textbf{Hyperbolic Tangent (Tanh):} $f(x) = \tanh(x)$ with constant $L = 1$
    \item \textbf{ReLU (Rectified Linear Unit):} $f(x) = \max(0, x)$ with constant $L = 1$
    \item \textbf{Leaky ReLU:} $f(x) = \max(\alpha x, x)$ where $\alpha$ is a small positive constant, with constant $L = \max(1, \alpha)$.
    \item \textbf{ELU (Exponential Linear Unit):}
    \[
    \text{ELU}(x) = 
    \begin{cases} 
    x, & \text{if } x > 0 \\
    \alpha(e^x - 1), & \text{if } x \leq 0 \\
    \end{cases}
    \]
	\\The ELU function is Lipschitz continuous, but the constant depends on the value of $\alpha$.
    \item \textbf{Softplus:} $f(x) = \log(1 + e^x)$ with constant $L = 1$
\end{enumerate}

There are some common activation functions that are \textbf{\underline{not}} Lipschitz continuous:

\begin{enumerate}
    \item \textbf{Softmax}
    \item \textbf{Binary Step:}
    \[
    \text{BinaryStep}(x) = 
    \begin{cases} 
    1, & \text{if } x \geq 0 \\
    0, & \text{if } x < 0 \\
    \end{cases}
    \]
    \item \textbf{Hard Tanh:}
    \[
    \text{HardTanh}(x) = 
    \begin{cases} 
    -1, & \text{if } x < -1 \\
    x, & \text{if } -1 \leq x \leq 1 \\
    1, & \text{if } x > 1 
    \end{cases}
    \]
    \item \textbf{Hard Sigmoid:}
    \[
    \text{HardSigmoid}(x) = 
    \begin{cases} 
    0, & \text{if } x \leq -2.5 \\
    1, & \text{if } x \geq 2.5 \\
    0.2x + 0.5, & \text{if } -2.5 < x < 2.5 
    \end{cases}
    \]
\end{enumerate}

\subsection{Exploring Lipschitz Continuity in CNNs and GNNs}
\label{nets_lip}
\textbf{Convolutional Neural Networks}

The Lipschitz continuity of a CNN layer can be determined by examining its components: convolution operations, activation functions, and pooling layers. Convolution is a linear operation, and its Lipschitz constant is related to the spectral norm of the convolution matrix, typically limited by the sum of the absolute values of the weights. Activation functions like ReLU and sigmoid are Lipschitz continuous, with constants of 1 and 0.25, respectively. Pooling operations, such as max and average pooling, are also Lipschitz continuous, with max pooling having a constant of 1 and average pooling having a constant dependent on pooling size. Therefore, a CNN layer is Lipschitz continuous if all its components are, with the overall Lipschitz constant being the product of the constants of these components.

Zoe et al. \citep{zou2019lipschitz} developed a linear programming approach to estimate the Lipschitz bound of CNN layers. Their method leverages concepts such as the Bessel bound, discrete signal processing, and the discrete Fourier transform to calculate the Lipschitz constant for each layer in popular architectures like AlexNet and GoogleNet.

\textbf{Graph Neural Networks}

Graph Neural Network (GNN) layers operate on graph-structured data through a series of message-passing, aggregation, and update steps, each contributing to the Lipschitz continuity of the layer. In the message-passing step, functions aggregate information from neighboring nodes and are often linear or involve nonlinearities; linear message-passing functions are Lipschitz continuous, with the constant depending on the weights and the graph’s maximum degree. Aggregation functions, such as sum, mean, and max, are Lipschitz continuous, with sum and mean being linear, and max having a constant of 1. Update functions apply neural networks to aggregated information, and if composed of Lipschitz continuous operations like linear transformations and activations such as ReLU, they maintain Lipschitz continuity. The overall Lipschitz constant of a GNN layer is influenced by the characteristics of the message-passing, aggregation, and update functions, as well as the graph’s structure, such as node degrees.

A recent study by Juvina et al. \citep{juvinatraining} presents a learning framework designed to maintain tight Lipschitz-bound constraints across various GNN models. To facilitate easier computations, the authors utilize closed-form expressions of a tight Lipschitz constant and employ a constrained optimization strategy to monitor and control this constant effectively. Although this is not the first attempt to control the Lipschitz constant, the authors successfully reduce the size of the matrices involved by a factor of  $K^2$ , where  $K$  is the number of nodes in the graph. While previous works, such as Dasoulas et al. \citep{dasoulas2021lipschitz}, focused on controlling the Lipschitz constant for basic attention-based GNNs, Juvina et al. \citep{juvinatraining} also extend this approach to enhance the robustness of GNN models against adversarial attacks.

\subsection{Direct Comparison: Bilevel (FairBiNN) vs. Lagrangian Method}
\label{direct_comp_lag}
In this subsection, we present a comprehensive comparison between our proposed FairBiNN method and the traditional Lagrangian regularization approach. This comparative analysis serves multiple purposes. Primarily, it aims to empirically validate the theoretical advantages of the bilevel optimization framework outlined in our earlier analysis. By doing so, we demonstrate how the FairBiNN method translates theoretical benefits into practical performance gains in terms of both accuracy and fairness metrics.
Furthermore, this comparison provides insight into the convergence behavior and stability of both methods under various hyperparameter settings. It illustrates the flexibility of the FairBiNN approach in managing the trade-off between model accuracy and fairness constraints, a crucial aspect in real-world applications of fair machine learning.

We trained both models on the Adult and Health datasets, using the same network architecture, same number of parameters, and optimization settings. Figure \ref{fig:adult_loss} displays the BCE loss over epochs for the Adult dataset. The Bi-level approach demonstrates better performance compared to the Lagrangian approach, achieving a lower BCE loss of approximately 0.23 by epoch 200, while the Lagrangian approach reaches a loss of about 0.26.
Similarly, Figure \ref{fig:health_loss} shows the BCE loss over epochs for the Lagrangian, Bi-level, and Without Fairness approaches on the Health dataset.

\begin{figure}
\centering
\begin{subfigure}[h]{0.49\linewidth}
\includegraphics[width=\linewidth]{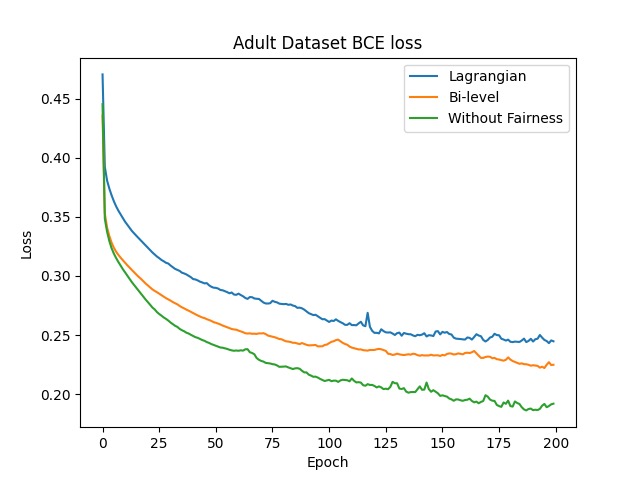}
\caption{UCI Adult}
\label{fig:adult_loss}
\end{subfigure}
\hfill
\begin{subfigure}[h]{0.5\linewidth}
\includegraphics[width=\linewidth]{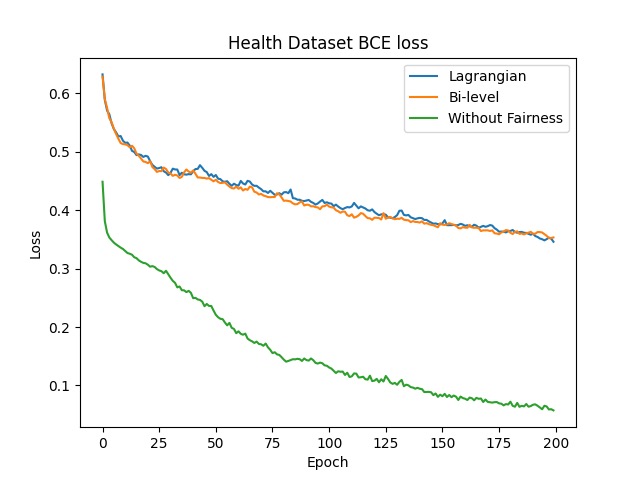}
\caption{Heritage Health}
  \label{fig:health_loss}
\end{subfigure}%
\caption{BCE loss over epochs for the Lagrangian, Bi-level, and Without Fairness approaches on (a) the Adult dataset and (b) the Health dataset. These results illustrate that the Bi-level optimization framework achieves lower BCE loss compared to the Lagrangian approach in these experiments, highlighting its potential in optimizing both accuracy and fairness objectives in neural networks.}
\label{fig:tabular_loss}
\end{figure}

Through this direct comparison, we aim to bridge the gap between theoretical analysis and practical implementation, showcasing how the principled design of the FairBiNN method leads to tangible improvements in fair machine learning tasks. This offers practitioners a clear understanding of when and why they might choose the FairBiNN method over the Lagrangian approach in real-world scenarios.
We trained both FairBiNN and Lagrangian models on the Adult and Health datasets, using the same network architecture (Same numeber of parameters) and optimization settings for fair comparison. For each method, we varied the fairness-accuracy trade-off parameter ($\eta$ for FairBiNN, $\lambda$ for Lagrangian) to generate a range of models with different accuracy-fairness balances.

\begin{figure}
\centering
\begin{subfigure}[h]{0.49\linewidth}
\includegraphics[width=\linewidth]{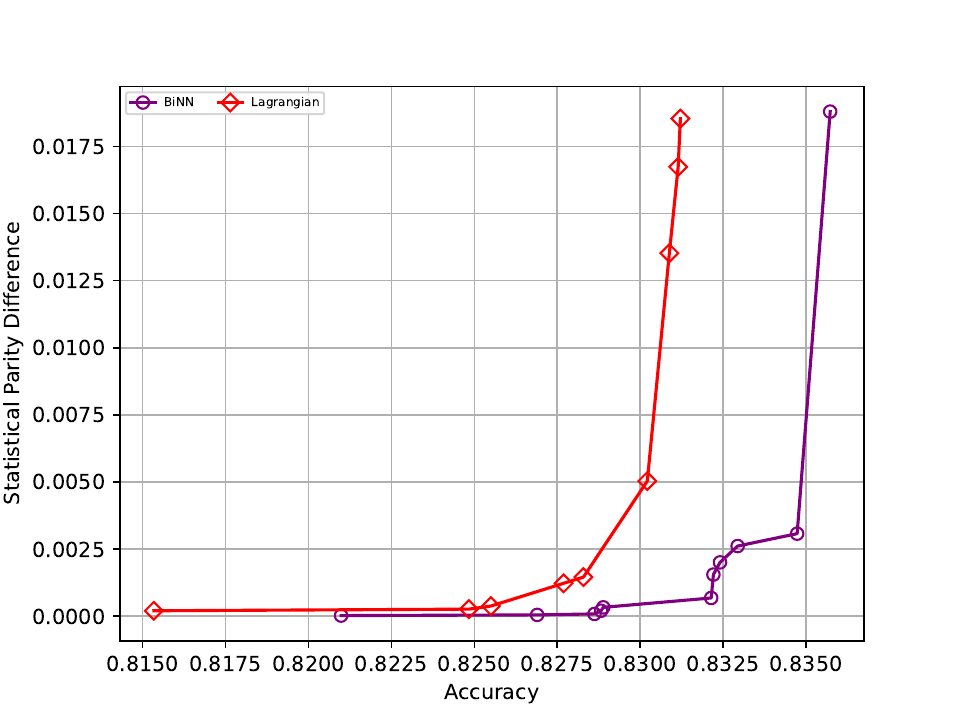}
\caption{UCI Adult}
\label{fig:adult_reg}
\end{subfigure}
\hfill
\begin{subfigure}[h]{0.5\linewidth}
\includegraphics[width=\linewidth]{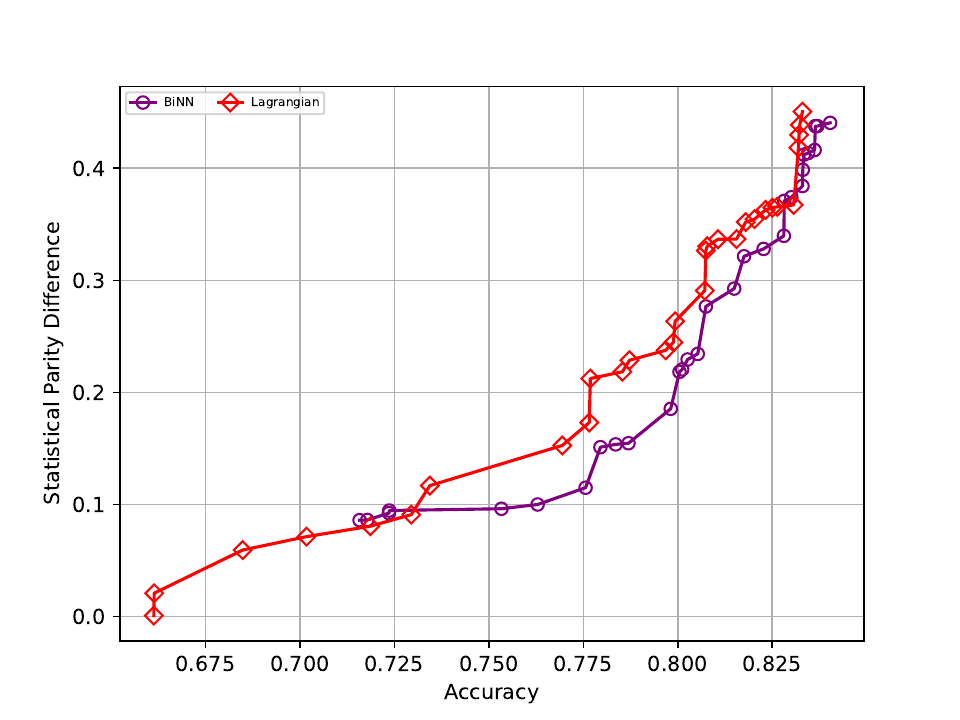}
\caption{Heritage Health}
  \label{fig:health_reg}
\end{subfigure}%
\caption{Comparison of FairBiNN and Lagrangian methods on UCI Adult and Heritage Health datasets}
\label{fig:reg_compare}
\end{figure}

Figure \ref{fig:reg_compare} presents a comparative analysis of the FairBiNN and Lagrangian methods on two benchmark datasets: UCI Adult (Figure \ref{fig:adult_reg}) and Heritage Health (Figure \ref{fig:health_reg}). The graphs plot the trade-off between accuracy and Statistical Parity Difference (SPD), a measure of fairness where lower values indicate better fairness.
For the UCI Adult dataset (Figure \ref{fig:adult_reg}), we observe that the FairBiNN method consistently outperforms the Lagrangian approach. The FairBiNN curve is closer to the top-left corner, indicating that it achieves higher accuracy for any given level of fairness (SPD). The difference is particularly pronounced at lower SPD values, suggesting that FairBiNN is more effective at maintaining accuracy while enforcing stricter fairness constraints.
The Heritage Health dataset results (Figure \ref{fig:health_reg}) show a similar trend, but with a more dramatic difference between the two methods. The FairBiNN curve dominates the Lagrangian curve across the entire range of SPD values. This indicates that FairBiNN achieves substantially higher accuracy for any given fairness level, or equivalently, much better fairness for any given accuracy level.
In both datasets, the FairBiNN method demonstrates a smoother, more consistent trade-off between accuracy and fairness. The Lagrangian method, in contrast, shows more erratic behavior, particularly in the Heritage Health dataset where its performance degrades rapidly as fairness constraints tighten.
These results empirically validate the theoretical advantages of the FairBiNN method discussed earlier in the paper. They suggest that the bilevel optimization approach is more effective at balancing the competing objectives of accuracy and fairness.

\subsubsection{Computational Complexity Analysis}
Let's define the following variables:
\begin{itemize}
\item $n$: number of parameters in $\theta_p$
\item $m$: number of parameters in $\theta_s$
\item $C_f$: cost of computing $f$ and its gradients
\item $C_\phi$: cost of computing $\phi$ and its gradients
\end{itemize}

\textbf{Regularization (Lagrangian) Method}

The Lagrangian update rules are:
\begin{align}
\theta_p = \theta_p - \alpha_L \nabla_{\theta_p} (f(\theta_p, \theta_s) + \lambda\phi(\theta_p, \theta_s)) 
\\
\theta_s = \theta_s - \alpha_L \nabla_{\theta_s} (f(\theta_p, \theta_s) + \lambda\phi(\theta_p, \theta_s))
\end{align}
Computational complexity per iteration: $O(C_f + C_\phi + n + m)$

\textbf{Bilevel Optimization Method}

The bilevel update rules are:
\begin{equation}
    \text{Lower level: }  \theta_s = \theta_s - \alpha_s \nabla_{\theta_s} \phi(\theta_p, \theta_s) \
\end{equation}
\begin{equation}
    \text{Upper level: }  \theta_p = \theta_p - \alpha_f \nabla_{\theta_p} f(\theta_p, \theta_s^(\theta_p))
\end{equation}
Computational complexity per iteration: $O(C_f + C_\phi + n + m)$

\textbf{Empirical Comparison}

While the theoretical complexity analysis suggests similar costs for both methods, we conducted empirical tests to compare their actual runtime performance. Table \ref{tab:train_time} reports the average epoch time for both the Adult and Health datasets using the FairBiNN and Lagrangian methods after 10 epochs of warmup.

\begin{table}[h]
\caption{Average epoch time (in seconds) for FairBiNN and Lagrangian methods}
\label{tab:train_time}
\centering
\begin{tabular}{l c c}
\hline
Method & Adult Dataset (s) & Health Dataset (s) \\
\hline
FairBiNN & 0.62 & 1.03 \\
Lagrangian & 0.60 & 1.05 \\
\hline
\end{tabular}
\end{table}

These experiments were conducted on an M1 Pro CPU. As we can observe from the results reported in table \ref{tab:train_time}, there is no tangible difference in the average epoch time between the FairBiNN and Lagrangian methods for both datasets. This empirical evidence aligns with our theoretical analysis.

\subsection{Related works - Graph and Vision domains}
\label{ap:related}

\subsubsection{Graph} \label{ap:graph}
The message-passing structure of GNNs and the topology of graphs both have the potential to amplify the bias. In general, in graphs such as social networks, nodes with sensitive features similar to one another are more likely to link to one another than nodes with sensitive attributes dissimilar from one another \citep{dong2016young, rahman2019fairwalk}. On social networks, for instance, persons of younger generations have a higher tendency to form friendships with others of a similar age \citep{dong2016young}. This results in the aggregation of neighbors' features in GNN having similar representations for nodes of similar sensitive information while having different representations for nodes of different sensitive features, which leads to severe bias in decision-making, in the sense that the predictions are highly correlated with the sensitive attributes of the nodes. GNNs have a greater bias due to the adoption of graph structure than models that employ node characteristics \citep{dai2021say}. Because of this bias, the widespread use of GNNs in areas such as the evaluation of job candidates \citep{mehrabi2021survey} and the prediction of drug-target interaction \citep{yazdani2022attentionsitedti, khodabandeh2024fragxsitedti} would be significantly hindered. As a result, it is essential to research equitable GNNs. 
The absence of sensitive information presents significant problems to the work that has already been done on fair models \citep{beutel2017data, creager2019flexibly, locatello2019fairness, louizos2015variational, tayebi2024learning}. Despite the significant amount of work that has been put into developing fair models through the revision of features \cite{kamiran2009classifying, kamiran2012data, zhang2017achieving}, disentanglement \citep{creager2019flexibly, louizos2015variational}, adversarial debiasing \citep{beutel2017data, edwards2015censoring}, and fairness constraints \citep{zafar2017fairness, zafar2017fairness_a}, these models are almost exclusively designed for independently and identically distributed (i.i.d) data, meaning that they are unable to be directly applied to graph data due to the fact that they do not simultaneously take into consideration the bias that comes from node attributes and graph. In recent years, \citet{bose2019compositional, rahman2019fairwalk} have been published to learn fair node representations from graphs. These approaches only deal with simple networks that do not have any properties on any of the nodes, and they place their emphasis on fair node representations rather than fair node classifications. Finally, \citet{dai2021say} used graph topologies and a restricted amount of protected attributes and designed FairGNN to reduce the bias of GNNs while retaining high node classification accuracy.

\subsubsection{Vision} \label{ap:vision}
The challenges caused by bias in computer vision might appear in various ways. It has been found, for instance, that in action recognition models, when the data include gender bias, the bias is exacerbated by the models trained on such datasets \citep{zhao2017men}. Face detection and recognition models may be less precise for some racial and gender categories \citep{buolamwini2018gender}. Methods for mitigating bias in vision datasets are suggested in \citep{wang2022revise} and \citep{yang2020towards}. Several researchers have used GANs on image datasets for bias reduction. \citet{sattigeri2019fairness} altered the utility function of GAN in order to generate equitable picture datasets. FairFaceGAN \citep{hwang2020fairfacegan} provides facial image-to-image translation to avoid unintended transfer of protected characteristics.
\citet{roy2019mitigating} developed a method to mitigate information leakage on image datasets by formulating the problem as an adversarial game to maximize data utility and minimize the amount of information contained in the embedding, measured by entropy.
\citet{ramaswamy2021fair} presents a methodology to generate balanced training data for each protected property by perturbing the latent vector of a GAN. Other experiments using GANs to generate accurate data are \citep{choi2020fair, sharmanska2020contrastive}. Beyond GANs, many strategies have addressed the challenge of AI fairness. \citep{rajabi2022through} proposed a U-Net for creating unbiased image data. Deep information maximization adaption networks were employed to eliminate racial bias in face vision datasets \citep{wang2019racial}, while reinforcement learning was utilized for training a race-balanced network \citep{wang2019mitigate}. \citet{wang2021towards} offer a generative few-shot cross-domain adaptation method for performing fair cross-domain adaptation and enhancing minority category performance. The research in \citep{xu2021consistent} recommends adding a penalty term to the softmax loss function to reduce bias and enhance face recognition fairness performance. \citet{quadrianto2019discovering} describes a technique for discovering fair data representations with the same semantic information as the original data. There have also been effective applications of adversarial learning for this purpose \citep{wang2019balanced, zhang2018mitigating}. \citep{chuang2021fair} proposed fair mixup, which uses data augmentation to mitigate bias in data.

\subsection{Evaluation Metrics}
\label{ap:metric}
We utilize four metrics to compare our model's performance against baseline models. Average precision (AP) is utilized to gauge classifier accuracy, combining recall and accuracy at each point. In tabular and graph datasets, we opt for accuracy to align with existing literature practices.

Fairness evaluation draws from various criteria, with demographic parity (DP) being widely used. DP quantifies the difference in favorable outcomes across protected groups, expressed as $(|P(Y = 1|S = 0) - P(Y = 1|S = 1)|)$ \citep{mehrabi2021survey}. For scenarios involving more than two groups, DP can be calculated as $\Delta_{DP} (a, \hat{y}) = \max_{a_i, a_j} |P(\hat{y} = 1 | a=a_i) - P(\hat{y} =1 | a=a_j))|$ \citep{gupta2021controllable}. A smaller DP indicates fairer categorization. We also adopt the difference in equality of opportunity ($\Delta$EO), which measures the absolute difference in true positive rates between gender expressions $(|TPR(S = 0) - TPR(S = 1)|)$ \citep{lokhande2020fairalm, ramaswamy2021fair}. Minimizing $\Delta$EO signifies fairer outcomes. Demographic parity serves as the fairness criterion in our optimization. Discrepancies between EO and DP may occur due to this choice.

\subsection{Implementation details}
\label{ap:hyper}
The hyperparameters used in training the models on each dataset can be found in the tables \ref{hype1}, \ref{hype2}, and \ref{hype3}. The training was conducted on a computer with an NVIDIA GeForce RTX 3090.

\begin{table}
\caption{Summary of Parameter Setting for the fairness layers on tabular datasets} 
\centering 
\begin{tabular}{l c c } 
\toprule 
Hyperparameters & UCI Adult & Health Heritage \\
\midrule
FC layers before the fairness layers & 2 & 2 \\
Fairness FC layers & 1 & 3\\
FC layers after the fairness layers & 1 & 1 \\
Epoch & 50 & 50 \\
Batch size & 100 & 100 \\
Dropout & 0 & 0 \\
Network optimizer & Adam & Adam\\
fairness layers' optimizer & Adam & Adam\\
classifier layers' learning rate & 1e-3 & 1e-3 \\ 
fairness layers' learning rate & 1e-5 & 1e-5\\
$\eta$ & 100 & 100\\

\bottomrule
\end{tabular}

\label{hype1} 
\end{table}

\begin{table}
\caption{Summary of Parameter Setting for the fairness layers on graph datasets} 
\centering 
\begin{tabular}{l c c c } 
\toprule 
Hyperparameters & POKEC-Z & POKEC-N & NBA \\ 
\midrule 
GCN layer before the fairness layers & 2 & 2 & 2 \\
Fairness FC layers & 1 & 1 & 1 \\
FC layers after the fairness layers & 1 & 1 & 1\\
Epoch & 5000 & 1000 & 1000 \\
Batch size & 1 & 1 & 1 \\
Dropout & 0 & 0.5 & 0.5 \\
Network optimizer & Adam & Adam & Adam\\
fairness layers' optimizer & Adam & Adam & Adam\\
classifier layers' learning rate & 1e-3 & 1e-3 & 1e-2\\ 
fairness layers' learning rate & 1e-6 & 1e-8 & 1e-5\\
$\eta$ & 1000 & 100 & 1000\\
\bottomrule
\end{tabular}

\label{hype2} 
\end{table}

\begin{table}
\caption{Summary of Parameter Setting for the fairness layers on vision dataset} 
\centering 
\begin{tabular}{l c c c } 
\toprule 
Hyperparameters & CelebA-Attractive & CelebA-Smiling & CelebA-WavyHair \\ 
\midrule 
Fairness FC layers & 1 & 1 & 1 \\
FC layers after the fairness layers & 1 & 1 & 1\\
Epoch & 30 & 15 & 15 \\
Batch size & 128 & 128 & 128 \\
Dropout & 0 & 0 & 0 \\
Network optimizer & Adam & Adam & Adam\\
fairness layers' optimizer & Adam & Adam & Adam\\
classifier layers' learning rate & 1e-3 & 1e-3 & 1e-3\\ 
fairness layers' learning rate & 1e-6 & 1e-5 & 1e-5\\
$\eta$ & 1000 & 100 & 100\\ 
\bottomrule
\end{tabular}

\label{hype3} 
\end{table}

\subsection{Other domains' results} \label{ap:results}

\subsubsection{Graph}
\begin{table*}[ht]
\caption{The comparisons of our proposed method with the baselines on Pokec-z}
\label{tab:pokz}
\vskip 0.15in
\begin{center}
\begin{small}
\begin{sc}
\begin{tabular}{lcccc}
\toprule
Method                 & ACC(\%)                  & AUC(\%)                  & $\Delta_{DP}$(\%)                  & $\Delta_{EO}$(\%)         \\
\midrule

ALFR \citep{edwards2015censoring}                  & 65.4 ±0.3            & 71.3 ±0.3            & 2.8 ±0.5            & 1.1 ±0.4   \\
ALFR-e  \citep{edwards2015censoring, perozzi2014deepwalk}               & 68.0 ±0.6            & 74.0 ±0.7            & 5.8 ±0.4            & 2.8 ±0.8   \\
Debias  \citep{zhang2018mitigating}               & 65.2 ±0.7            & 71.4 ±0.6            & 1.9 ±0.6            & 1.9 ±0.4   \\
Debias-e  \citep{zhang2018mitigating, perozzi2014deepwalk}             & 67.5 ±0.7            & 74.2 ±0.7            & 4.7 ±1.0            & 3.0 ±1.4   \\
FCGE \citep{bose2019compositional}                 & 65.9 ±0.2            & 71.0 ±0.2            & 3.1 ±0.5            & 1.7 ±0.6   \\
FairGCN \citep{dai2021say}               & 70.0 ±0.3            & 76.7 ±0.2            & 0.9 ±0.5            & 1.7 ±0.2   \\
FairGAT \citep{kose2024fairgat}                & 70.1 ±0.1            & 76.5 ±0.2            & \textbf{0.5 ±0.3}            & \textbf{0.8 ±0.3}   \\
NT-FairGNN \citep{dai2022learning}            & 70.0 ±0.1            & 76.7 ±0.3            & 1.0 ±0.4            & 1.6 ±0.2   \\
\textbf{GAT+FairBiNN (Ours)} & \textbf{70.97 ±0.16} & \textbf{77.58 ±0.13} & 0.93 ±0.44 & 0.97 ±0.40\\
\bottomrule
\end{tabular}
\end{sc}
\end{small}
\end{center}
\end{table*}

\begin{table*}[t]
\caption{The comparisons of our proposed method with the baselines on Pokec-n}
\label{tab:pokn}
\vskip 0.15in
\begin{center}
\begin{small}
\begin{sc}
\begin{tabular}{lcccc}
\toprule
Method                 & ACC(\%)                  & AUC(\%)                  & $\Delta_{DP}$(\%)                  & $\Delta_{EO}$(\%)         \\
\midrule
ALFR  \citep{edwards2015censoring}                 & 63.1 ±0.6           & 67.7 ±0.5           & 3.05 ±0.5           & 3.9 ±0.6          \\
ALFR-e \citep{edwards2015censoring, perozzi2014deepwalk}                & 66.2 ±0.5           & 71.9 ±0.3           & 4.1 ±0.5            & 4.6 ±1.6          \\
Debias  \citep{zhang2018mitigating}               & 62.6 ±0.9           & 67.9 ±0.7           & 2.4 ±0.7            & 2.6 ±1.0          \\
Debias-e \citep{zhang2018mitigating, perozzi2014deepwalk}              & 65.6 ±0.8           & 71.7 ±0.7           & 3.6 ±0.2            & 4.4 ±1.2          \\
FCGE \citep{bose2019compositional}                  & 64.8 ±0.5           & 69.5 ±0.4           & 4.1 ±0.8            & 5.5 ±0.9          \\
FairGCN \citep{dai2021say}               & 70.1 ±0.2           & 74.9 ±0.4           & 0.8 ±0.2            & 1.1 ±0.5          \\
FairGAT \citep{kose2024fairgat}                & 70.0 ±0.2           & 74.9 ±0.4           & \textbf{0.6 ±0.3}            & \textbf{0.8 ±0.2} \\
NT-FairGNN \citep{dai2022learning}            & \textbf{70.1 ±0.2}           & 74.9 ±0.4           & 0.8 ±0.2            & 1.1 ±0.3          \\
\textbf{GAT+FairBiNN (Ours)} & 70.07 ±0.5 & \textbf{75.8 ±0.38} & 0.62 ±0.14 & 3.0 ±1.0         \\
\bottomrule
\end{tabular}
\end{sc}
\end{small}
\end{center}
\end{table*}

\begin{table*}[t]
\caption{The comparisons of our proposed method with the baselines on NBA}
\label{tab:nba}
\vskip 0.15in
\begin{center}
\begin{small}
\begin{sc}
\begin{tabular}{lcccc}
\toprule
Method                 & ACC(\%)                  & AUC(\%)                  & $\Delta_{DP}$(\%)                  & $\Delta_{EO}$(\%)         \\
\midrule
ALFR  \citep{edwards2015censoring}                 & 64.3 ±1.3            & 71.5 ±0.3            & 2.3 ±0.9            & 3.2 ±1.5          \\
ALFR-e \citep{edwards2015censoring, perozzi2014deepwalk}                & 66.0 ±0.4            & 72.9 ±1.0            & 4.7 ±1.8            & 4.7 ±1.7          \\
Debias \citep{zhang2018mitigating}                & 63.1 ±1.1            & 71.3 ±0.7            & 2.5 ±1.5            & 3.1 ±1.9          \\
Debias-e \citep{zhang2018mitigating, perozzi2014deepwalk}              & 65.6 ±2.4            & 72.9 ±1.2            & 5.3 ±0.9            & 3.1 ±1.3          \\
FCGE \citep{bose2019compositional}                  & 66.0 ±1.5            & 73.6 ±1.5            & 2.9 ±1.0            & 3.0 ±1.2          \\
FairGCN \citep{dai2021say}               & 71.1 ±1.0            & 77.0 ±0.3            & 1.0 ±0.5            & 1.2 ±0.4          \\
FairGAT \citep{kose2024fairgat}               & 71.5 ±0.8            & 77.5 ±0.7            & 0.7 ±0.5            & \textbf{0.7 ±0.3} \\
NT-FairGNN  \citep{dai2022learning}           & 71.1 ±1.0            & 77.0 ±0.3            & 1.0 ±0.5            & 1.2 ±0.4          \\
\textbf{GAT+FairBiNN (Ours)} & \textbf{77.09 ±0.45} & \textbf{77.99 ±0.58} & \textbf{0.34 ±0.21} & 12.78 ±2.9 \\
\bottomrule
\end{tabular}
\end{sc}
\end{small}
\end{center}
\end{table*}
We compare our suggested framework with some of the cutting-edge approaches for fair classification, and fair graph embedding learning including ALFR \cite{edwards2015censoring}, ALFR-e, Debias \cite{zhang2018mitigating}, Debias-e, and FCGE \citep{bose2019compositional}. In the ALFR \cite{edwards2015censoring} approach, which is a pre-processing technique, the sensitive information in the representations created by an MLP-based autoencoder is eliminated using a discriminator. Then the debiased representations are used to train the linear classifier. ALFR-e is a method to make use of the graph structure information and joins the user features in the ALFR with the graph embeddings discovered by deepwalk \citep{perozzi2014deepwalk}. Debias \cite{zhang2018mitigating}, is a fair categorization technique used throughout processing. It immediately applies a discriminator to the predicted likelihood of the classifier. Debias-e, which is similar to the ALFR-e, also includes deepwalk embeddings into the Debias characteristics. FCGE \citep{bose2019compositional}, is suggested as a method for learning fair node embeddings in graphs without node characteristics. FairGCN \citep{dai2021say}, a graph convolutional network designed for fairness in graph-based learning. It incorporates fairness constraints during training to reduce disparities between protected groups. FairGAT \citep{kose2024fairgat}, a fairness-aware graph-based learning framework that employs a novel attention learning strategy to reduce bias. This framework is grounded in a theoretical analysis that identifies sources of bias in GAT-based neural networks used for node classification. 
NT-FAIRGNN \citep{dai2022learning} is a graph neural network that aims
to achieve fairness by balancing the trade-off between accuracy and fairness. It uses a two-player minimax game between the predictor and the adversary, where the adversary aims to maximize the unfairness.
Discriminators screen out the delicate data in the embeddings. We used the \citet{dai2021say} study's obtained datasets for our investigation which are as follows:\\ 
\emph{Pokec} \citep{takac2012data} is among the most well-known social network datasets in Slovakia which resemble Facebook and Twitter greatly. This dataset includes anonymous information from the whole social network of the year 2012. User profiles on Pokec include information on gender, age, interests, hobbies, profession, and more. There are millions of users in the original Pokec dataset. Sampled Pokec-z and Pokec-n datasets are based on the provinces that users are from. The categorization task involves predicting the users' working environment. \\
\emph{NBA} is a Kaggle dataset with about 400 NBA basketball players that served as the basis for this extension. Players' 2016–2017 season success statistics, along with additional details like nationality, age, and income are presented. They gathered the relationships between NBA basketball players on Twitter using its official crawling API to create the graph connecting the NBA players. They separated the nationality into two groups, American players and international players, which is a sensitive characteristic. The classification job is to predict whether a player's wage is above the median. \\
Each experiment was conducted five times, and Tables~\ref{tab:pokz},~\ref{tab:pokn}, and~\ref{tab:nba} report the mean and standard deviation of the runs for Pokec-z, Pokec-n, and NBA datasets, respectively. These results represent the selected Pareto solutions for comparison with the benchmarks. The tables reveal that, in comparison to GAT, generic fair classification techniques and graph embedding learning approaches exhibit inferior classification performance, even when utilizing graph information. In contrast, our Bilevel design performs comparably to baseline GNNs. FairGCN is close to the baseline, but the FairBiNN approach outperforms it. When sensitive information is scarce (e.g., NBA dataset), baselines exhibit clear bias, with graph-based baselines performing worse. However, our proposed model yields near-zero statistical demographic parity, indicating effective discrimination mitigation.

\subsubsection{Vision}

\begin{figure}
\centering
\begin{subfigure}[h]{0.49\linewidth}
\includegraphics[width=\linewidth]{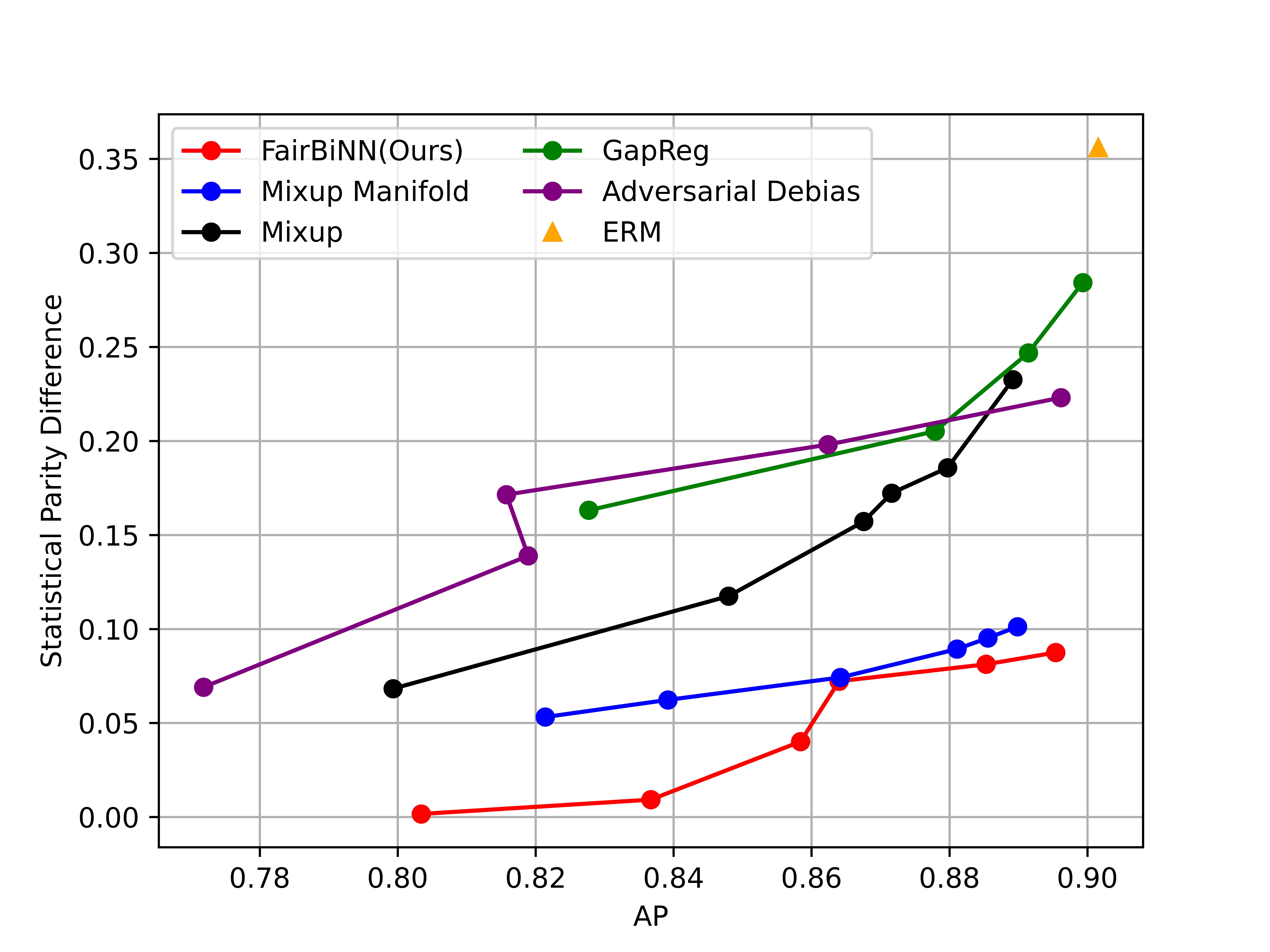}
\caption{Attractive AP vs. $\Delta$ DP}
\label{fig:1}
\end{subfigure}
\hfill
\begin{subfigure}[h]{0.49\linewidth}
\includegraphics[width=\linewidth]{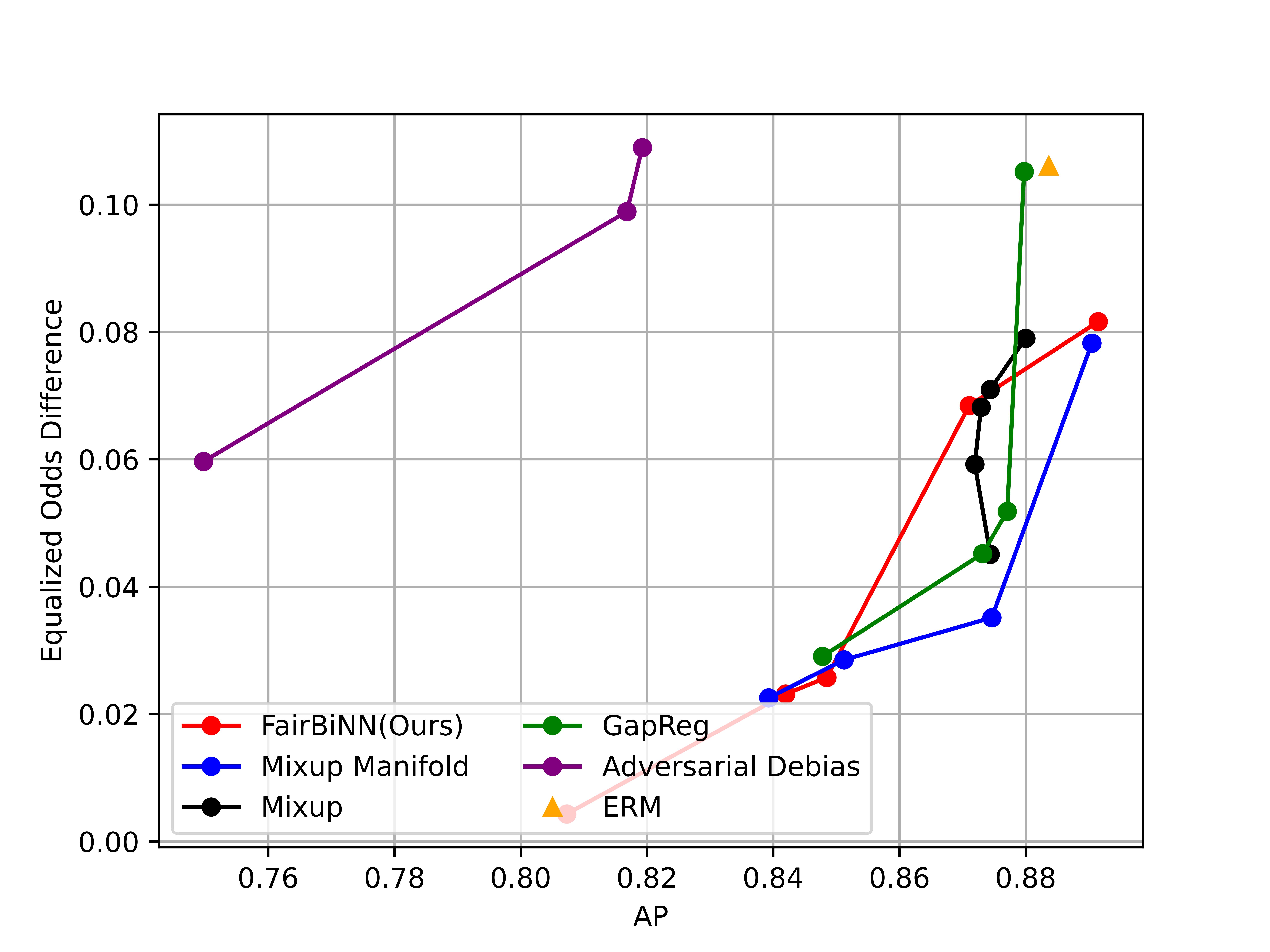}
\caption{Attractive AP vs. $\Delta$ EO}
  \label{fig:4}
\end{subfigure}%
\caption{\emph{Attractive Attribute of CelebA Dataset as the Target Attribute}. (a) reflects the trade-off between Average Precision and Demographic Parity Difference. (b) shows the trade-off between Average Precision and Equalized Odds Difference.}
\label{fig:celebA_attractive}

\end{figure}

\begin{figure}
\centering
\begin{subfigure}[h]{0.49\linewidth}
\includegraphics[width=\linewidth]{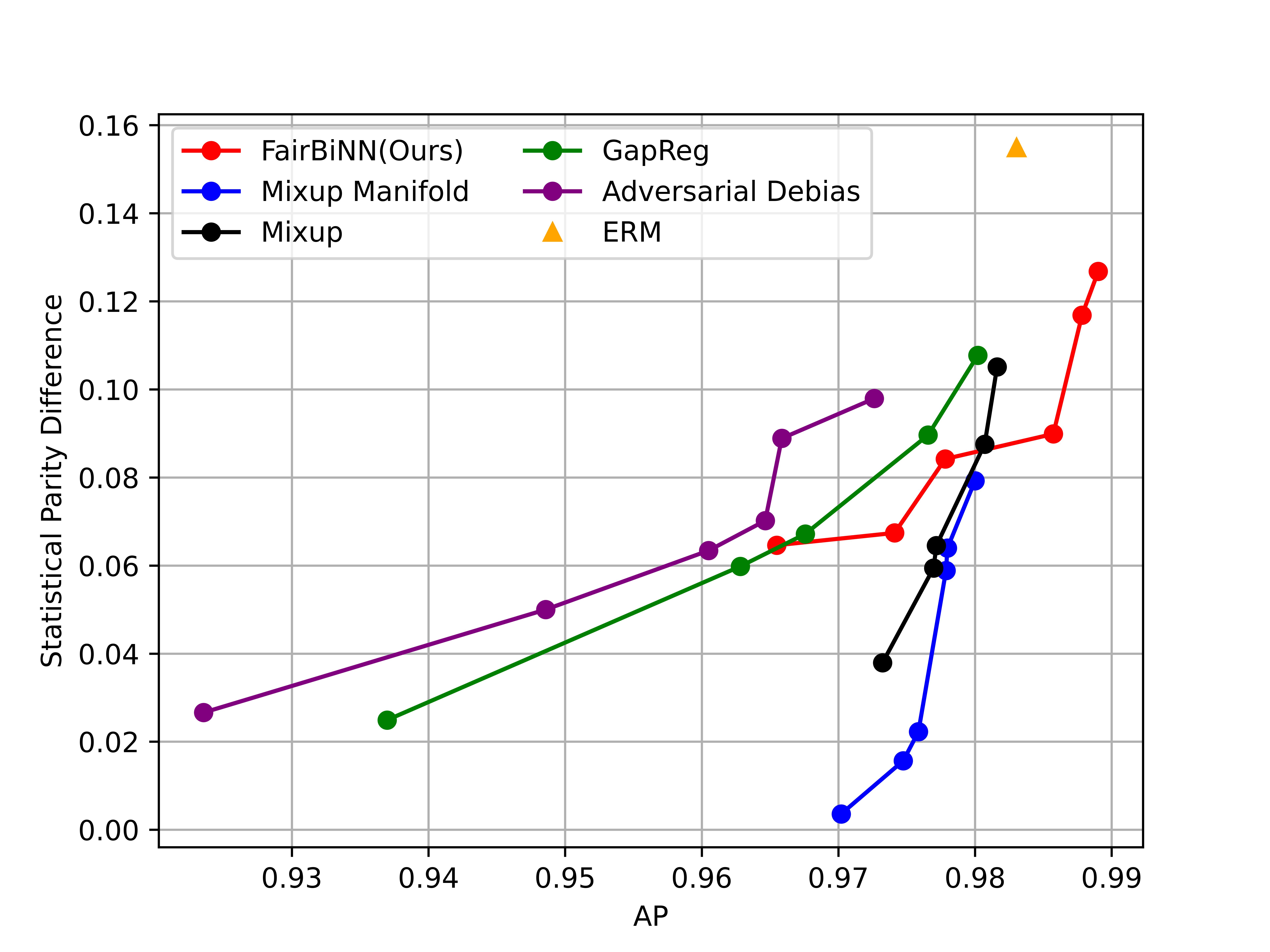}
\caption{Smiling AP vs. $\Delta$ DP}
\label{fig:2}
\end{subfigure}
\hfill
\begin{subfigure}[h]{0.49\linewidth}
\includegraphics[width=\linewidth]{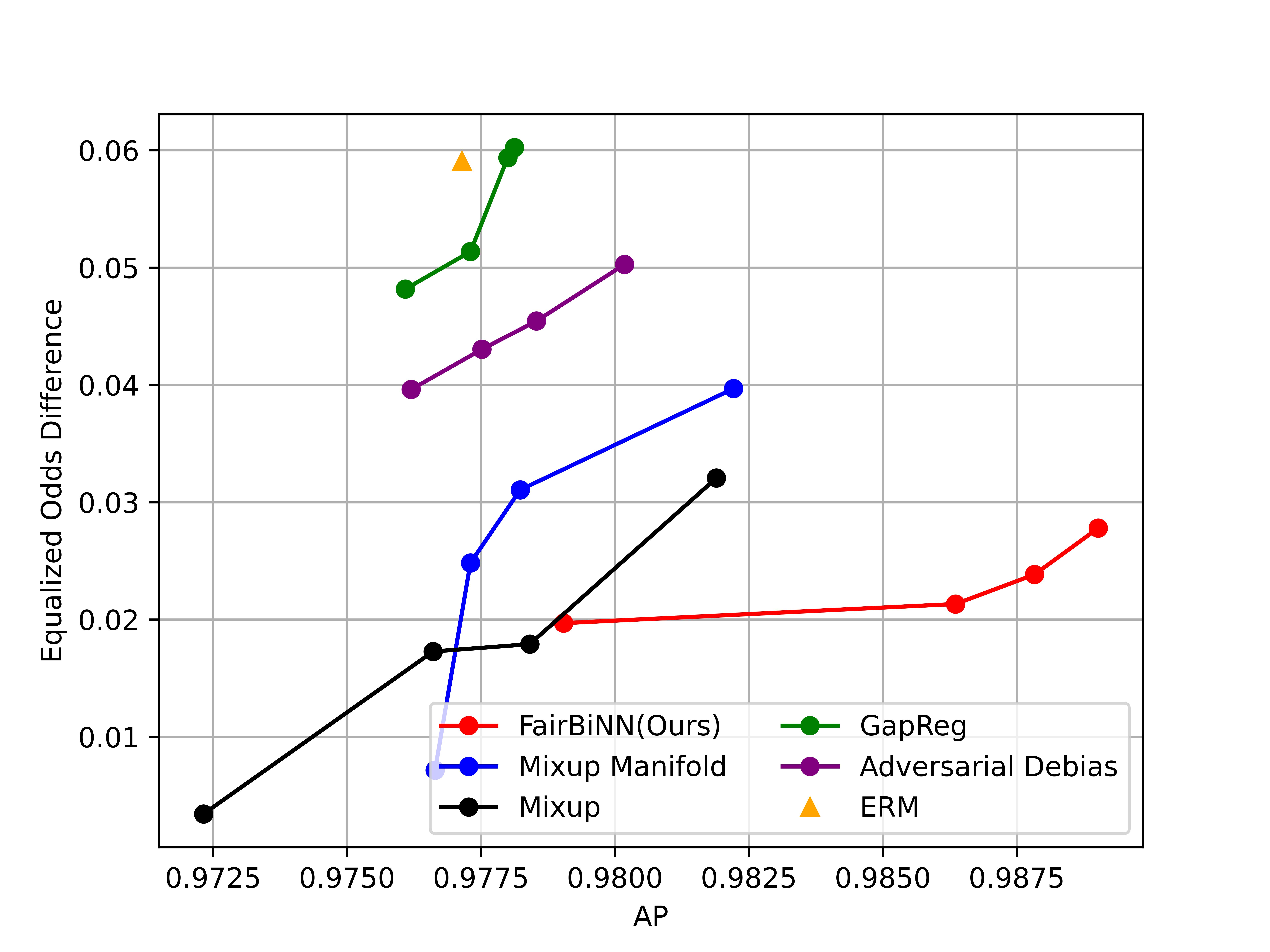}
\caption{Smiling AP vs. $\Delta$ EO}
  \label{fig:5}
\end{subfigure}%
\caption{\emph{Smiling Attribute of CelebA Dataset as the Target Attribute}. (a) reflects the trade-off between Average Precision and Demographic Parity Difference. (b) shows the trade-off between Average Precision and Equalized Odds Difference.}
\label{fig:celebA_smiling}
\end{figure}

\begin{figure}
\centering
\begin{subfigure}[h]{0.49\linewidth}
\includegraphics[width=\linewidth]{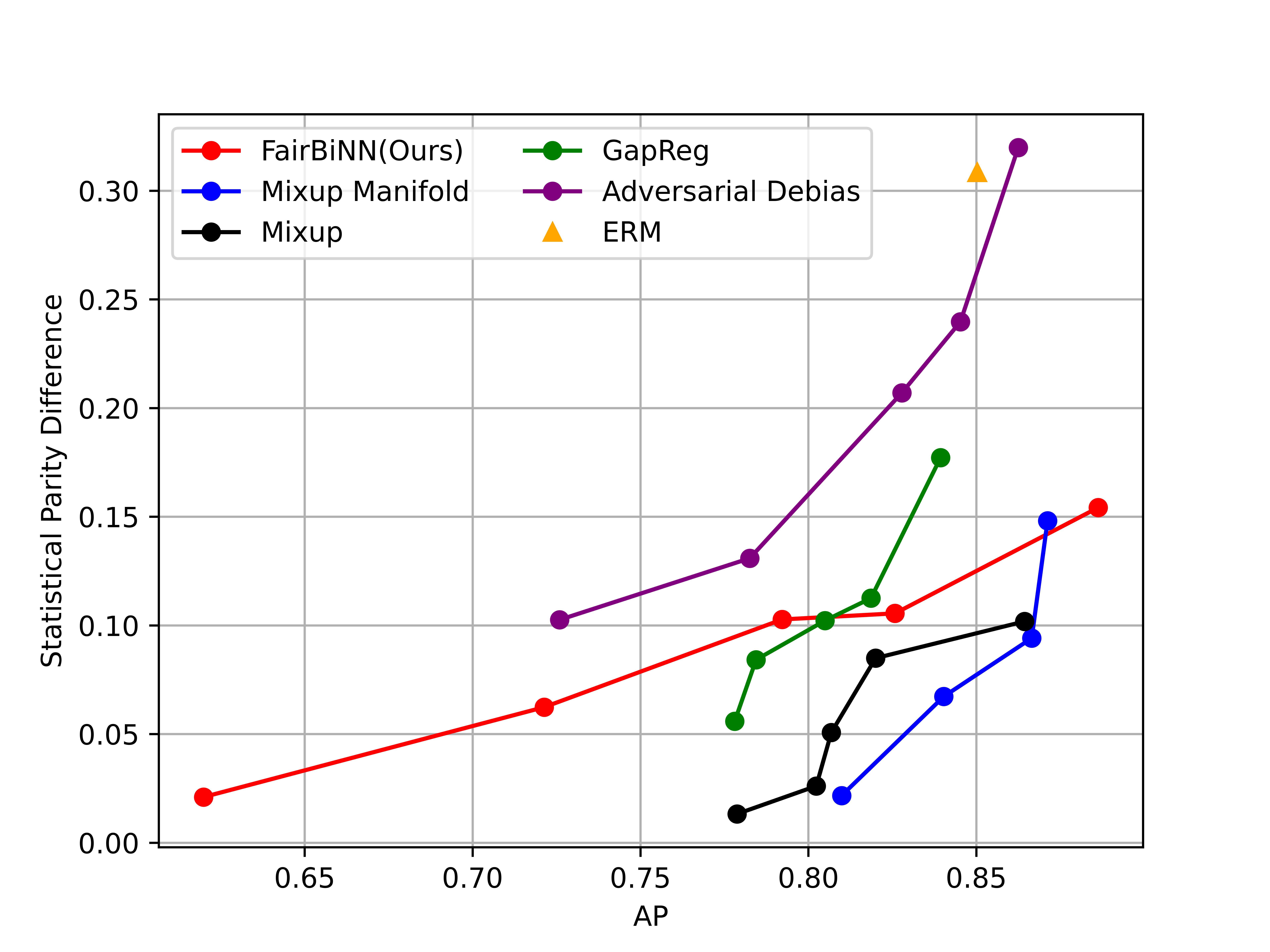}
\caption{Wavy Hair AP vs. $\Delta$ DP}
\label{fig:3}
\end{subfigure}
\hfill
\begin{subfigure}[h]{0.49\linewidth}
\includegraphics[width=\linewidth]{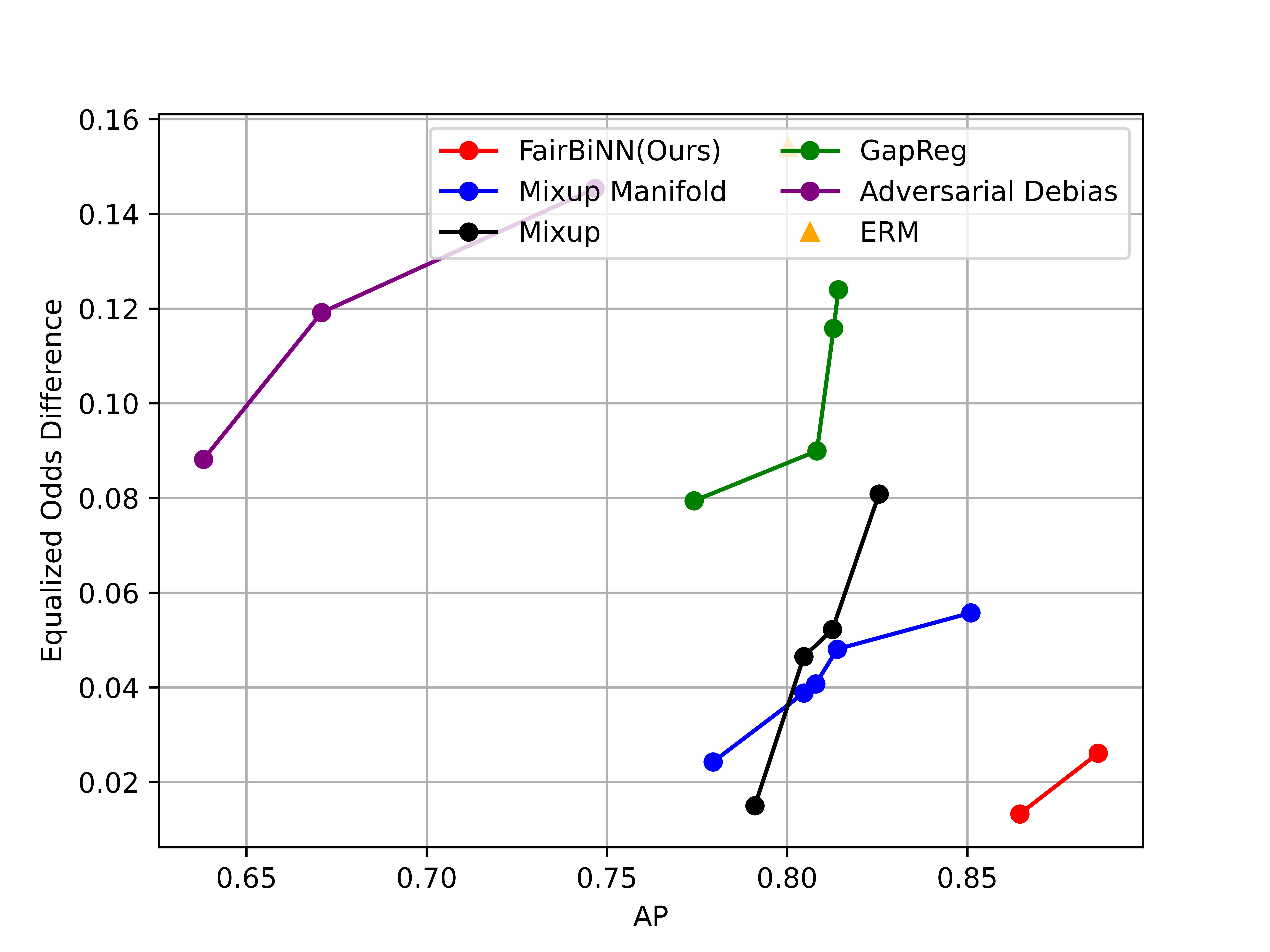}
\caption{Wavy Hair AP vs. $\Delta$ EO}
  \label{fig:6}
\end{subfigure}%
\caption{\emph{Wavy Hair Attribute of CelebA Dataset as the Target Attribute}. (a) reflects the trade-off between Average Precision and Demographic Parity Difference. (b) shows the trade-off between Average Precision and Equalized Odds Difference. The FairBiNN method is showing competitive results to the baseline.}
\label{fig:celebA_wavy}

\end{figure}

We compare our method on vision task with (1) empirical risk minimization (ERM), which accomplishes training task without any regularization, (2) gap regularization, which directly regularizes the model, (3) adversarial debiasing \citep{zhang2018mitigating}, and (4) Fairmixup \citep{chuang2021fair}.
To showcase the effectiveness of our method, we employed the CelebA dataset of face attributes \citep{liu2015faceattributes}, which comprises over 200,000 images of celebrities. Each image in the dataset has been assigned 40 binary attributes, including gender, that were labeled by humans. We selected the attributes of attractive, smiling, and wavy hair and used them in three binary classification tasks, with gender serving as the protected attribute. The reason we chose these three attributes is that each of them has a group that is sensitive to them and receives a disproportionately high number of positive samples.
We trained a ResNet-18 model \cite{he2016deep} for each task and added two additional layers to predict the outcomes. The trade-off between Average Precision (AP), Demographic Parity (DP), and  Equality of Opportunity (EO) for attributes "Attractive", "Smiling", and "Wavy Hair" is illustrated in the figures \ref{fig:celebA_attractive}, \ref{fig:celebA_smiling}, and \ref{fig:celebA_wavy} respectively. 
Our proposed method provides a more balanced trade-off between accuracy and fairness. Instead of prioritizing one over the other, our method strikes a better balance, ensuring that the trained model is both accurate and fair.
Moreover, the FairBiNN model consistently provides better equality of opportunity across various accuracy levels compared to benchmark models. Through empirical validation on multiple benchmarks, we've shown that the FairBiNN approach consistently outperforms other methods in achieving equality of opportunity across various accuracy levels. This indicates that our method can provide fair treatment to different protected groups while still maintaining high predictive accuracy.

\begin{figure}
\centering
\begin{subfigure}[h]{0.49\linewidth}
\includegraphics[width=\linewidth]{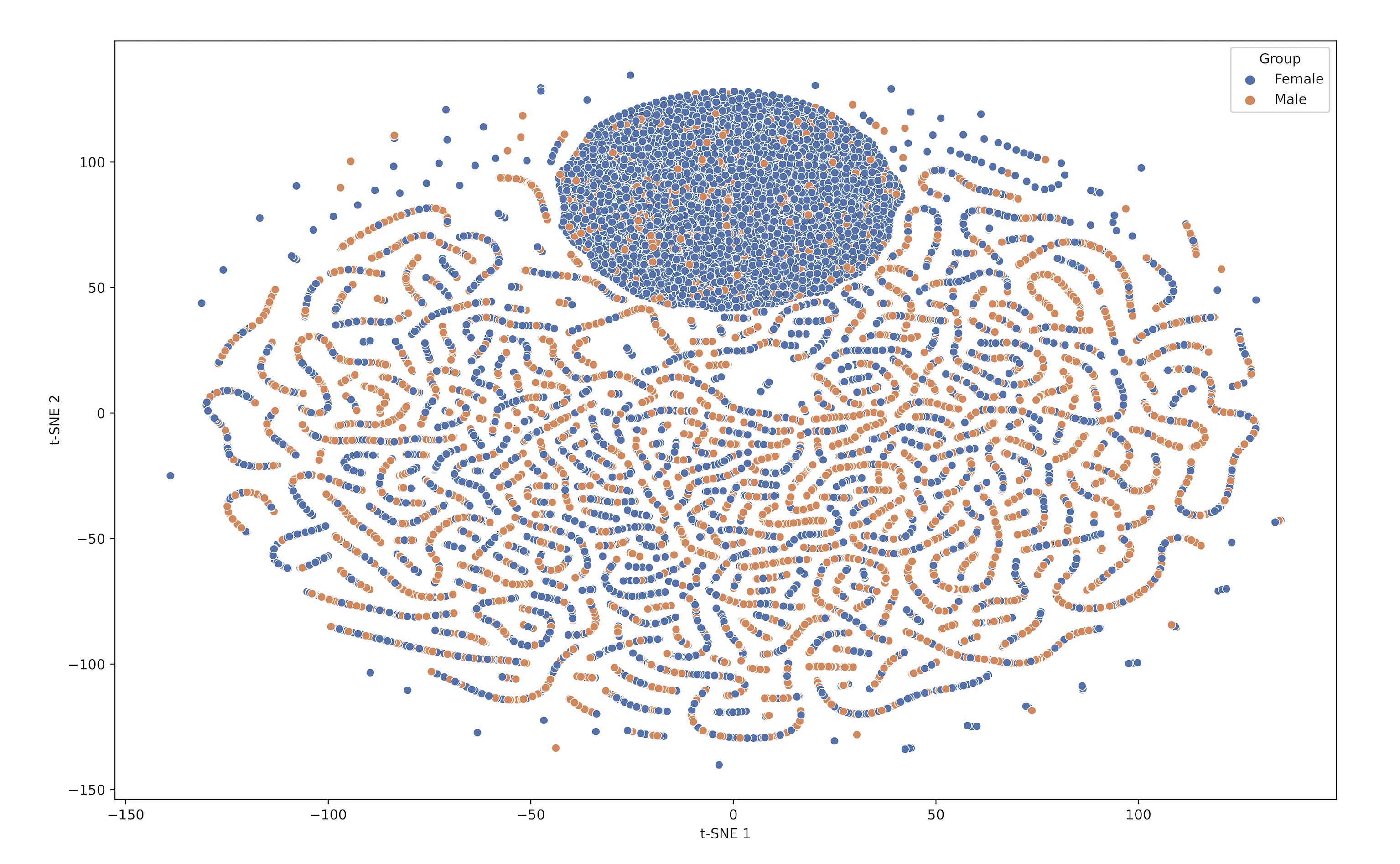}
\caption{Without the fairness layers $z$}
\label{fig:unfair_tsne}
\end{subfigure}%
\hfill
\begin{subfigure}[h]{0.49\linewidth}
\includegraphics[width=\linewidth]{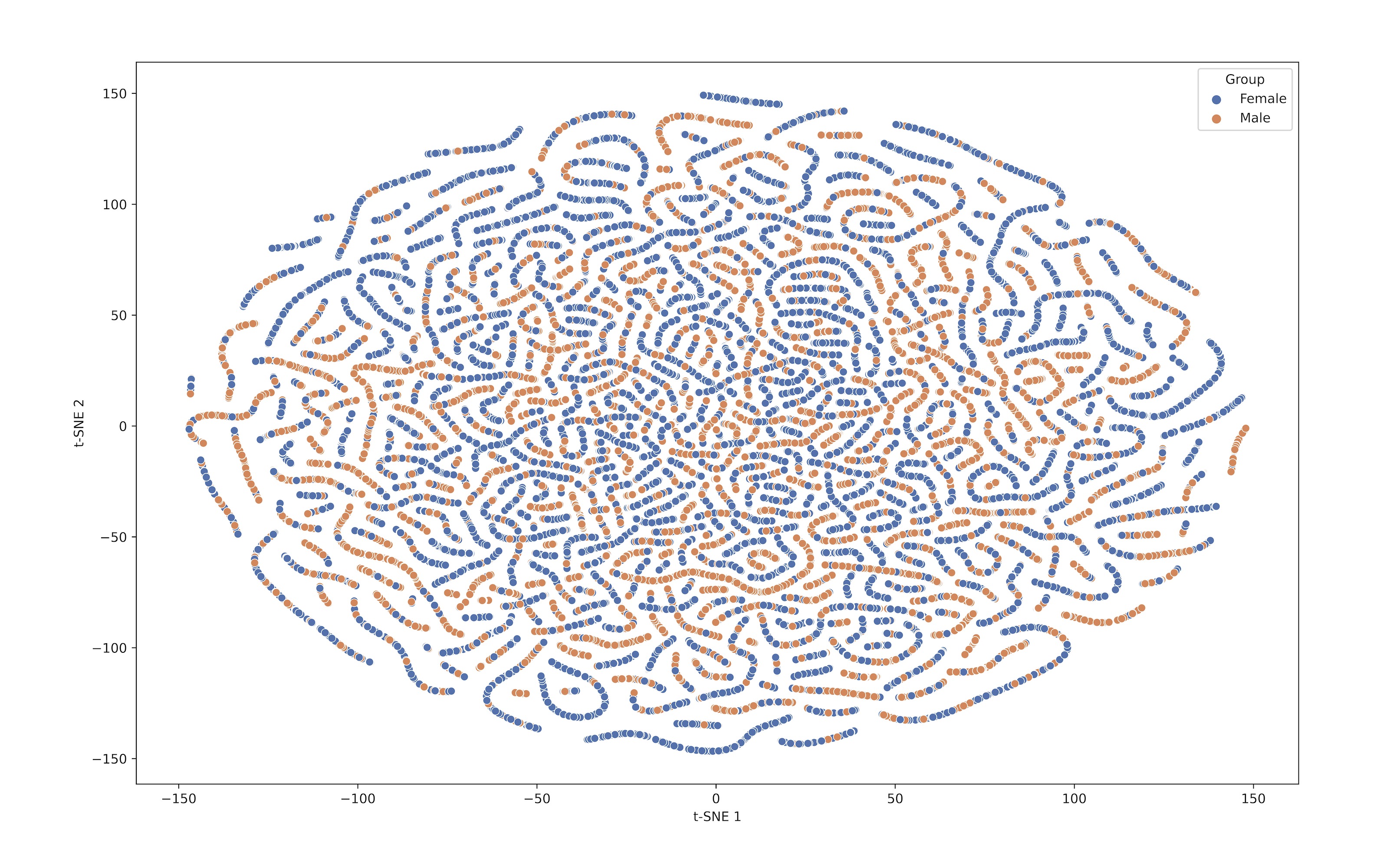}
\caption{With the fairness layers $\tilde{z}$}
\label{fig:fair_tsne}
\end{subfigure}
\caption{CelebA Dataset – t-SNE visualization of $z$ and $\tilde{z}$ labeled with gender classes. The invariant encoding $\tilde{z}$ shows no clustering by gender. These plots are generated using attractive attribute.}
\label{fig:tsne}
\end{figure}

Furthermore, we demonstrate the power of the bilevel design for fairness by visualizing the t-SNE plot. The t-SNE visualization of $z$ (output of the ResNet-18 before the classification layer without the fairness layers) and $\tilde{z}$ (output of the ResNet-18 before the classification layer with the Bilevel fairness) are shown in Figures \ref{fig:unfair_tsne} and \ref{fig:fair_tsne}, demonstrating that $z$ clusters by gender, but $\tilde{z}$ does not. 
Further insights about ablation study outcomes are detailed in section \ref{ap:ablation}.

\subsection{Architecture visualization}
To better illustrate the training process outlined in Algorithm \ref{algorithm1}, we present the network architecture in Figure \ref{fig:fairbinn}.

\begin{figure}
    \centering
    \includegraphics[width=1\linewidth]{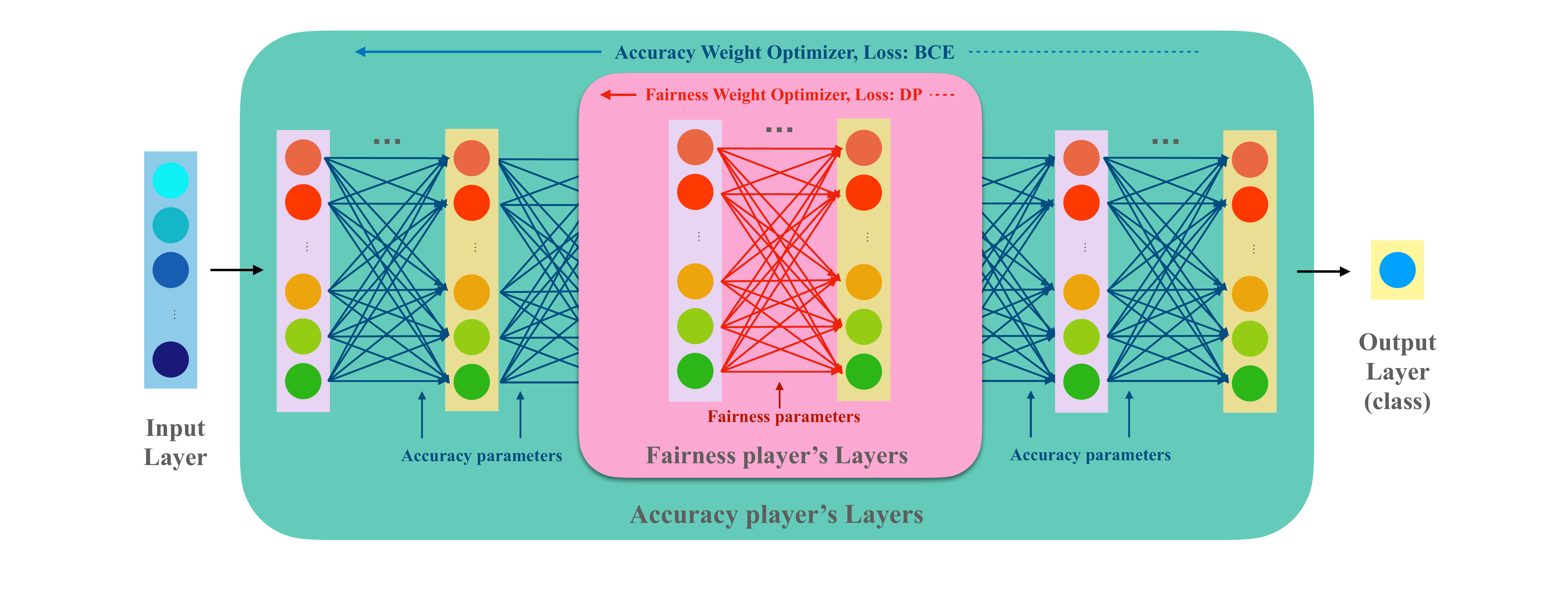}
    \caption{The FairBiNN network architecture illustrating the process described in Algorithm \ref{algorithm1}.}
    \label{fig:fairbinn}
\end{figure}

\subsection{Ablation Study}
\label{ap:ablation}

\subsubsection{Impact of $\eta$ on Model Performance}
\label{ablation_eta}
To understand the sensitivity of our FairBiNN model to the choice of $\eta$, we conducted an ablation study on both the Adult and Health datasets. The parameter $\eta$ controls the trade-off between accuracy and fairness in our bilevel optimization framework.

\subsubsection{Experimental Setup}

We varied $\eta$ across a range of values: 1-6000 for Adult dataset, and 1-3000 for Health dataset. For each value of $\eta$, we trained the FairBiNN model on both the Adult and Health datasets, keeping all other hyperparameters constant. We evaluated the models based on accuracy and demographic parity (DP).

\subsubsection{Results}

\begin{figure}
\centering
\begin{subfigure}[h]{0.49\linewidth}
\includegraphics[width=\linewidth]{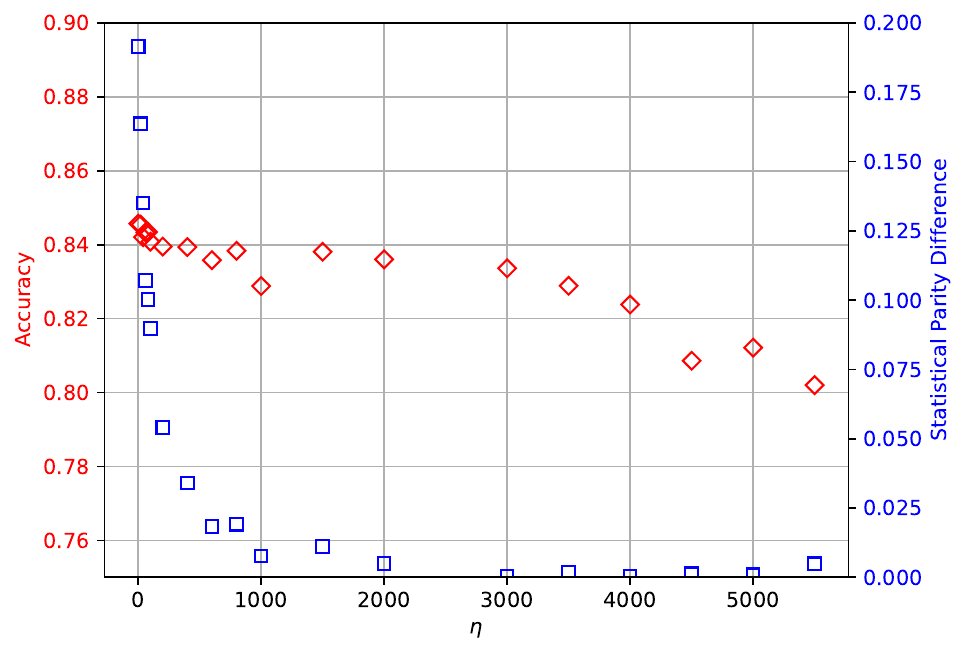}
\caption{$\eta$ ablation study on Adult dataset}
\label{fig:eta_adult}
\end{subfigure}%
\hfill
\begin{subfigure}[h]{0.49\linewidth}
\includegraphics[width=\linewidth]{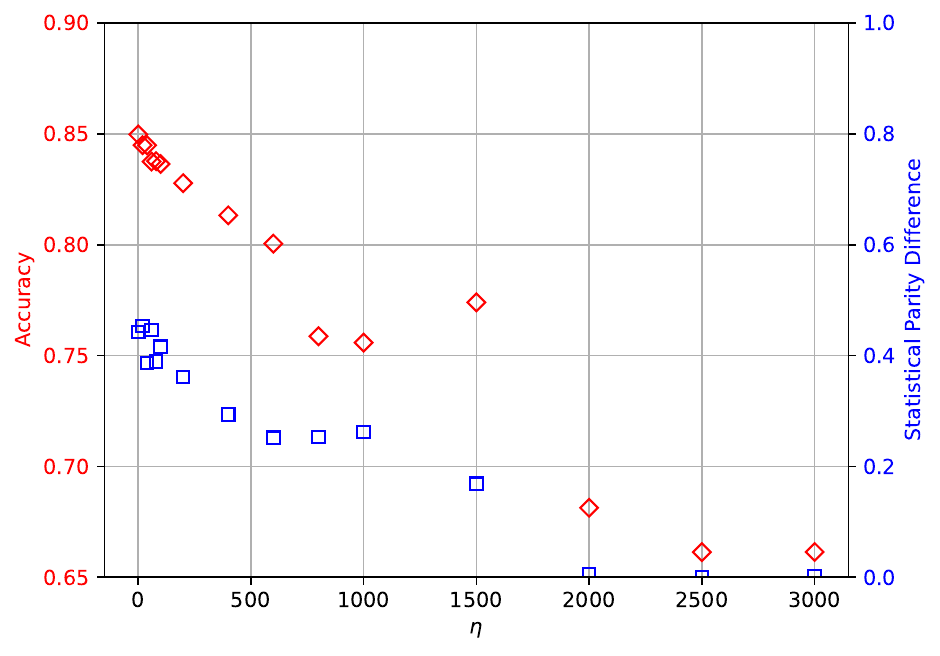}
\caption{$\eta$ ablation study on Health dataset}
\label{fig:eta_health}
\end{subfigure}
\caption{Ablation study on the impact of $\eta$ parameter across two datasets. (a) Results on the Adult dataset showing the effect of different $\eta$ values. (b) Similar analysis conducted on the Health dataset, demonstrating how $\eta$ influences model performance.}
\label{fig:eta_ablation}
\end{figure}

Figures \ref{fig:eta_adult} and \ref{fig:eta_health} show the results of our ablation study for the Adult and Health datasets, respectively.
The results demonstrate a clear trade-off between accuracy and fairness as $\eta$ varies. For both datasets:
As $\eta$ increases, the demographic parity (DP) decreases, indicating improved fairness. However, this improvement in fairness comes at the cost of reduced accuracy. The relationship is not linear; there are diminishing returns in fairness improvement as $\eta$ increases, especially at higher values.
For the Adult dataset, setting $\eta = 1000$ appears to offer a good balance, achieving a DP of 0.012 while maintaining an accuracy of 82.9\%. For the Health dataset, $\eta = 700$ also provides a reasonable trade-off with a DP of 0.23 and an accuracy of 80.2\%.
These results highlight the importance of carefully tuning $\eta$ to achieve the desired balance between accuracy and fairness. The optimal value may vary depending on the specific requirements of the application and the characteristics of the dataset.\\
This ablation study demonstrates that our FairBiNN model provides a flexible framework for managing the accuracy-fairness trade-off through the $\eta$ parameter. Practitioners can adjust $\eta$ based on their specific fairness requirements and acceptable accuracy thresholds. Future work could explore adaptive schemes for setting $\eta$ during training to automatically find an optimal balance.

\paragraph{Position of Fairness Layers}
\label{ablation_pos}
To understand the impact of the position of fairness layers within the network architecture, we conducted an ablation study varying their placement. This study aims to identify the optimal position for fairness layers and provide insights into why certain positions may be more effective.

\textbf{Experimental Setup}

We tested 4 fairness layer positions on the Adult dataset and 5 fairness layer positions on the Health dataset.
For each configuration, we kept the total number of parameters constant to ensure a fair comparison. We evaluated the models based on accuracy and demographic parity (DP).

\paragraph{Results}

\begin{figure}
\centering
\begin{subfigure}[h]{0.49\linewidth}
\includegraphics[width=\linewidth]{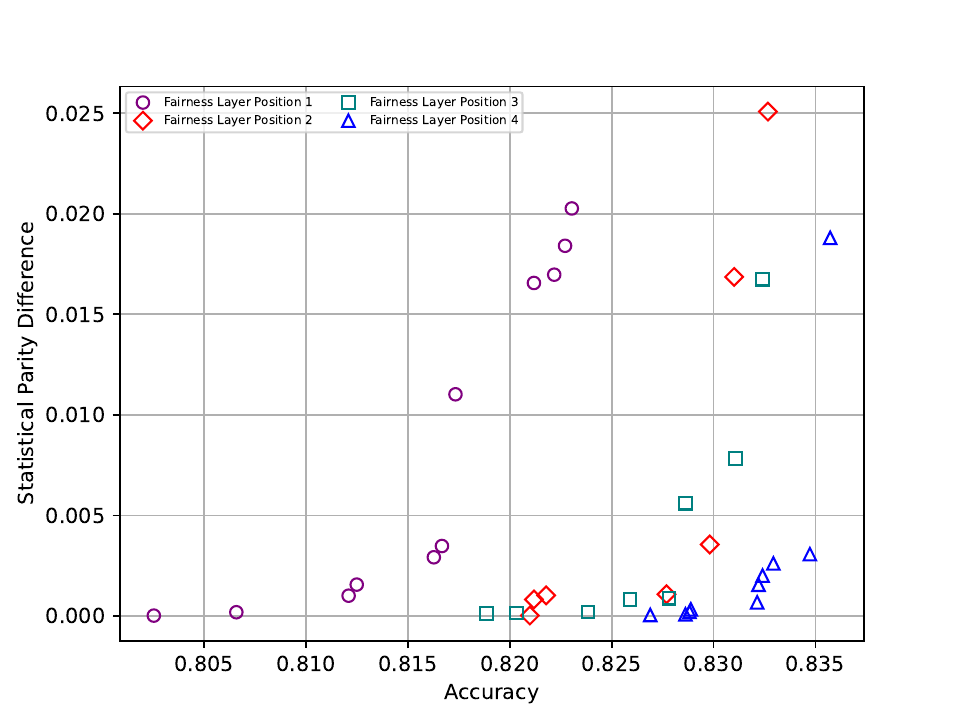}
\caption{Fairness layer position ablation study on Adult dataset}
\label{fig:pos_adult}
\end{subfigure}%
\hfill
\begin{subfigure}[h]{0.49\linewidth}
\includegraphics[width=\linewidth]{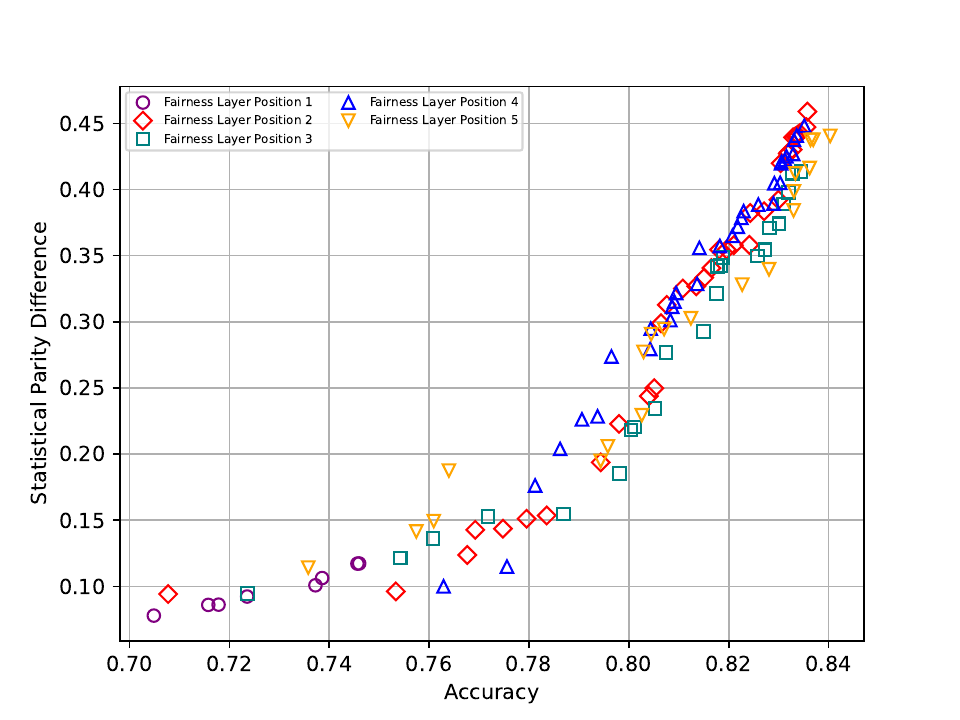}
\caption{Fairness layer position ablation study on Health dataset}
\label{fig:pos_health}
\end{subfigure}
\caption{Fairness Layers Position ($i$), where $i$ indicates the $i-th$ hidden layer}
\label{fig:pos_ablation}
\end{figure}

Figures \ref{fig:pos_adult} and \ref{fig:pos_health} show the results of our ablation study for the Adult and Health datasets, respectively.\\
The results consistently show that placing the fairness layers just before the output layer (in the last hidden layer) yields the best performance in terms of both accuracy and fairness. This configuration achieves the highest accuracy while maintaining the lowest demographic parity on both datasets.

Several factors contribute to the superior performance of fairness layers when placed in the last hidden layer:
\begin{itemize}
    \item \textbf{Rich Feature Representations}: By the time the data reaches the last hidden layer, the network has already learned rich, high-level feature representations. This allows the fairness layers to operate on more informative features, potentially making it easier to identify and mitigate biases.
    \item \textbf{Minimal Information Loss}: Placing fairness layers earlier in the network might lead to loss of important information that could be useful for the classification task. By positioning them at the end, we ensure that all relevant features are preserved throughout most of the network.
    \item \textbf{Direct Influence on Output}: Being closest to the output layer, fairness layers in this position have the most direct influence on the final predictions. This allows for more effective bias mitigation without excessively disturbing the learned representations in earlier layers.
    \item \textbf{Gradient Flow}: In backpropagation, gradients from the fairness objective have a shorter path to travel when the fairness layers are near the output. This might lead to more stable and effective updates for bias mitigation.
    \item \textbf{Adaptability}: Fairness layers at the end of the network can adapt to various biases that might emerge from complex interactions in earlier layers, providing a final "correction" before the output.
\end{itemize}

\subsubsection{Number and Type of Fairness Layers}
\label{ablation_type}
 In this subsection, we perform an ablation study to investigate the effects of different functions for the fairness layers. The fairness layer can be any differentiable function with controllable parameters denoted as $\theta_d$. We experimented with three configurations for the fairness layers: one linear layer, two linear layers, and three linear layers on tabular datasets. The results of the ablation study are summarized in Table \ref{tab:ablation_study}.
\\
For the CelebA dataset, we explored three types of fairness layers: linear layers, Residual Blocks (ResBlocks), and Convolutional Neural Network (CNN) layers. The mean scores of each category of CelebA attributes for each type of fairness layer are provided in Table \ref{tab:ablation_study2}.

The justification for the performance differences between the ResBlock and the fully connected models in our ablation study lies in the proportion of the model occupied by the fairness layers and the specific contributions of these layers to different parts of the network. In particular, there are two primary factors that explain the observed performance differences:

\begin{itemize}
    \item \textbf{Role in the Network}: The ResBlock and the fully connected modules serve different purposes within the network. The ResBlock contributes to the embedding space of the image, which includes feature extraction and representation learning. This enables the model to capture the essential characteristics of the image while minimizing the effect of the protected attributes (e.g., gender) on the classification task. In contrast, the fully connected module is mainly involved in the classification part of the network, where it contributes to the decision-making process based on the features extracted from the previous layers. This distinction in roles explains why the ResBlock provides more fair results, as it directly affects the representation learning and reduces the influence of the protected attributes on the embeddings.
    \item \textbf{Flow of Data}: The flow of data through the ResBlock is different from the flow through the fully connected and CNN modules. ResBlocks have skip connections that allow the input to bypass some layers and directly flow to the subsequent layers. These skip connections help in preserving the original information and preventing the loss of critical features during the network's forward pass. As a result, the ResBlock is more effective in capturing the inherent relationships in the data while mitigating the bias from the protected attributes \cite{resnet}. In contrast, CNNs involve multiple convolution and pooling operations, which can cause the loss of some information relevant to fairness. The fully connected module, with its dense layers, lacks the skip connections present in the ResBlock, which can lead to less effective bias mitigation.
\end{itemize}

In conclusion, our ablation study demonstrates that the choice of layer in the fairness layers can significantly impact the fairness and accuracy of the model. It is essential to strike a balance between fairness and accuracy and to select the appropriate fairness layer for the specific dataset and application at hand.

\begin{table}[t]
	\caption{Area over the curve of statistical demographic parity and accuracy for model ablation}
	\label{tab:ablation_study}
	\begin{center}
		\begin{small}
			\begin{sc}
				\begin{tabular}{lcc}
					\hline
					Method & UCI Adult & Heritage Health \\
					\hline					
					One Linear Layer & \textbf{0.411} & 0.492 \\
					
					Two Linear Layers & 0.404 & 0.513 \\
					
					Three Linear Layers  & 0.349 & \textbf{0.531} \\
					
					\hline
				\end{tabular}
			\end{sc}
		\end{small}
	\end{center}

\end{table}

\begin{table}[h]
	\caption{Accumulative comparison between different fairness layers}
	\label{tab:ablation_study2}
	\centering
	\begin{tabular}{c|c|c|c}
		\toprule
		CNNBlock         & {AP}         &{$\Delta$DP}      &{$\Delta$EO}                \\
		\midrule
		One Linear Layer & \textbf{0.646}  & 0.072            & \textbf{0.084}  \\
		CNN Res Block    & 0.568            & \textbf{0.04}   & 0.126            \\
		CNN Layer        & 0.617            & 0.058            & 0.099            \\
		\bottomrule
	\end{tabular}
\end{table}

\subsection{Ethical \& Broader Social Impact}

This work introduces a novel bilevel optimization framework for multi-objective optimization in neural networks. While we use fairness as a case study, it's important to note that our method is not inherently a fairness research technique, but rather a general optimization approach that can be applied to various secondary objectives.

In the context of our fairness case study, our FairBiNN method shows promising results in optimizing the trade-off between demographic parity (a measure of group fairness) and accuracy. While these results are encouraging, it is crucial to consider the broader ethical implications and potential societal impacts of applying this technique to fairness problems.

On the positive side, when applied to fairness, our approach could help reduce discriminatory outcomes in high-stakes automated decision making systems, promoting more equitable treatment across protected groups in domains like hiring, lending, and healthcare \cite{mehrabi2021survey, zemel2013learning}. By providing a flexible framework to manage the accuracy-fairness trade-off, practitioners can fine-tune models to meet specific fairness requirements mandated by regulations or organizational policies.

However, we must also consider potential negative consequences. There is a risk that mathematical notions of fairness like demographic parity could provide a false sense of ethical assurance, when fairness is a complex social and philosophical concept that cannot be fully captured by simple statistical measures \cite{dwork2012fairness, pessach2020algorithmic}. Our focus on group fairness, while important, does not guarantee individual fairness \cite{yurochkin2019training}.

The mathematical formulation we present, while rigorous, should not be seen as providing absolute ethical guarantees when applied to fairness problems. Over-reliance on our method without careful consideration of the broader context could lead to unintended harms \cite{chuang2021fair, locatello2019fairness}.

As with any machine learning technique, it is the responsibility of practitioners to properly configure and test models for their specific use cases. Our method provides additional tools for optimization but does not absolve practitioners of their ethical obligations. We believe that providing more precise control over trade-offs actually enables more ethical implementations. However, practitioners must be aware that while our approach performs well in our experiments, particularly in balancing demographic parity and accuracy, real-world applications may present unforeseen challenges and edge cases \cite{jiang2024chasing, perrone2021fair}.

Ultimately, while our work provides a useful tool for multi-objective optimization in neural networks, its application to fairness should not be seen as a complete solution to algorithmic bias. Continued interdisciplinary collaboration between computer scientists, ethicists, policymakers and impacted communities is essential to develop AI systems that are truly fair and beneficial to society \cite{bose2019compositional, slack2019fair, yang2020towards}.


\end{document}